\documentclass[final]{article}

\usepackage[nonatbib,final]{neurips_2021}

\usepackage[utf8]{inputenc} 
\usepackage[T1]{fontenc}    
\usepackage{hyperref}       
\usepackage{url}            
\usepackage{booktabs}       
\usepackage{amsfonts}       
\usepackage{nicefrac}       
\usepackage{microtype}      
\usepackage{xcolor}         

\usepackage{preamble}

\title{Lossy Compression for Lossless Prediction 
}

\author{%
  Yann Dubois\\
  Vector Institute \\
  \texttt{yanndubois@cs.toronto.edu} \\
  \And  
  Benjamin Bloem-Reddy\\
  The University of British Columbia \\
  \texttt{benbr@stat.ubc.ca} \\
  \AND
  Karen Ullrich \\
  Facebook AI Research \\
  \texttt{karenu@fb.com} \\
  \And
  Chris J.~Maddison \\
  University of Toronto, Vector Institute \\
  \texttt{cmaddis@cs.toronto.edu} \\
}

\begin{document}
\doparttoc 
\faketableofcontents 

\maketitle

\begin{abstract}
Most data is automatically collected and only ever ``seen'' by algorithms.
Yet, data compressors preserve perceptual fidelity rather than just the information needed by algorithms performing downstream tasks.
In this paper, we characterize the bit-rate required to ensure high performance on all predictive tasks that are invariant under a set of transformations, such as data augmentations.
Based on our theory, we design unsupervised objectives for training neural compressors.
Using these objectives, we train a generic image compressor that achieves substantial rate savings (more than $1000\times$ on ImageNet) compared to JPEG on 8 datasets, without decreasing downstream classification performance.
\end{abstract}

\section{Introduction}
\label{sec:introduction}



Progress in important areas requires processing huge amounts of data. 
For climate prediction, models are still data-limited \cite{rolnick_tackling_2019}, despite the Natl. Center for Computational Sciences storing 32 million gigabytes (GB) of climate data \cite{zgurovsky_big_2020}. For autonomous driving, capturing a realistic range of rare events with current methods requires around 3 trillion GB of data.%
\footnote{\citet{kalra_driving_2016} estimated that autonomous vehicles would have to drive hundreds of billions of miles to demonstrate reliability in rare events. At 1 TB / hour \citep{yeong_sensor_2021} and 30 miles / hour, this is $3\scip{12}$ GB.}
At these scales, data are only processed by task-specific algorithms, and storing data in human-readable formats can be prohibitive.
We need compressors that retain only the information needed for algorithmic execution of downstream tasks.

\begin{wrapfigure}{r}{0.45\textwidth}
\vspace{-\baselineskip}
\centering
\captionsetup[subfigure]{labelformat=empty}

\begin{minipage}{0.45\textwidth}
\centering
\begin{subfigure}[h]{0.32\columnwidth}
 \centering
 \includegraphics[width=\textwidth]{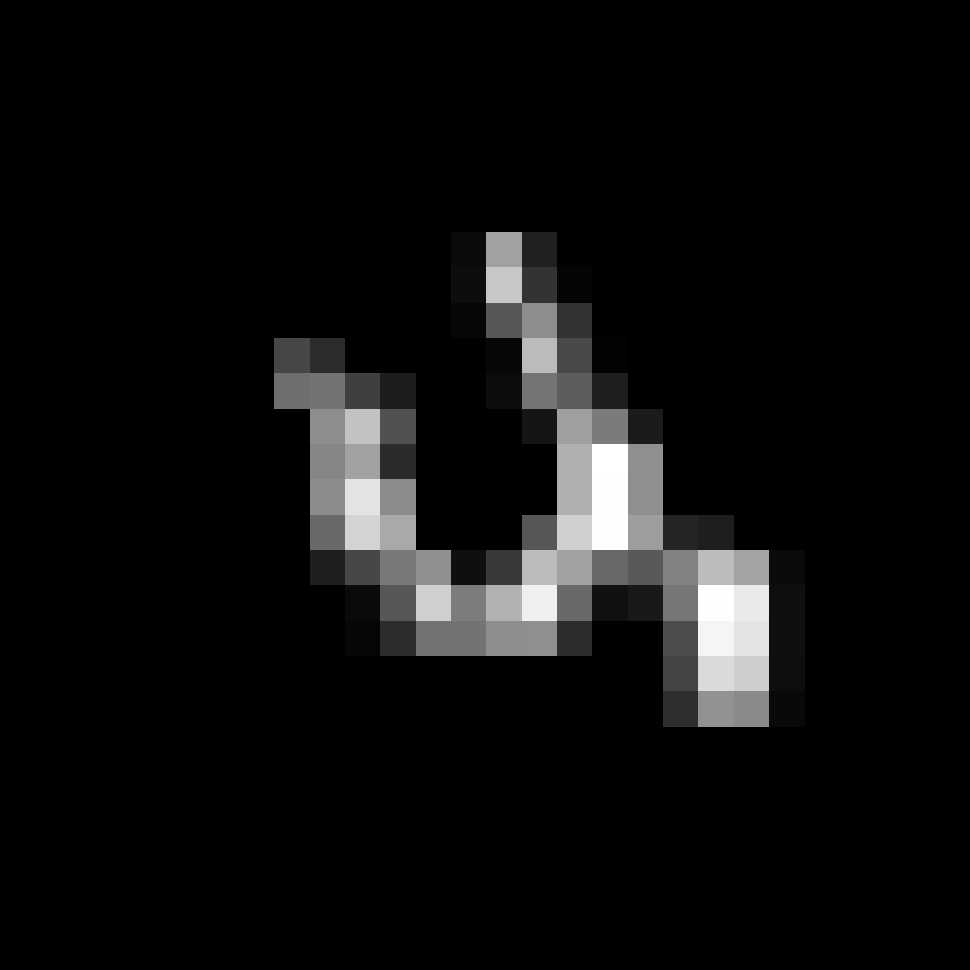}
 \caption{Source}
\end{subfigure}
\begin{subfigure}[h]{0.32\columnwidth}
 \centering
 \includegraphics[width=\textwidth]{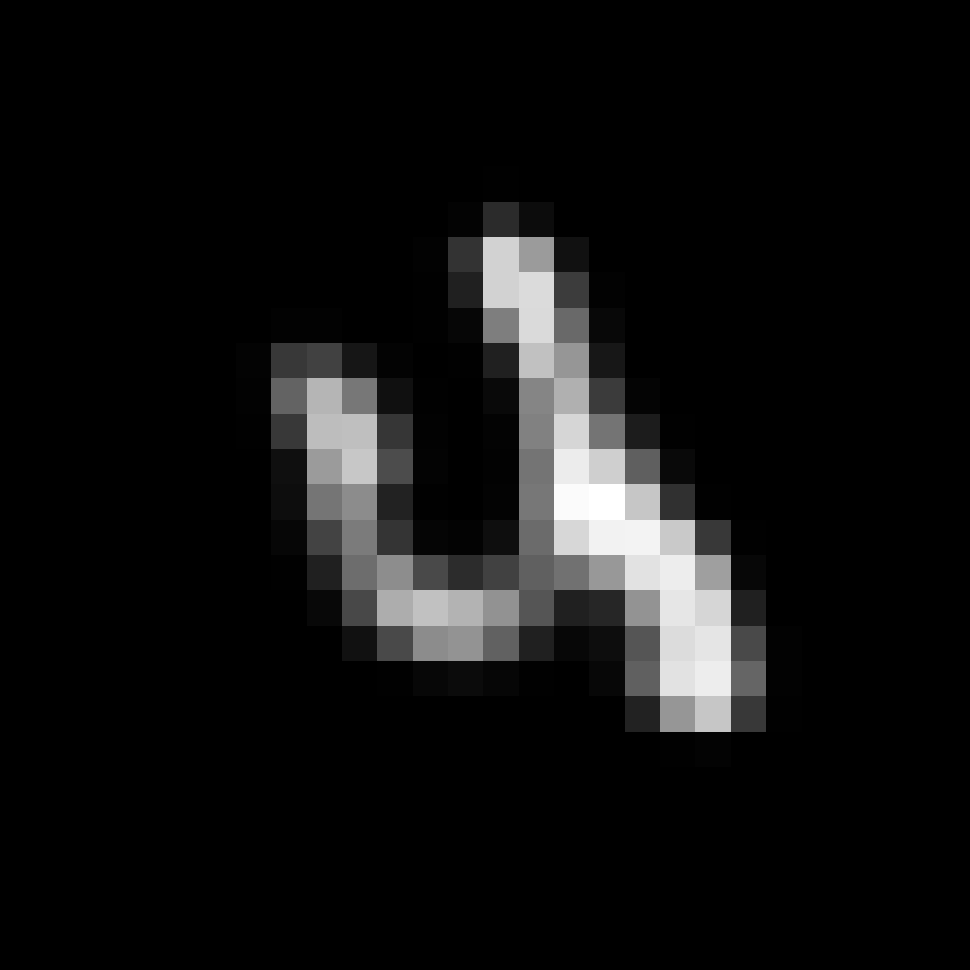}
 \caption{Standard rec.}
\end{subfigure}
\begin{subfigure}[h]{0.32\columnwidth}
 \centering
 \includegraphics[width=\textwidth]{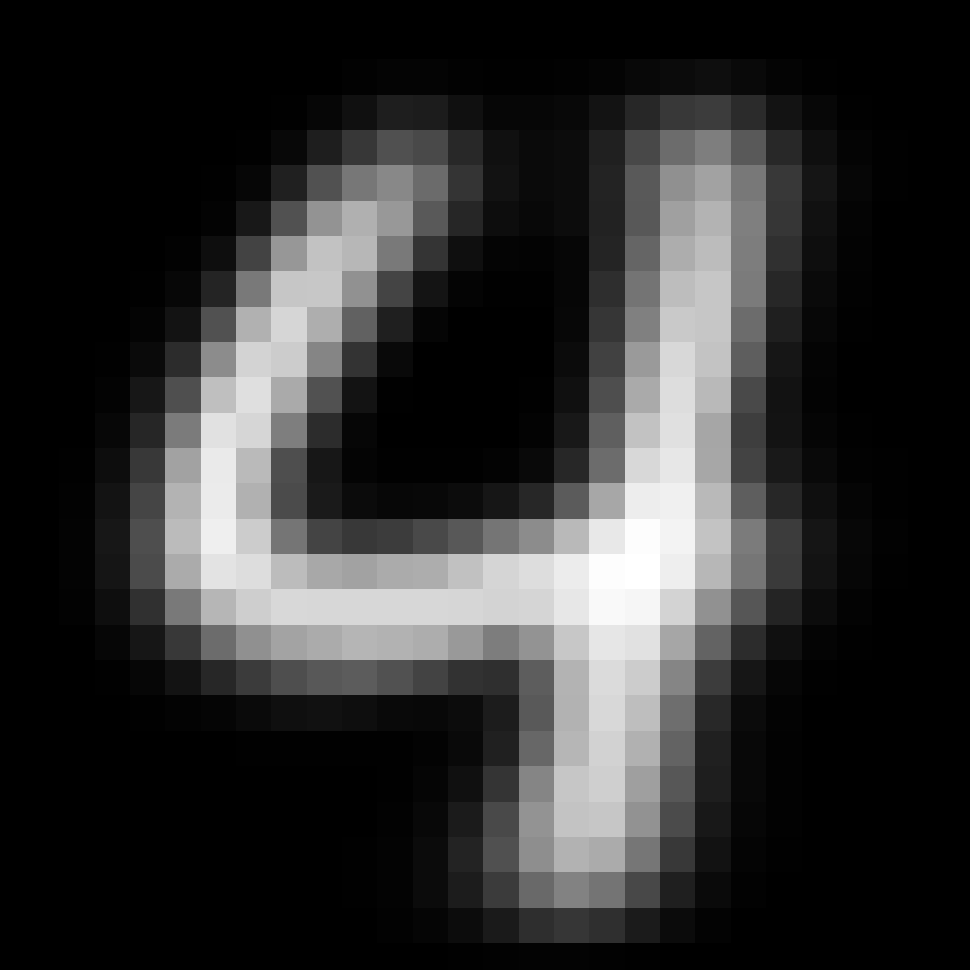}
 \caption{Our rec.}
\end{subfigure}
 \vspace*{-0.5em}
\addtocounter{figure}{-1} 
 \captionof{figure}{
Our unsupervised coder improves compression by only keeping information necessary for typical tasks.
(left) source augmented MNIST digit; (center) a neural perceptual compressor achieves 130 bit-rate; (right)
our invariant compressor achieves 48 bit-rate.
 \label{fig:mnist_intro}
}
\end{minipage}%
\vspace{-2\baselineskip}

\end{wrapfigure}

Existing lossy compressors are not up to the challenge, because they aim to reconstruct the data for human perception \cite{johnston_transform_1988,heeger_model_1995,lee_perceptual_2012,blau_rethinking_2019,pan_digital_1993,mentzer_high-fidelity_2020}. 
However, much of perceptual information is not needed to perform the tasks that we care about.
Consider classifying images, which can require about 1 MB to store. 
Classification is typically invariant under small image transformations, such as rescalings or rotations, and could instead be performed using a representation that discards such information (see \cref{fig:mnist_intro}). 
The amount of unnecessary perceptual information is likely substantial, as illustrated by the fact that typical image classification can be performed using a detailed caption, which requires only about 1 kB to store ($1000\times$ fewer bits).


Our goal is to quantify the bit-rate needed to ensure high performance on a collection of prediction tasks.
In the simple case of a \emph{single} supervised task, the minimum bit-rate is achieved by compressing predicted labels, and essentially corresponds to the Information Bottleneck (IB; \cite{tishby_information_2000}).
Our challenge, instead, is to ensure good performance on \emph{any} future tasks of interest, which will rarely be completely known at compression time, or might be too large to enumerate.

We overcome this challenge by focusing on sets of tasks that are \textit{invariant} under user-defined transformations (\eg, translation, brightness, cropping), as is the case for many tasks of interest to humans \cite{heaton_ian_2018,shorten_survey_2019}.
This structure allows us to characterize a worst-case invariant task, which bounds the relative predictive performance on all invariant tasks.
As a result, the bit-rate required to perform well on \textit{all} invariant tasks is exactly the rate to compress the worst-case labels.
At a high level, the worst-case task is to recognize which examples are transformed versions of one another, and rate savings come from discarding information from those transformations. 

We also provide two unsupervised neural compressors to target the optimal rates.
One is similar to a variational autoencoder \cite{kingma_auto-encoding_2014} that reconstructs canonical examples (\cref{fig:mnist_intro}).
Our second is a simple modification of contrastive self-supervised learning (SSL; \cite{oord_representation_2019}), which allows us to convert pre-trained SSL models into powerful, generic compressors.
Our contributions are:
\begin{itemize}[noitemsep,leftmargin=*]
\item We formalize the notion of compression for downstream predictive tasks.
\item We characterize the bits needed for high performance on any task invariant to augmentations.
\item We provide unsupervised objectives to train compressors that approximate the optimal rates.
\item We show that our compressor outperforms JPEG by orders of magnitude on 8 datasets on which it was never trained (\ie, zero-shot).
E.g., on ImageNet \cite{deng_imagenet_2009}, it decreases the bit-rate by $1000\times$.
\end{itemize}

\cmnote{we need to hammer home the idea that all human goals are specified in language -> every thing we could possibly be itnerested in predicting can likely be compiled to a detailed caption -> massive compression gains}
\bbnote{For classification this might be true, but not for more sophisticated tasks. Climate prediction?}

\section{Rate-distortion theory background}
\label{sec:background}

The goal of lossy compression theory is to find the number of bits (\emph{bit-rate}) required to store outcomes $x$ of a random variable (r.v.) $\rv X$, so that it can be reconstructed within a certain tolerance.
This is accomplished in \citepos{shannon_coding_1959} rate-distortion (RD) theory by mapping $\rv X$ into a r.v.\ $\rv Z$ with low mutual information $\op{I}{\rv X; \rv Z}$.
Specifically, given a distortion measure $\op{D}{\rv X, \rv Z}$, the RD theory characterizes the minimal achievable bit-rate for a distortion threshold $\delta$ by
\begin{equation}\label{eq:rate_distortion}
Rate(\delta) = \min_{p(\rv Z| \rv X)}   \op{I}{\rv X; \rv Z} \quad \text{such that} \quad  \op{D}{\rv X, \rv Z} \leq \delta \,.
\end{equation}
%
In lossy compression, $\rv Z$ is usually a reconstruction of $\rv X$, \ie,  it aims to faithfully approximate $\rv X$.
As a result, typical distortions, \eg, the mean squared error (MSE), assume that the sample spaces $\mathcal{X},\mathcal{Z}$ of both r.v.s are the same.
This assumption is not required.
Indeed, any distortion $d: \mathcal{X} \times \mathcal{Z} \to \R_{\geq 0}$ of the form $\op{D}{\rv X, \rv Z} = \E{p(\rv X, \rv Z)}{d(\rv X, \rv Z)}$, where there exists a $z \in \rv Z$ such that $\op{D}{\rv X, z}$ is finite, is a valid choice \citep{berger_rate_1968}.
This shows that RD theory can be used outside of reconstructions.
In the following we refer to $\rv Z$ as a compressed \textit{representation} of $\rv X$ to distinguish it from a reconstruction.

\section{Minimal bit-rate for high predictive performance}
\label{sec:theory}

In this section, we characterize the bit-rate needed to represent $X$ to ensure high performance on downstream tasks.
Our argument has three high-level steps:
\begin{inlinelist}
    \item define a distortion that controls downstream performance when predicting from $\rv Z$ instead of $\rv X$;
    \item simplify and validate this distortion when desired tasks satisfy an invariance condition;
    \item apply RD theory with the valid distortion.
\end{inlinelist}
For simplicity, our presentation is relatively informal; formal proofs are in \cref{appx:preliminaries,appx:proofs}.

\subsection{A distortion for worst-case predictive performance}

Suppose $X$ is an image. Potential downstream tasks might include $Y_{\text{dog}}$, whether the image displays a dog; or $Y_{\text{hd}}$, whether the image is hand-drawn. Formally, these and other downstream tasks are expressed as $\tasks = \set{\rv Y_{\text{dog}}, \rv Y_{\text{hd}}, \ldots}$, a set of random variables that are jointly distributed with $X$. 
Let $\Risk{Y}{X}$ denote the Bayes (best possible) risk when predicting $\rv Y$ from $\rv X$.
For ease of presentation in the main paper, we consider only classification tasks $\tasks$ and Bayes risk of the standard log loss $\Risk{Y}{X} \defeq \inf_q \E{ p(\rv X, \rv Y) }{-\log q(\rv Y | \rv X)}$.
We deal with MSE and regression in \cref{appx:theorem_mse}.

In this setting, a meaningful distortion $\distst{}$ quantifies the difference between predicting any $Y \in \tasks$ from the compressed $Z$, as opposed to using $\rv X$. This is the worst-case excess risk,
\begin{equation}\label{eqn:distortion_def}
\distst{} \defeq  \sup_{\rv Y \in \tasks{}} \quad  \Risk{Y}{Z} - \Risk{Y}{X}.
\end{equation}
If $\distst{}=0$, it is possible to achieve \textit{lossless prediction}: performing as well using $\rv Z$ as using $\rv X$.
More generally, bounding \disttext{} by $\delta$ ensures that $\Risk{Y}{Z} - \Risk{Y}{X} \leq \delta$ for all tasks in $\tasks$. 
However, there are two issues that need to be addressed before \cref{eqn:distortion_def} can be used. 
First, it is not clear whether \disttext{} is a valid distortion for RD theory.
Second, the worst excess-risk \disttext{} assumes access to all downstream tasks of interest $\tasks{}$ during compression, which is unrealistic in general. 

\subsection{Invariant tasks}

The tasks that we care about are not arbitrary, and often share structure. 
One such structure is invariance to certain pre-specified transformations of input data. For example, computer vision tasks are often invariant to mild transformations such as brightness changes.
Such invariance structure is common in realistic tasks, as seen by the wide-spread use  of data augmentations \cite{shorten_survey_2019} in machine learning (ML), which encourage predictions to be the same for an unaugmented $x$ and an augmented $x^+$.
Motivated by this we focus on sets of invariant tasks  $\tasks{}$.
\begin{figure}[t]
\centering
 \begin{subfigure}{0.30\linewidth}
\includegraphics[width=\linewidth]{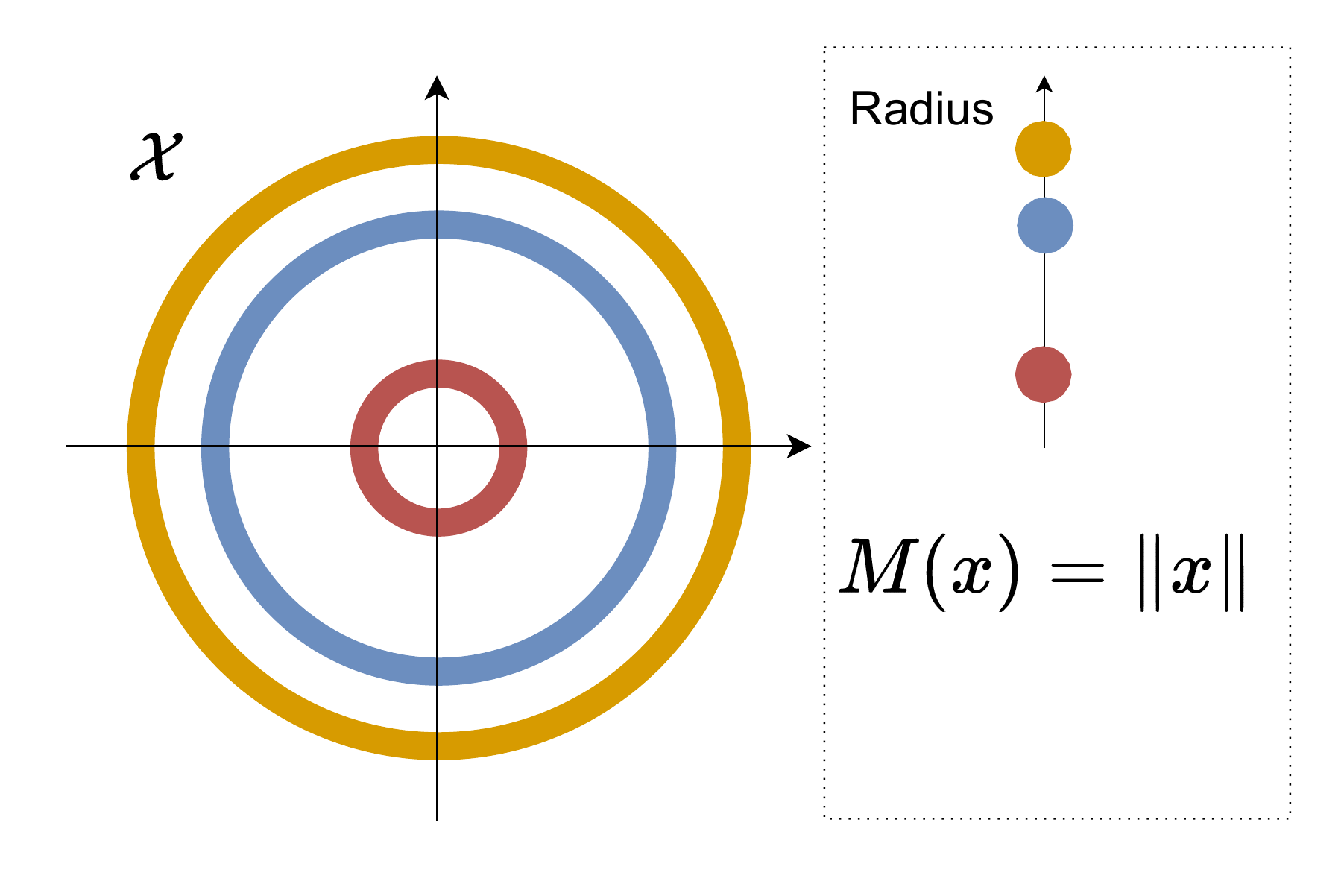}
\vspace*{-1.5em}
  \caption{Rotation}
 \label{fig:Mx_rot}
 \end{subfigure}
 \quad
  \begin{subfigure}{0.30\linewidth}
\includegraphics[width=\linewidth]{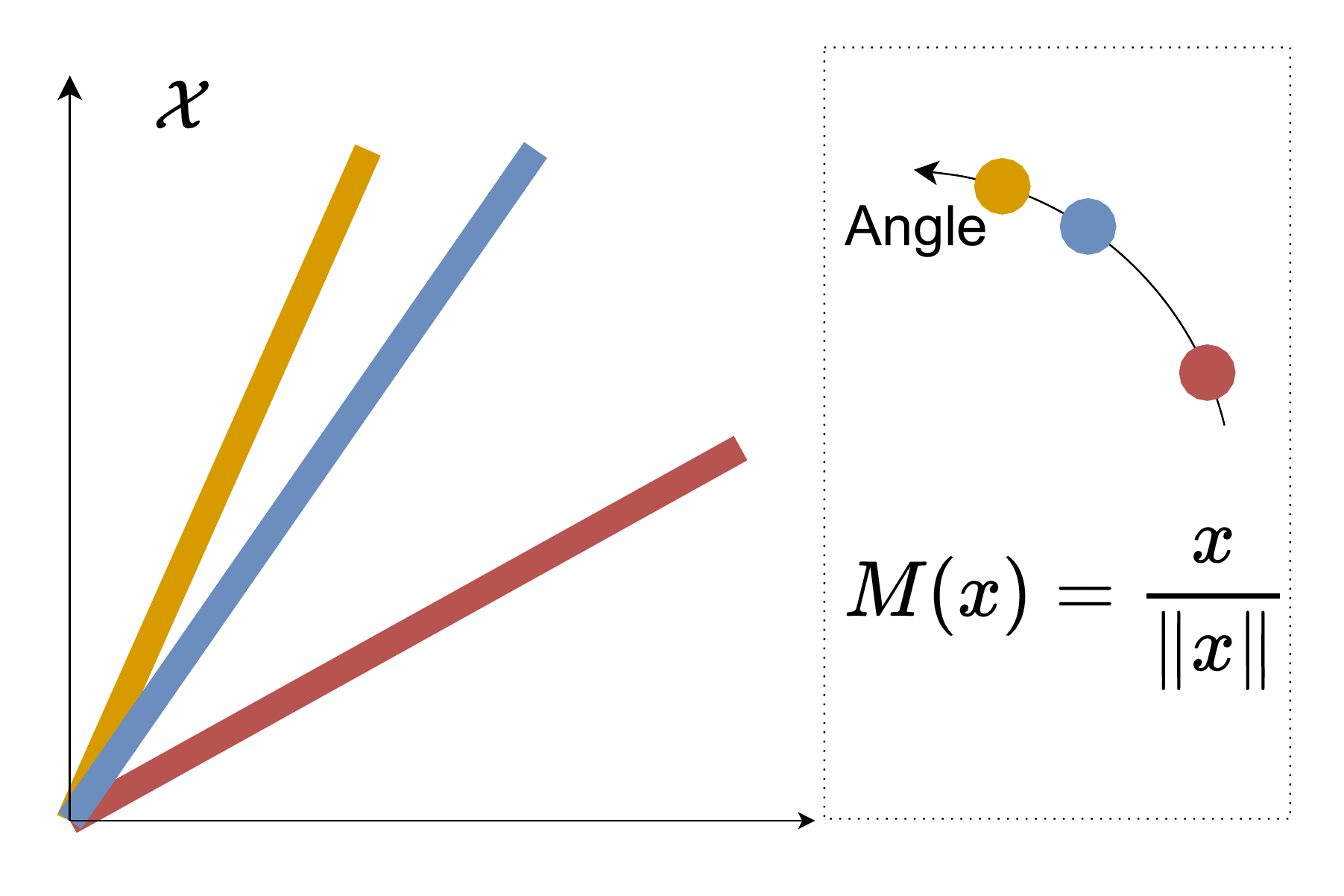}
\vspace*{-1.5em}
  \caption{Scaling}
  \label{fig:Mx_scaling}
 \end{subfigure}
  %
 \quad
 \begin{subfigure}{0.30\linewidth}
\includegraphics[width=\linewidth]{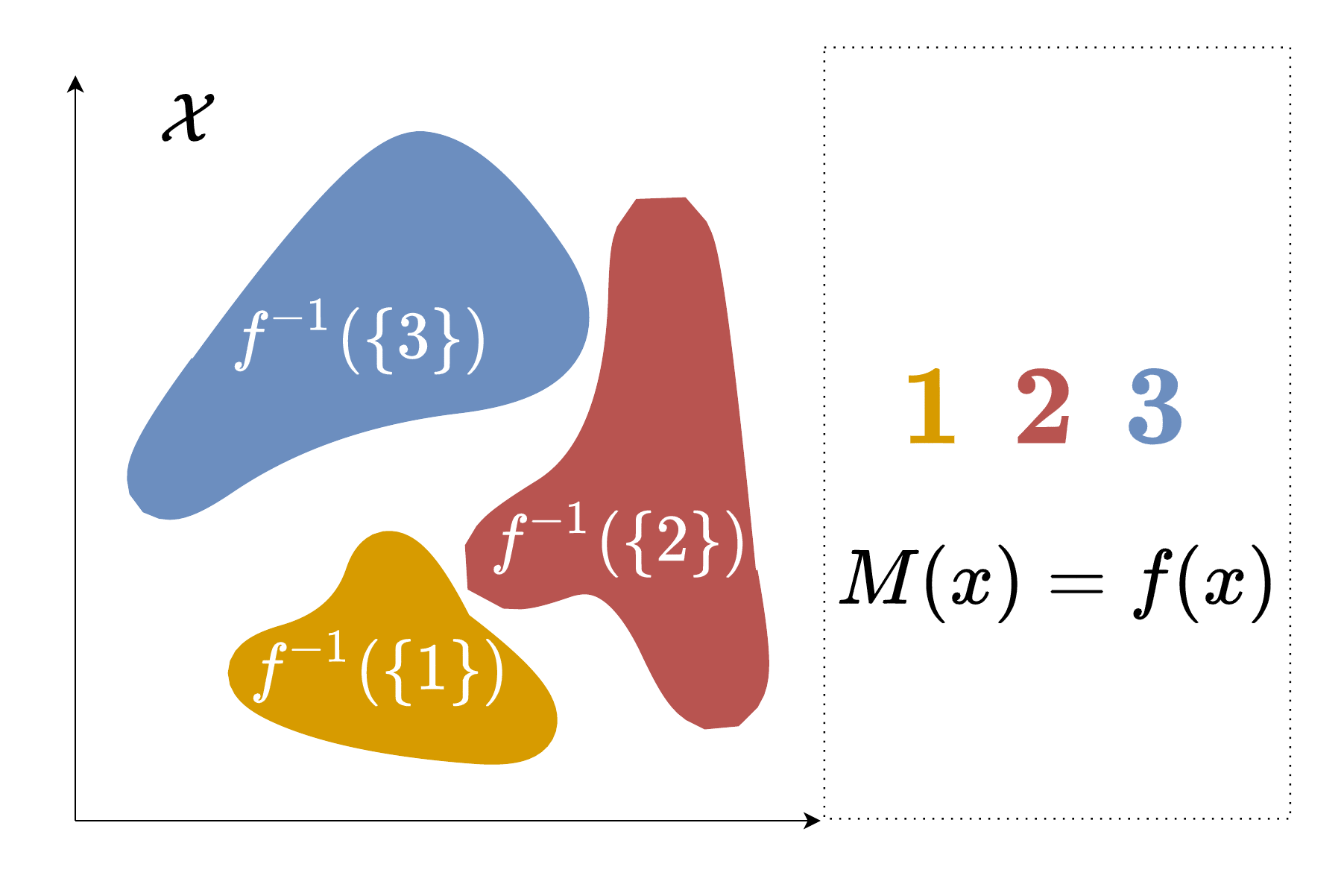}
\vspace*{-1.5em}
  \caption{Any transformation $f$}
  \label{fig:Mx_f}
 \end{subfigure}
 \hfill
  \begin{subfigure}{0.30\linewidth}
\includegraphics[width=\linewidth]{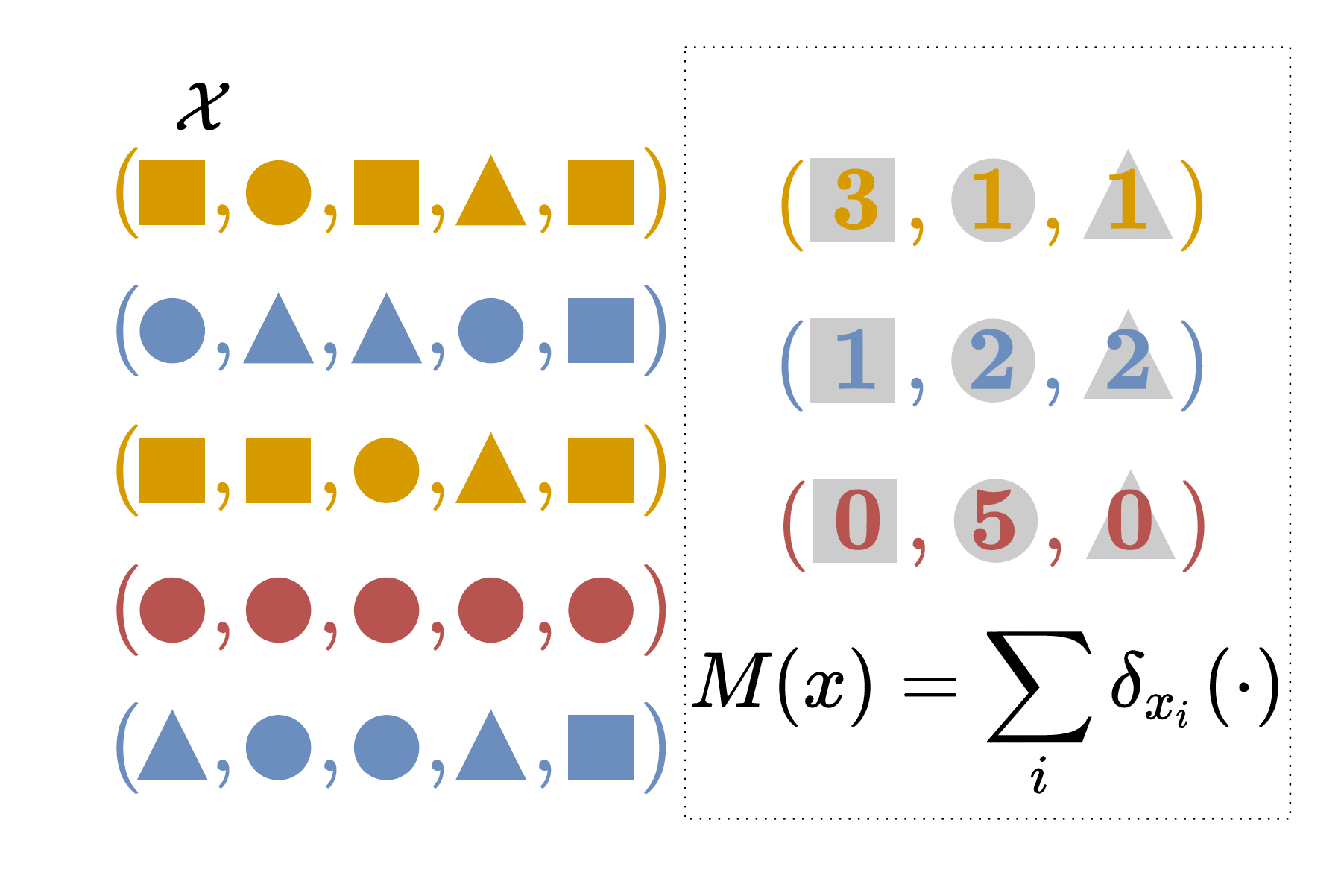}
\vspace*{-1.5em}
  \caption{Permutation}
 \label{fig:Mx_perm}
 \end{subfigure}
 \quad
   \begin{subfigure}{0.30\linewidth}
\includegraphics[width=\linewidth]{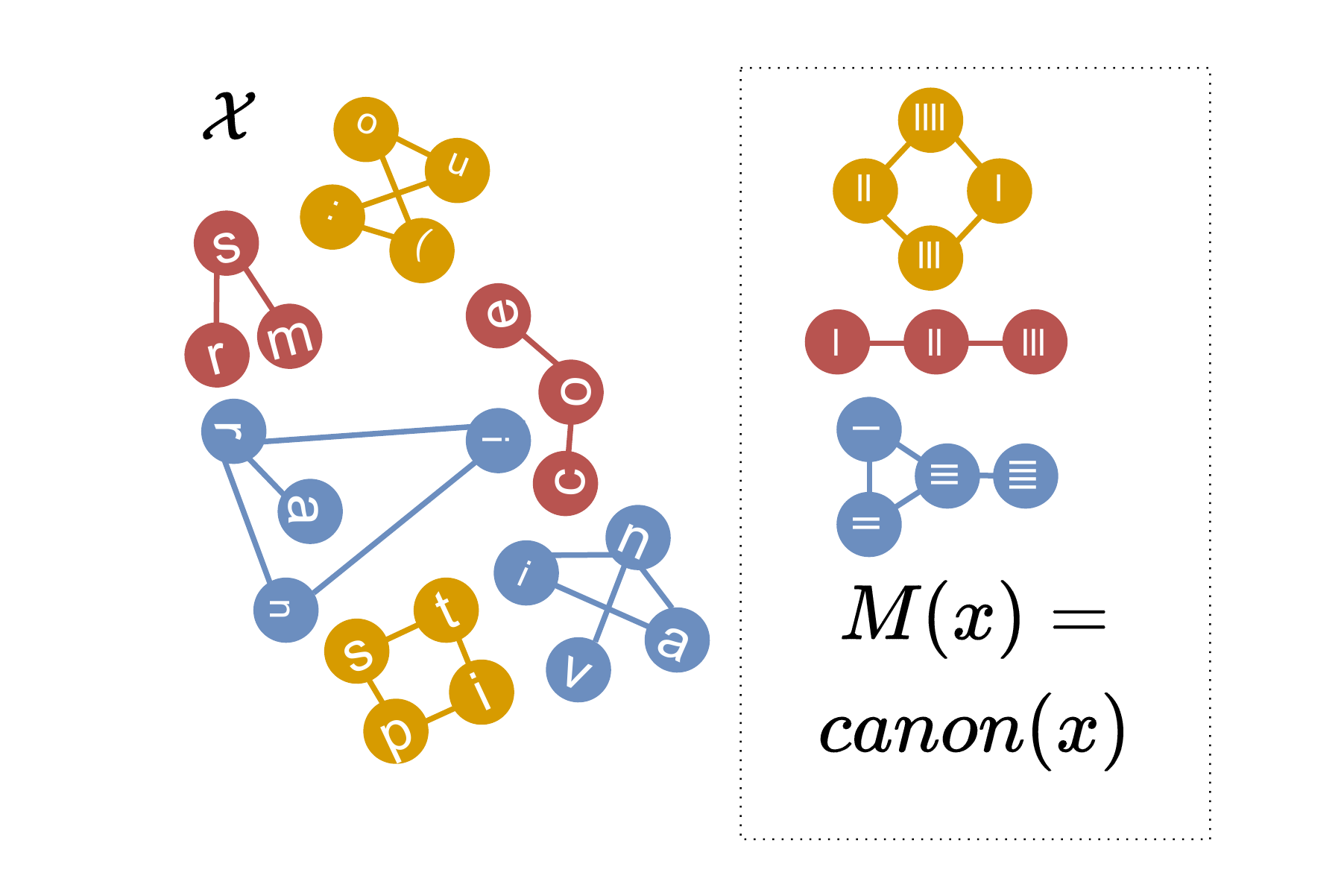}
\vspace*{-1.5em}
  \caption{Graph isomorphism }
  \label{fig:Mx_graph}
  \end{subfigure}
 \quad
   \begin{subfigure}{0.30\linewidth}
\includegraphics[width=\linewidth]{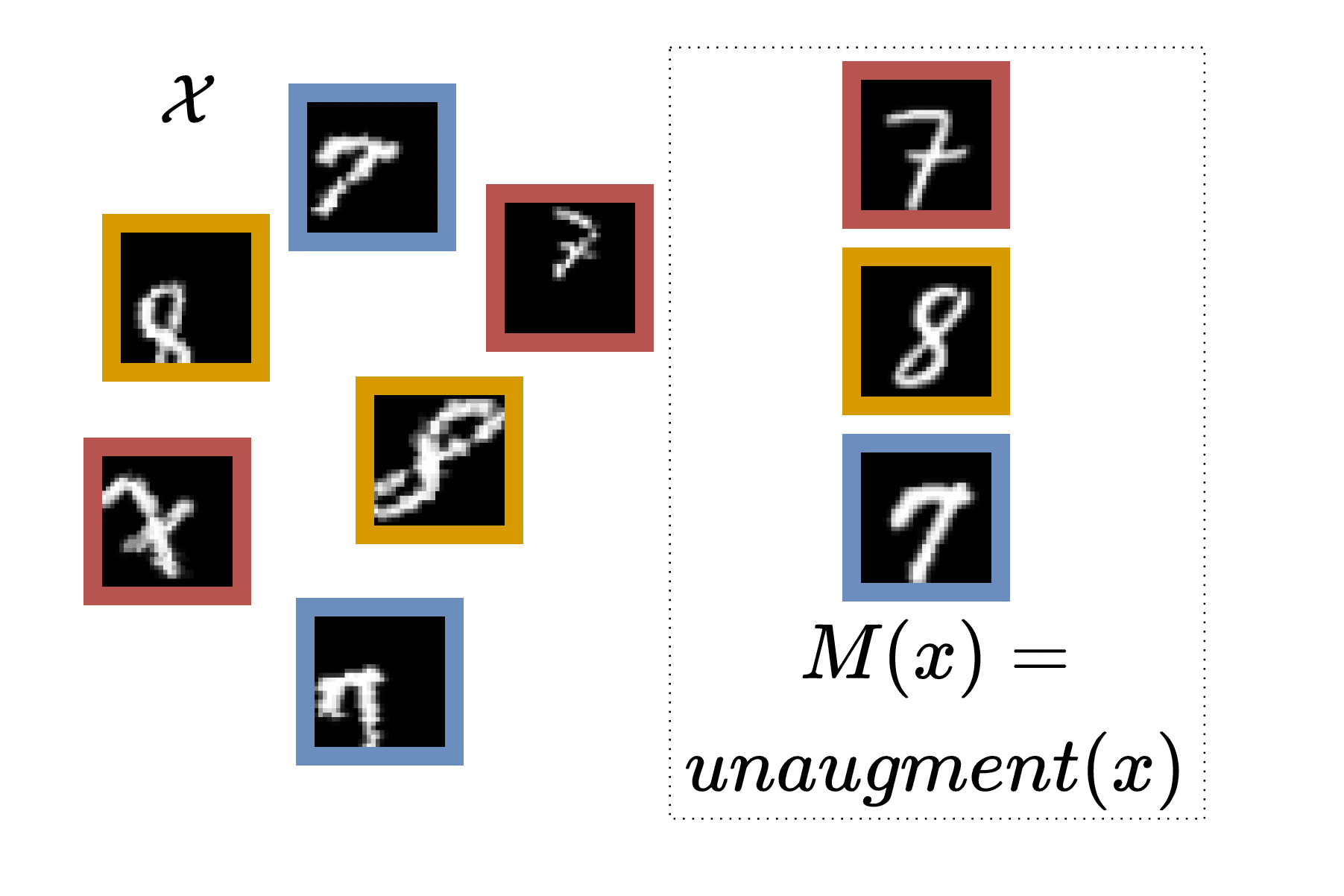}
\vspace*{-1.5em}
  \caption{Any data augmentation}
  \label{fig:Mx_aug}
  \end{subfigure}
\caption{Maximal invariants $M(X)$ are representatives of equivalence classes.
Example $M$s include the: (a) Euclidean norm for rotations; (b)  unit vector for scaling; (c) $f$ when equivalence classes are pre-images by $f$; (d) empirical measure for permutations; (e) canonical graph for graph isomorphisms; (f) unaugmented input for data augmentations.}
  \label{fig:max_inv}
\end{figure}

We consider a general notion of invariance, namely invariance specified by an equivalence relation $\sim$ on $\mathcal{X}$.%
\footnote{As a reminder, $\sim$ is an equivalence relation iff for all $x,x',x'' \in \mathcal{X}$: (reflexivity) $x \sim x$, (symmetry) $x \sim x' \iff x' \sim x$, and (transitivity) $x \sim x'$ and $x' \sim x''  \implies   x \sim x''$.
}
 The equivalence induces a partition of $\calX$ into disjoint \textit{equivalence classes}, and we are interested in  tasks whose conditional distributions are constant within these classes.
\begin{definition} \label{def:invariant_tasks_interest:main}
The set of \textit{invariant tasks of interest} with respect to an  equivalence $(\mathcal X, \sim)$, denoted $\tasksinv{}$, is all random variables $\rv Y$ such that $x \sim x^+ \implies p(\rv Y \cond x ) = p(\rv Y \cond x^+)$ for any $x,x^+ \in \mathcal{X}$. 
\end{definition}

\subsection{Rate-distortion theory for invariant task prediction} 

The key to simplifying \disttextinv{} is the existence of a (non-unique) worst-case invariant task, denoted $M(X)$.
Such task contains all and only information to which tasks $Y\in\tasksinv$ are not invariant; we call them \textit{maximal invariants}.
A maximal invariant $M(\argdot)$ with respect to $\sim$ is any function satisfying%
\footnote{
This extends the definition of maximal invariants \cite{eaton_group_1989} beyond invariances to group actions.
}
\begin{equation}\label{eqn:max_invariant}
x \sim x^+ \iff M(x) = M(x^+) \quad \text{for any  $x,x^+ \in \mathcal{X}$.}
\end{equation}
A maximal invariant removes all information that tasks are invariant to, as it maps equivalent
inputs to the same output, \ie, $M(x)=M(x^+)$.
Yet, it retains the minimal information needed to perform invariant tasks, by mapping non-equivalent inputs $x \not \sim x^-$ to different outputs $M(x) \neq M(x^-)$.
In other words, $M(x)$ indexes the equivalence classes. 
For example, the Euclidean norm is a maximal invariant for rotation invariance, as all vectors that are rotated versions of one another can be characterized by their radial coordinate. 
For data augmentations, the canonical (unaugmented) version of the input is a maximal invariant. Other examples are shown in  \cref{fig:max_inv}.

We prove in \cref{appx:invariant_distortion} that under weak regularity conditions, maximal invariant tasks exist in $\tasksinv{}$, and that they achieve the supremum in \cref{eqn:distortion_def}.
This allows us to show that \disttextinv{} reduces to the Bayes risk of predicting $M(X)$ from $\rv Z$ and that it is a valid distortion measure. Crucially, this allows us to quantify downstream performance without enumerating invariant tasks.
%
\begin{restatable}
{proposition}{distrisk}
\label{prop:nicer_dist}
Let $(\calX,\sim)$ be an equivalence relation and $M$ a maximal invariant that takes at most countably many values, with $\H{\rv M(X)} < \infty$.
Then \disttextinv{} \eqref{eqn:distortion_def} with log loss is a valid distortion and
\begin{equation}\label{eqn:max_invariance_distortion}
\diststinv{}{} = \Risk{M(\rv X)}{Z} \;.
\end{equation}
\end{restatable}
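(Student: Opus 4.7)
The plan is to (i) show that $M(X)$ itself lies in $\tasksinv{}$ and computes the supremum in \eqref{eqn:distortion_def}, (ii) simplify using the fact that $M$ is a deterministic function of $X$, and (iii) separately check the distortion-axiom validity.

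First I would argue that $M(X) \in \tasksinv{}$: since $M$ is a (measurable) function of $X$, the conditional $p(M(X)\mid x)$ is a point mass at $M(x)$, and $x \sim x^+ \Rightarrow M(x)=M(x^+)$ yields $p(M(X)\mid x)=p(M(X)\mid x^+)$, matching \cref{def:invariant_tasks_interest:main}. Under log loss, Bayes risk coincides with conditional entropy, $\Risk{Y}{X}=\H{\rv Y\mid \rv X}$, so in particular $\Risk{M(X)}{X}=\H{M(X)\mid X}=0$. Hence the candidate value of the supremum evaluated at $Y=M(X)$ is exactly $\Risk{M(X)}{Z}$, matching the right-hand side of \eqref{eqn:max_invariance_distortion}.

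The heart of the argument is showing this candidate is in fact the supremum, i.e. for every $\rv Y\in\tasksinv{}$,
\begin{equation*}
\H{\rv Y\mid \rv Z}-\H{\rv Y\mid \rv X}\;\le\;\H{M(\rv X)\mid \rv Z}.
\end{equation*}
The key observation is that invariance of $\rv Y$ means $p(\rv Y\mid x)$ depends on $x$ only through $M(x)$, which gives the conditional independence $\rv Y\perp \rv X\mid M(\rv X)$, and hence $\H{\rv Y\mid \rv X}=\H{\rv Y\mid M(\rv X)}$. Combined with the Markov relation $\rv Y\leftarrow \rv X\to \rv Z$ built into the RD setup, I get the chain $\rv Y\to M(\rv X)\to \rv Z$. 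Then the standard expansion
\begin{equation*}
\H{\rv Y\mid \rv Z}\;\le\;\H{\rv Y,M(\rv X)\mid \rv Z}\;=\;\H{M(\rv X)\mid \rv Z}+\H{\rv Y\mid M(\rv X),\rv Z}\;=\;\H{M(\rv X)\mid \rv Z}+\H{\rv Y\mid M(\rv X)},
\end{equation*}
using $\rv Y\perp \rv Z\mid M(\rv X)$ in the last equality, rearranges to exactly the required bound. Equality is attained at $\rv Y=M(\rv X)$, so the supremum equals $\Risk{M(X)}{Z}$.

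Finally, for validity as an RD distortion I would exhibit a nonnegative pointwise distortion $d$ whose expectation under $p(\rv X,\rv Z)$ is $\diststinv{}{}$ and verify the finiteness condition. The natural choice is $d(x,z)\defeq -\log p(M(x)\mid z)\ge 0$, which gives $\E{}{d(\rv X,\rv Z)}=\H{M(\rv X)\mid \rv Z}$. For the constant-$z$ requirement, take $\rv Z$ independent of $\rv X$ (so that $p(M(\rv X)\mid z)=p(M(\rv X))$); the resulting distortion is $\H{M(\rv X)}$, which is finite by the hypothesis $\H{M(X)}<\infty$. The main obstacle I anticipate is not the inequality above—once the Markov chain $\rv Y\to M(\rv X)\to \rv Z$ is in hand it is routine—but rather the measure-theoretic justification that an invariant $\rv Y$ indeed factors as $p(\rv Y\mid x)=p(\rv Y\mid M(x))$ and that $M(X)$ genuinely attains the supremum (not merely as a limit); the countability assumption on $M$ is what makes these steps clean, and I would lean on it to avoid regular-conditional-probability technicalities, deferring the general case to the appendix.
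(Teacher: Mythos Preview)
Your proposal is correct and follows essentially the same route as the paper's proof. Both rely on the same ingredients: $M(X)\in\tasksinv{}$, the conditional independences $Y\perp X\mid M(X)$ (from invariance) and $Y\perp Z\mid X$ (from the representation assumption), discreteness of $M(X)$ so that $\H{M(X)\mid Y,Z}\ge 0$, and the pointwise distortion $d(x,z)=-\log p(M(x)\mid z)$ for validity. The only cosmetic difference is that the paper first rewrites the excess risk \emph{exactly} as the conditional mutual information $\op{I}{Y;M(X)\mid Z}$ and then invokes its symmetry to obtain $\H{M(X)\mid Z}-\H{M(X)\mid Y,Z}$, whereas you reach the same inequality directly via the chain-rule expansion of $\H{Y,M(X)\mid Z}$ together with $Y\perp Z\mid M(X)$; these are the same computation in a different order.
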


Here we used $\Risk{M(X)}{X}=0$, as $M$ is a deterministic function.  
Also, note that the countable requirement holds when tasks are invariant to some rounding of the input, as is typically the case due to floating-point storage.
We accommodate the uncountable case for squared-error loss in \cref{appx:theorem_mse}.

With a valid distortion in hand, we invoke the RD theorem with \disttextinv{} to obtain our ``Rate-Invariance'' (RI) theorem.
The RI theorem characterizes the bit-rate needed to store $X$ while ensuring small log-loss on invariant tasks.
We obtain analogous results for squared-error loss.
\begin{restatable}[Rate-Invariance]{theorem}{rateinvdist}\label{thm:rate_invariance_distortion}
Assume the conditions of \cref{prop:nicer_dist}. 
Let $\delta \geq 0$, and $Rate(\delta)$ denote the minimum achievable bit-rate for transmitting $\rv Z$ such that for any $\rv Y \in \tasksinv$ we have $\Risk{Y}{Z} - \Risk{Y}{X} \leq \delta$.
Then $Rate(\delta)=0$ if $\delta \geq \op{H}{M(\rv X)}$ and otherwise it is finite and
\begin{align}\label{eq:rate_inv}
Rate(\delta) 
\quad = \quad
\underbrace{\ \vphantom{\sum}\H{M(X)} }_{\substack{\text{information needed} \\ \text{to predict } \tasksinv{}}}
\quad
\underbrace{-\ \vphantom{\sum}\delta }_{\substack{\text{acceptable decrease} \\ \text{in predictive loss}}}
\quad = \quad
\underbrace{\vphantom{\sum} \op{H}{\rv X} }_{\substack{\text{standard} \\ \text{compression}}}
\quad
\underbrace{-\ \vphantom{\sum} \op{H}{\rv X \cond M(\rv X)} }_{\substack{\text{gains due to} \\ \text{invarainces}}}
\quad
-\ \delta.
\end{align}
\end{restatable}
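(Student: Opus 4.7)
The plan is to apply the classical rate-distortion theorem of \cref{sec:background} with the worst-case excess risk $\distst{\tasksinv}$ as the distortion measure. By \cref{prop:nicer_dist}, $\distst{\tasksinv}$ is valid and satisfies $\diststinv{}{} = \Risk{M(X)}{Z}$, so the constraint ``$\Risk{Y}{Z} - \Risk{Y}{X} \leq \delta$ for every $\rv Y \in \tasksinv$'' collapses to the single inequality $\Risk{M(X)}{Z} \leq \delta$. Shannon's RD theorem then identifies the operational quantity $Rate(\delta)$ with $\min_{p(Z \mid X)} \op{I}{X; Z}$ subject to this constraint. Because log-loss is a strictly proper scoring rule, its Bayes risk equals the conditional entropy, $\Risk{M(X)}{Z} = \H{M(X) \mid Z}$ (the infimum is attained by $q = p(M(X) \mid Z)$), so the problem reduces to minimizing $\op{I}{X; Z}$ subject to $\H{M(X) \mid Z} \leq \delta$.

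For the lower bound, I would use that $M$ is a deterministic function of $X$, so the data-processing inequality gives $\op{I}{X; Z} \geq \op{I}{M(X); Z} = \H{M(X)} - \H{M(X) \mid Z} \geq \H{M(X)} - \delta$, whenever $\delta < \H{M(X)}$. For the matching upper bound, I would construct an explicit erasure channel: let $E$ be a Bernoulli random variable with parameter $\delta / \H{M(X)}$, independent of $X$, and set $Z = M(X)$ when $E = 0$ and $Z = \star$ (a fresh erasure symbol) otherwise. Since $Z$ depends on $X$ only through $M(X)$, the chain rule yields $\op{I}{X; Z} = \op{I}{M(X); Z}$, and a direct computation gives $\op{I}{M(X); Z} = \H{M(X)} - \delta$ together with $\H{M(X) \mid Z} = \delta$. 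The countability of the range of $M$ together with $\H{M(X)} < \infty$ is precisely what makes this construction and these identities well-defined.

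The degenerate regime $\delta \geq \H{M(X)}$ is handled by taking $Z$ constant: then $\op{I}{X; Z} = 0$ and $\H{M(X) \mid Z} = \H{M(X)} \leq \delta$, so $Rate(\delta) = 0$. To obtain the second equality in \eqref{eq:rate_inv}, apply the chain rule $\H{X, M(X)} = \H{X} + \H{M(X) \mid X} = \H{M(X)} + \H{X \mid M(X)}$ and use $\H{M(X) \mid X} = 0$, since $M$ is a deterministic function of $X$. This rewrites $\H{M(X)} = \H{X} - \H{X \mid M(X)}$ and exposes the two interpretations: the first form splits the rate into the information needed for invariant prediction minus the allowable slack, while the second subtracts the invariance savings $\H{X \mid M(X)}$ from the standard compression rate $\H{X}$.

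The main obstacle, in my estimation, is less the arithmetic than making the upper bound operationally rigorous: one must check that the erasure construction yields a joint law $p(X, Z)$ whose worst-case excess risk does meet the $\delta$ constraint (not merely the surrogate $\H{M(X) \mid Z} \leq \delta$), and that the measurability and finiteness hypotheses required to convert Shannon's information-theoretic minimum into an operationally achievable rate are satisfied under the weak regularity conditions inherited from \cref{prop:nicer_dist}. The remainder is bookkeeping with standard entropy identities.
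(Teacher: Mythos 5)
Your proposal matches the paper's proof essentially step for step: you invoke \cref{prop:nicer_dist} to collapse the worst-case constraint to $\Risk{M(X)}{Z}=\H{M(X)\mid Z}\le\delta$, apply the classical rate--distortion theorem, obtain the lower bound $\op{I}{X;Z}\ge\H{M(X)}-\delta$ via the data-processing inequality and $\Risklog{M(X)}{Z}=\H{M(X)\mid Z}$, and achieve it with exactly the same erasure-channel construction (your Bernoulli $E$ with parameter $\delta/\H{M(X)}$ is the paper's channel with $\alpha=\delta/\H{M(X)}$), handling $\delta\ge\H{M(X)}$ by a constant $Z$ and finishing with the chain-rule identity $\H{M(X)}=\H{X}-\H{X\mid M(X)}$. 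The one point you flag as an obstacle --- checking that the erasure channel meets the original $\delta$ constraint rather than merely the surrogate --- is in fact the same direct computation the paper carries out ($\H{M(X)\mid Z}=\alpha\H{M(X)}=\delta$), and the DPI tightness is guaranteed since $Z$ is constructed from $M(X)$ and independent noise, giving $X\indep Z\mid M(X)$.
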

\begin{wrapfigure}{r}{0.4\textwidth}
\vspace{-1\baselineskip}
 \centering
 \includegraphics[width=0.98\linewidth]{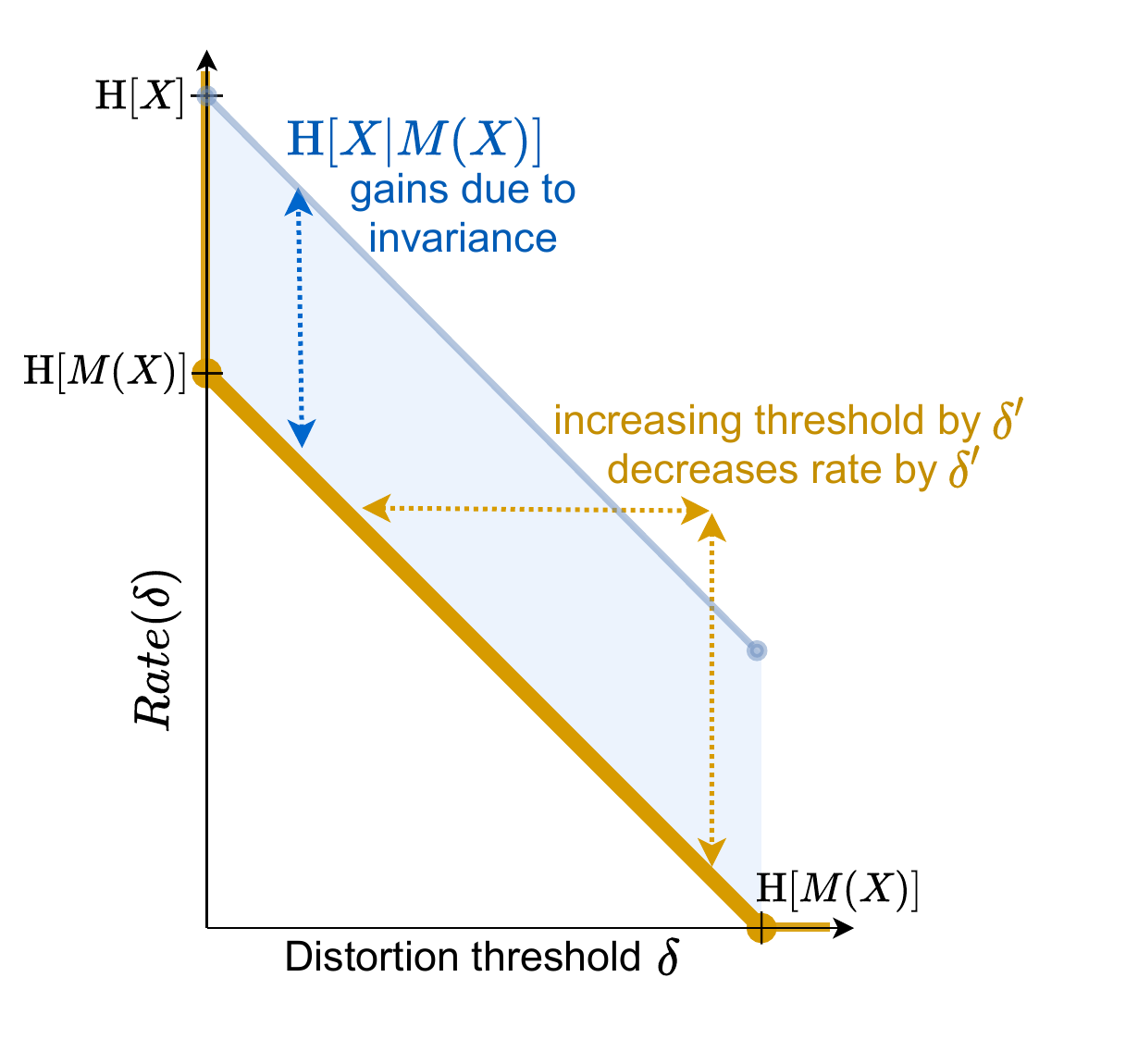}
 \vspace{-1\baselineskip}
 \caption{Rate-Invariance function.}
\label{fig:schema_RD}
\vspace{-1\baselineskip}
\end{wrapfigure}

To ensure lossless prediction, i.e., $Rate(0)$, our theorem states that we require a bit-rate of $\H{M(\rv X)}$.
Intuitively, this is because $M(\rv X)$ contains the minimal information needed to predict losslessly any $\rv Y \in \tasksinv{}$.\footnote{
We prove in \cref{appx:theorem_lossless} that $Rate(0)=\H{M(\rv X)}$ for any losses used in ML.
}
Furthermore, the theorem relates compression and prediction by showing that allowing a $\delta$ decrease in log-loss performance on all tasks can save \textit{exactly} $\delta$ bits. 
Intuitively, this is a linear relationship, because expected log-loss is measured in bits.
On the right of \cref{eq:rate_inv} we further decompose $\op{H}{M(\rv X)}$ into two terms to provide another interpretation:
(i) $\op{H}{\rv X}$, which, for discrete $X$, is the bit-rate required to losslessly compress $\rv X$, and (ii) $\op{H}{\rv X \cond M(\rv X)}$, which quantifies the information removed due to the invariance of desired tasks.
Importantly, removing this information does not impact the best possible predictive performance.
See \cref{fig:schema_RD}.

The bit-rate gains can be substantial, depending on the invariances.
Consider compressing a sequence of $n$ \iid fair coin flips.
 Suppose one is only interested in predicting permutation invariant labels.
 Then instead of compressing the entire sequence in $\op{H}{\rv X^n}=n\op{H}{\rv X}= n$ bits, one could compress the number of heads, which is a maximal invariant for permutation invariance, in $\mathcal{O}(\log n)$ bits.%
 \footnote{The number of heads $\rv K$ follow a binomial distribution so $\op{H}{\rv K} \in \mathcal{O}(\log n)$.
 Here $\rv K$ is also a minimal sufficient statistic for $\rv X^n$.
 More generally, if $P(\rv X')$ is invariant to $\sim$ and $\sim$ is the coarsest such relation, then minimal sufficiency coincides with maximal invariance.
In practice, however, $P(\rv X')$ will rarely be $\sim$ invariant.
 }
As more interesting examples, we recover in \cref{appx:recovering} results from \begin{inlinelist}
\item unlabeled graph compression \cite{rashevsky_life_1955};
\item multiset compression \cite{varshney_benefiting_2007};
\item single task compression (IB; \cite{tishby_information_2000}).
\end{inlinelist}
The equivalence $\sim$ can be induced by any transformations, such as transforming an image to its caption. 
We use this idea in \cref{sec:clip_experiments} to obtain >$1000\times$ compression on ImageNet without sacrificing predictive performance. 

\section{Unsupervised training of invariant neural compressors}
\label{sec:learning}
In this section, we design practical, invariant neural compressors that bound optimal rates. 
Derivations are in \cref{appx:objectives}.
In particular, we are interested in the $\argmin$ encoders $p(\rv Z | \rv X)$ of the RD function (\cref{eq:rate_distortion}) under the invariance distortion \disttextinv{}.
To accomplish this, we can optimize the following equivalent (\ie, it induces the same RI function) Lagrangian, where $\beta$ takes the role of $\delta$,\footnote{
As the RI function is not strictly convex (\cref{fig:schema_RD}), it should be beneficial to use $\Risk{M(\rv X)}{\rv Z}^2$ to ensure that sweeping over $\beta$ is equivalent to sweeping over $\delta$ \cite{kolchinsky_caveats_2019}.
We did not see any difference in practice.
}
%
\begin{equation}\label{eq:unconstrained}
\argmin_{p(\rv Z| \rv X)} \quad \op{I}{\rv X; \rv Z}  \; + \; \beta  \cdot \Risk{M(\rv X)}{\rv Z}.
\end{equation}
%
In ML, the maximal invariant $M$ is often not available.
Instead, invariances are implicitly specified by sampling a random augmentation from $A$, applying it to a datapoint $X$, and asking that the model's prediction be invariant between $X$ and $A(X)$. 
For example, invariance to cropping can be enforced by randomly cropping images while retaining the original label. 
We show in \cref{appx:objectives}, that in such case, we can treat the augmented $A(X)$ as the new source, $\rv Z$ as the representation of $A(X)$, and the unaugmented $\rv X$ as the maximal invariant task $M(A(X))$.
Indeed, $\Risk{M(A(X))}{Z}$ is equal to $\Risk{X}{Z}$ up to a constant, so we can rewrite \cref{eq:unconstrained} as the following equivalent objective,
\ydnote{here the equivalence is for any beta (i.e. for a given beta it induces the same segment of RI function.
But really we only care about the fact that it's same RI function, so I use "equivalence" which was already defined above.
}
\begin{equation}\label{eq:unconstrained_no_Mx}
\argmin_{p(\rv Z| A(\rv X))} \quad \op{I}{A(\rv X); \rv Z}  \; + \; \beta  \cdot \Risk{\rv X}{\rv Z}.
\end{equation}
Such reformulation is possible if random augmentations retain the invariance structure $X \sim A(X)$ but ``erase'' enough information about equivalent inputs, specifically, if $\rv X \indep A(X) \cond M(X)$.
We discuss the second requirement in \cref{appx:objectives}  but note that it will likely not be a practical issue if the dataset is small compared to the support $|\mathcal{D}| \ll |\mathcal{X}|$.
With this, we have an objective whose r.v.s. are easy to sample from. 
However, both terms in \cref{eq:unconstrained_no_Mx} are still challenging to estimate. 

In the following, we develop two practical variational bounds to \cref{eq:unconstrained_no_Mx}, which can be optimized by stochastic gradient descent \cite{bottou_large-scale_2010} over the encoder's parameters. Both approximations use the standard lossy neural compression bound
$ \MI{Z}{A(X)}  \leq \H{Z} \leq \min_{\theta}  \E{p(\rv Z)} {-\log q_\theta(\rv z)} $ where $q_\theta(\rv Z)$ is called an entropy model (or a prior) \cite{balle_end--end_2017,theis_lossy_2017}. This has the advantage that the learned $q_\theta$ can be used for entropy coding $Z$ \cite{rissanen_generalized_1976,duda_asymmetric_2009}. See \citet{balle_variational_2018} for possible entropy models. Our two approximations differ in how they upper bound $\Risk{\rv X}{\rv Z}$.
The first uses a reconstruction loss, which attempts to reconstruct the unaugmented input $x \in \mathcal{D}$ from $A(x)$.
The second uses a discrimination loss, which attempts to recognize which examples are augmented versions of the input.

\begin{figure}
\centering
\includegraphics[width=\linewidth]{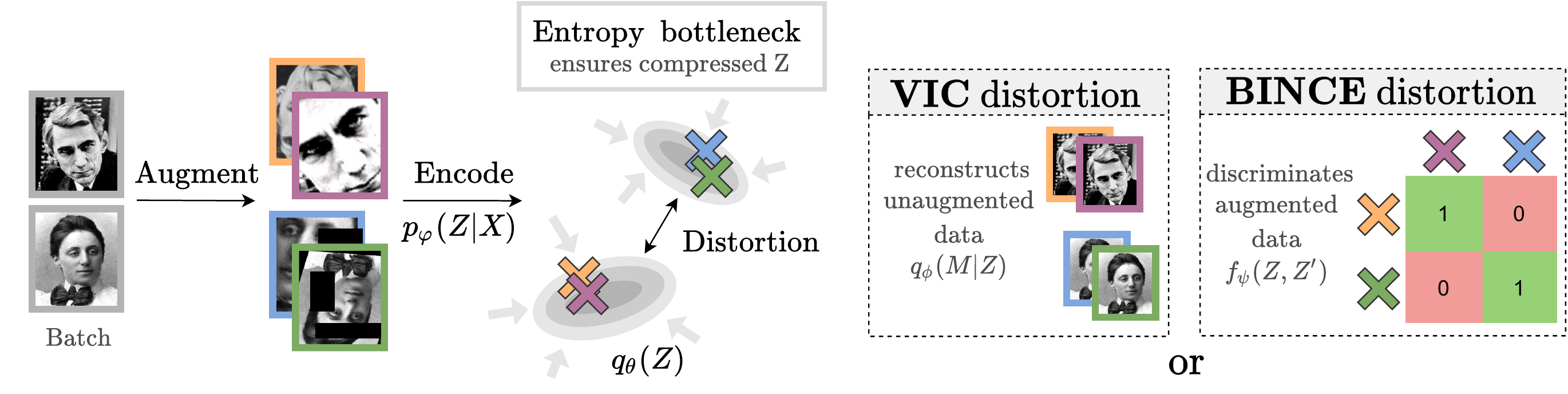}
\vspace*{-1.5em}
\caption{
Our unsupervised objectives for invariant image compression under data augmentation use the same encoder, but differ in their approximation to the invariance distortion.
Both models encode the augmented data, pass the representation through an entropy bottleneck which ensures that they are compressed, and use a distortion to retain the information about the identity of the original data.
The models differ in how they retain that information: 
(VIC) by reconstructing unaugmented inputs;
(BINCE) by recognizing which inputs come from the same original data. 
}\label{fig:objectives}
\vspace{-1\baselineskip}
\end{figure}

\subsection{Variational Invariant Compressor (VIC)}

Our first model is a modified neural compressor in which inputs are augmented but target reconstructions are not.
We refer to it as a \textit{variational invariant compressor} (VIC).
See \cref{fig:objectives} for an illustration.
VIC has an encoder $p_\varphi(\rv Z| A(\rv X))$, an entropy model $q_\theta(\rv Z)$, and a decoder $q_\phi(\rv X | \rv Z)$.
Given a data sample $x \in \mathcal{D}$, we apply a random augmentation $A(x)$, and encode it to get a representation $Z$.
The decoder then attempts to reconstruct the unaugmented $x$ from $Z$. This leads to the objective,
\begin{equation}\label{eq:invariant_vae}
\Livae{}(\phi, \theta, \varphi)
\defeq
-
\sum_{x \in \mathcal{D}}
\E{p(\rv A)p_\varphi(\rv Z|A(x))}{
\log q_\theta(\rv Z) + \beta \cdot \log q_\phi(x \cond \rv Z)}.
\end{equation}
The term $\log q_\theta(\rv Z)$ is an entropy bottleneck, which bounds the rate $\op{I}{A(\rv X); \rv Z}$ and ensures that unnecessary information is removed.
The term $\log q_\phi(x \cond \rv Z)$ bounds the distortion $\Risk{X}{Z} \leq  \E{p(\rv X, \rv Z)} {-\log q_\phi(X \cond \rv Z)} $ and ensures that VIC preserves the information needed for invariant tasks.


\subsection{Bottleneck InfoNCE (BINCE)}
\label{sec:BINCE}

Our second compressor retains all predictive information without reconstructing the data. It has two components: an entropy bottleneck and an InfoNCE \cite{oord_representation_2019} objective, which is the standard in contrastive SSL. We refer to this as the \textit{bottleneck InfoNCE} (BINCE), see \cref{fig:objectives}. BINCE has an advantage over VIC in that it avoids the problem of reconstructing possibly  high dimensional data.


\begin{wrapfigure}{r}{0.45\textwidth}
\vspace{-2.1\baselineskip} 
\begin{minipage}{0.45\textwidth}
 \centering
\newcommand{\vspacing}{$\vphantom{\sum_{i}^t}$} 
\begin{algorithm}[H]
\small
\caption{BINCE's forward pass for $x$}
\label{alg:BINCE}
    \begin{algorithmic}[1]
    \Require \vspacing $p_{\varphi}$,  $ q_{\theta}$, $f_{\psi}$, $\mathcal{D}$, $\rv A$, $\beta$, $n$, $x$ 
    \State \vspacing $\tilde{x} \leftarrow \text{sample}(A(x))$ \Comment{Augment}
    \State \vspacing $z \leftarrow \mathrm{sample}(p_{\varphi}(\rv Z | \tilde{x} ))$ 
    \Comment{Encode}
    \State \vspacing $ \mathrm{rate\_loss}  \leftarrow - \log q_{\theta}(z)$ 
    \State \vspacing $\{ x_i^- \}_{i=1}^n  \leftarrow  \mathrm{select}(\mathcal{D} \setminus \set{x})$ $n$ times  
    \State \vspacing $\tilde{\mathbf{x}} \leftarrow  \text{sample}([A(x), A(x_1^-), \dots, A(x_n^-)])$ 
    \State \vspacing $\mathbf{z}  \leftarrow \mathrm{sample}(p_{\varphi}(\rv Z | \tilde{\mathbf{x}}  ))$ 
    \State \vspacing $z^+  \leftarrow \mathbf{z}[0]$ 
    \State \vspacing $\mathrm{softmax} \leftarrow \frac{\exp f_{\psi}( z^{+}, z) }{(\sum_{z' \in \mathbf{z}} \exp f_{\psi}( z',  z))}$
    \State \vspacing $\mathrm{distortion\_loss}  \leftarrow - \log (\mathrm{softmax}) $  \\
    \Return \vspacing $ \mathrm{rate\_loss} + \beta \cdot \mathrm{distortion\_loss}$
\end{algorithmic}
\end{algorithm}
\end{minipage}
\vspace{-1.7\baselineskip}
\end{wrapfigure}

\Cref{alg:BINCE} shows how to train BINCE, where each call to $A$ returns an independent augmentation of its input.
As with VIC, for every datapoint $x \in \mathcal{D}$, we obtain a representation $Z$ by applying an augmentation $A(x)$ and passing it through the encoder $p_\varphi(\rv Z \cond A(\rv X))$.
We then sample a ``positive'' example $Z^+$ by encoding a different augmented version of the same underlying datapoint $x$.
Finally, we sample $n$ ``negative'' examples $Z_i^-$ by encoding augmentations $A(x_i^-)$ of datapoints $x_i^- \in \mathcal{D}$ that are different from $x$.
This results in a sequence $\boldsymbol{Z} = (Z^{+}, Z_1^-, \ldots, Z_n^-)$.
For conciseness we will denote the above sampling procedure as $p_\varphi(Z,\boldsymbol{Z} \cond A, \mathcal{D}, x)$.
The final loss uses a discriminator $f_\psi$ that is optimized to score the equivalence of two representation,
%
\begin{equation}\label{eq:bottlenecked_simclr}
\Lbince{}(\varphi, \theta, \psi) \defeq
-
\sum_{x \in \mathcal{D}}  \E{p(A) p_{\varphi}(Z, \boldsymbol{\rv Z}| A,\mathcal{D},x)}{ \log q_\theta(\rv Z)
+ \beta \cdot \log \frac{\exp f_\psi(\rv Z^{+}, \rv Z)}{\sum_{\rv Z' \in \mathbf{Z}} \exp f_\psi(\rv Z', \rv Z) }}.
\end{equation}
BINCE retains the necessary information by classifying (as seen by the softmax) which $\rv Z$ is associated with an equivalent example $X$.
Both VIC and BINCE give rise to efficient compressors by passing $\rv X$ through $p_{\varphi}(\rv Z |X)$ and entropy coding using $q_\theta(\rv Z)$.
In theory they can both recover the optimal rate for lossless predictions, \ie, $\H{M(X)}$, in the limit of infinite samples ($|\mathcal D|$,$n$) and unconstrained variational families.
In practice, BINCE  has the advantage over VIC of 
\begin{inlinelist}
\item not requiring a high dimensional decoder; and
\item  giving (for suitable $f_{\psi}$) representations that are approximately linearly separable \cite{saunshi_theoretical_2019,tosh_contrastive_2021,lee_predicting_2020} and thus easy to predict from \cite{chen_simple_2020,oord_representation_2019}.
\end{inlinelist}
The disadvantages of BINCE are that it
\begin{inlinelist}
\item does not provide to reconstructions diminishes interpretability; and
\item has a high bias, unless the number of negative samples $n$ is large \cite{poole_variational_2019,song_multi-label_2020}, which is computationally intensive.
\end{inlinelist}

\section{Experiments}
\label{sec:experiments}
We evaluated our framework focusing on two questions:
\begin{inlinelist}
\item What compression rates can our framework achieve at what cost?
\item Can we train a general purpose predictive image compressor?
\end{inlinelist}
For all experiments, we train the compressors, freeze them, train the downstream predictors, and finally evaluate both on a test set.
For classical compressors, standard neural compressors (VC) and our VIC, we used either reconstructions $\Tilde{X}$ as inputs to the predictors or representations $\rv Z$.
As BINCE does not provide reconstructions, we predicted from the compressed $\rv Z$ using a multi-layer perceptron (MLP).
We used ResNet18 \cite{he_deep_2016} for encoders and image predictors.
For entropy models we used \citepos{balle_variational_2018} hyperprior, which uses uniform quantization.
We optimized hyper-parameters on validation using random search.
For classification tasks, we report classification error instead of log-loss.
The former is more standard and gave similar results (see \cref{appx:mnist}).
For experimental details see \cref{appx:reproducability}. 
For additional results see \cref{appx:results}.
Code is at \codeurl{}.

\subsection{Building intuition with toy experiments}
\label{sec:toy_experiments}

\begin{figure}
     \centering
     \begin{minipage}{.36\linewidth}
     \begin{subfigure}[h]{\linewidth}
         \centering
         \includegraphics[width=\linewidth]{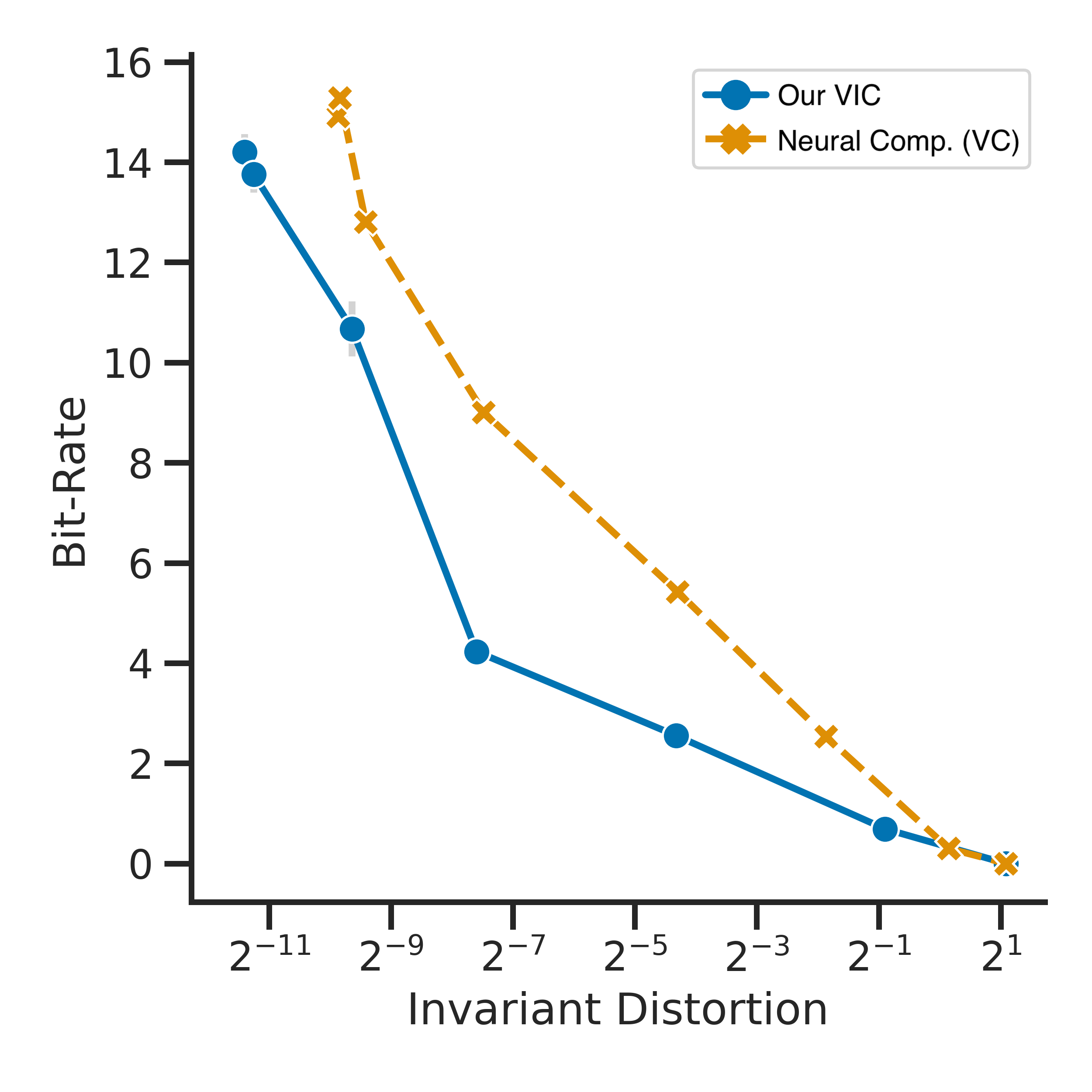}
         \vspace{-2em}
         \caption{Rate-Invariance curves}
         \label{fig:bananas_RI}
     \end{subfigure}
     \end{minipage}
     \qquad
     \begin{minipage}{.48\linewidth}
     \begin{subfigure}{0.32\columnwidth}
         \centering
         \includegraphics[width=\textwidth]{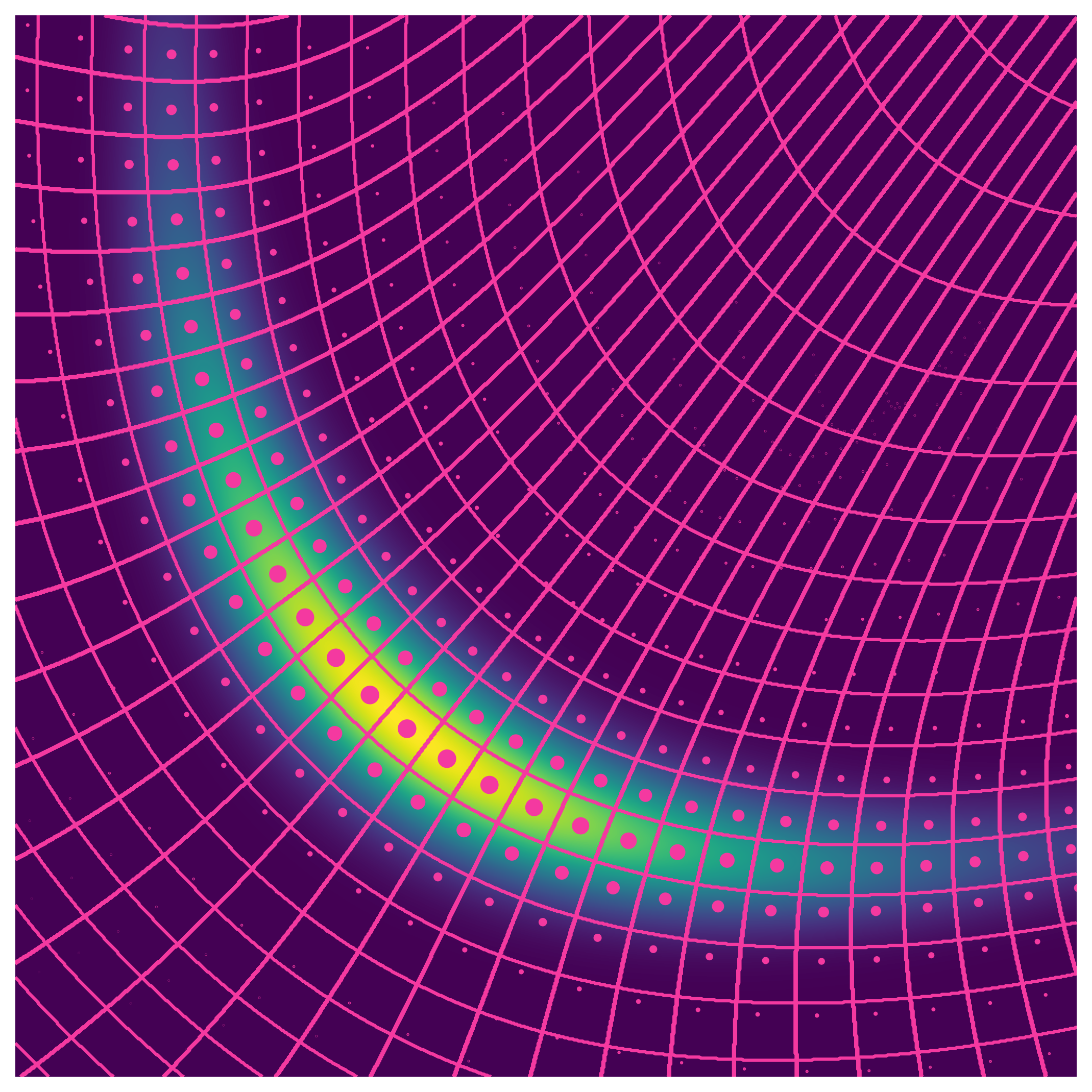}
         \vspace{-1.3em}
         \caption{VC high rate}
         \label{fig:bananas_sweepvae_100}
     \end{subfigure}
     \begin{subfigure}{0.32\columnwidth}
         \centering
         \includegraphics[width=\textwidth]{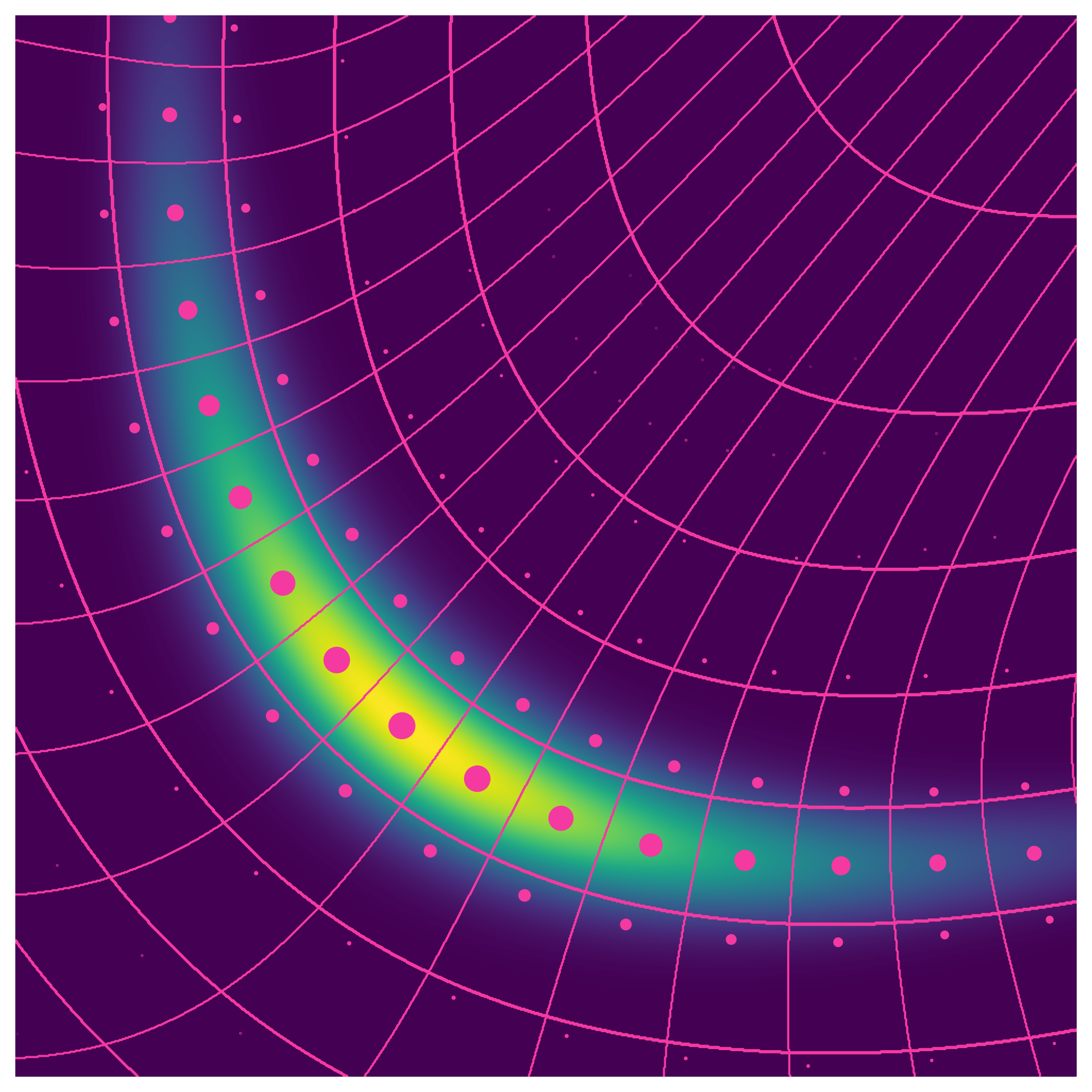}
         \vspace{-1.3em}
         \caption{VC}
         \label{fig:bananas_sweepvae_10}
     \end{subfigure}
     \begin{subfigure}{0.32\columnwidth}
         \centering
         \includegraphics[width=\textwidth]{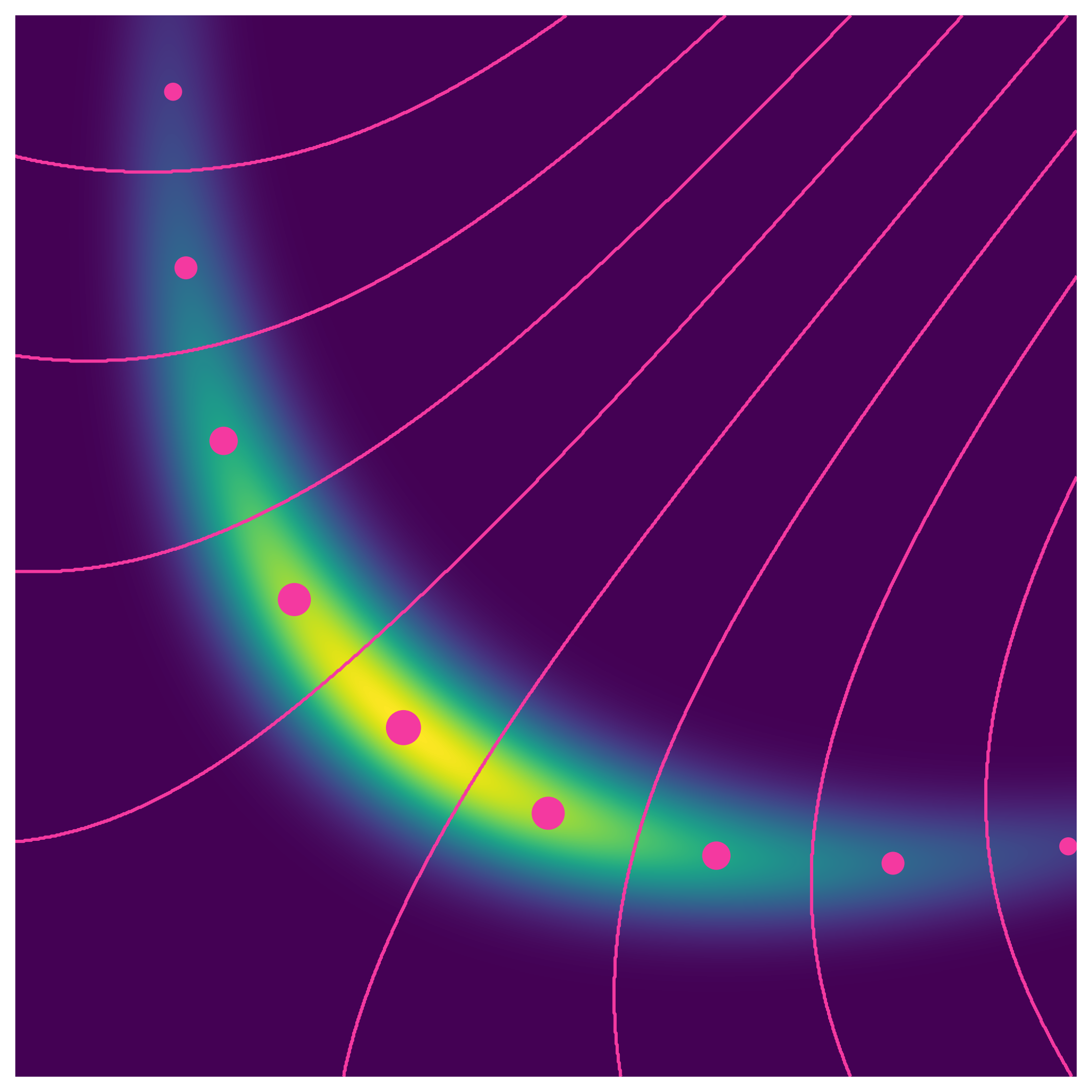}
         \vspace{-1.3em}
         \caption{VC low rate}
         \label{fig:bananas_sweepvae_1}
     \end{subfigure}

     \begin{subfigure}{0.32\columnwidth}
         \centering
         \includegraphics[width=\textwidth]{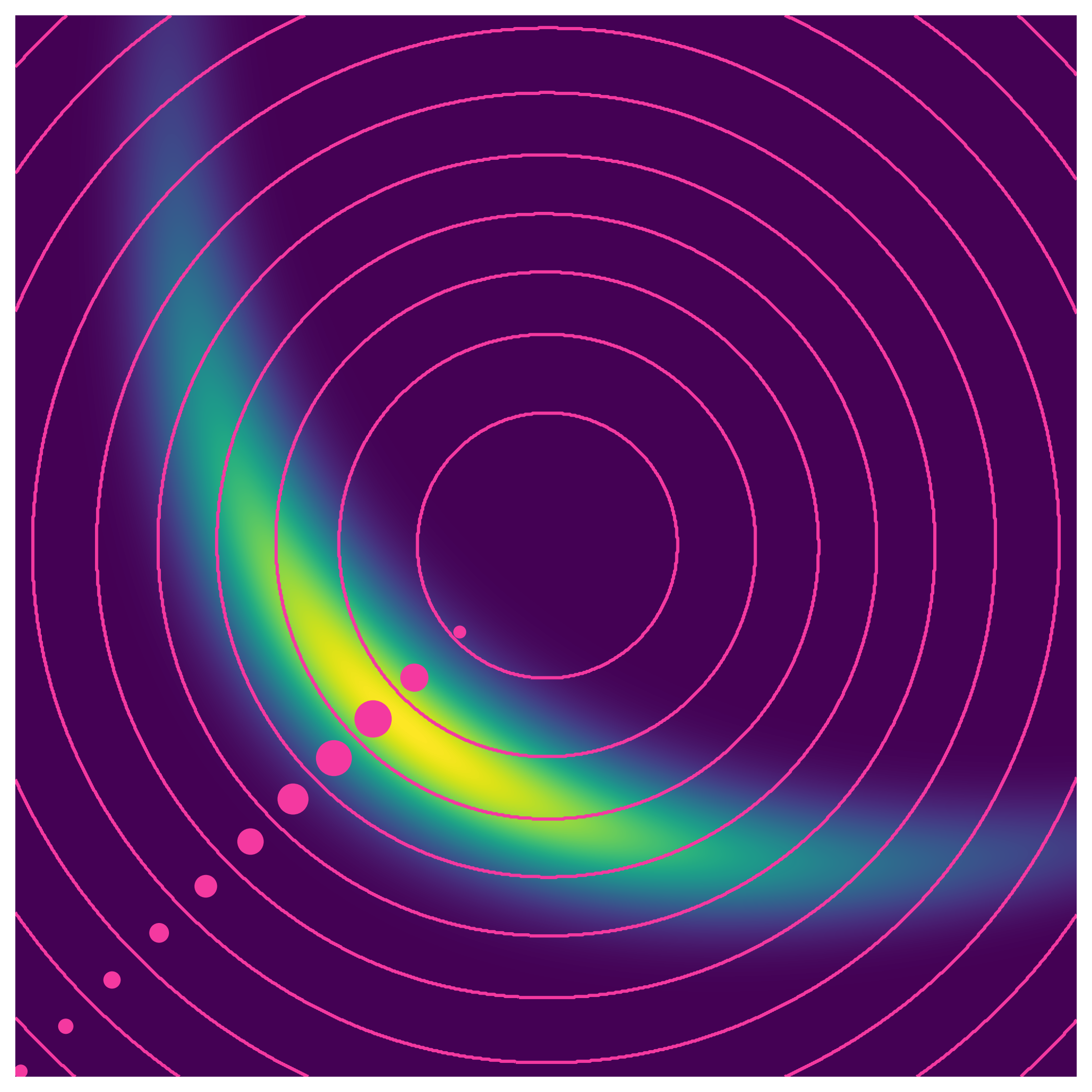}
         \vspace{-1.3em}
         \caption{VIC high rate}
         \label{fig:bananas_sweepivae_100}
     \end{subfigure}
     \begin{subfigure}{0.32\columnwidth}
         \centering
         \includegraphics[width=\textwidth]{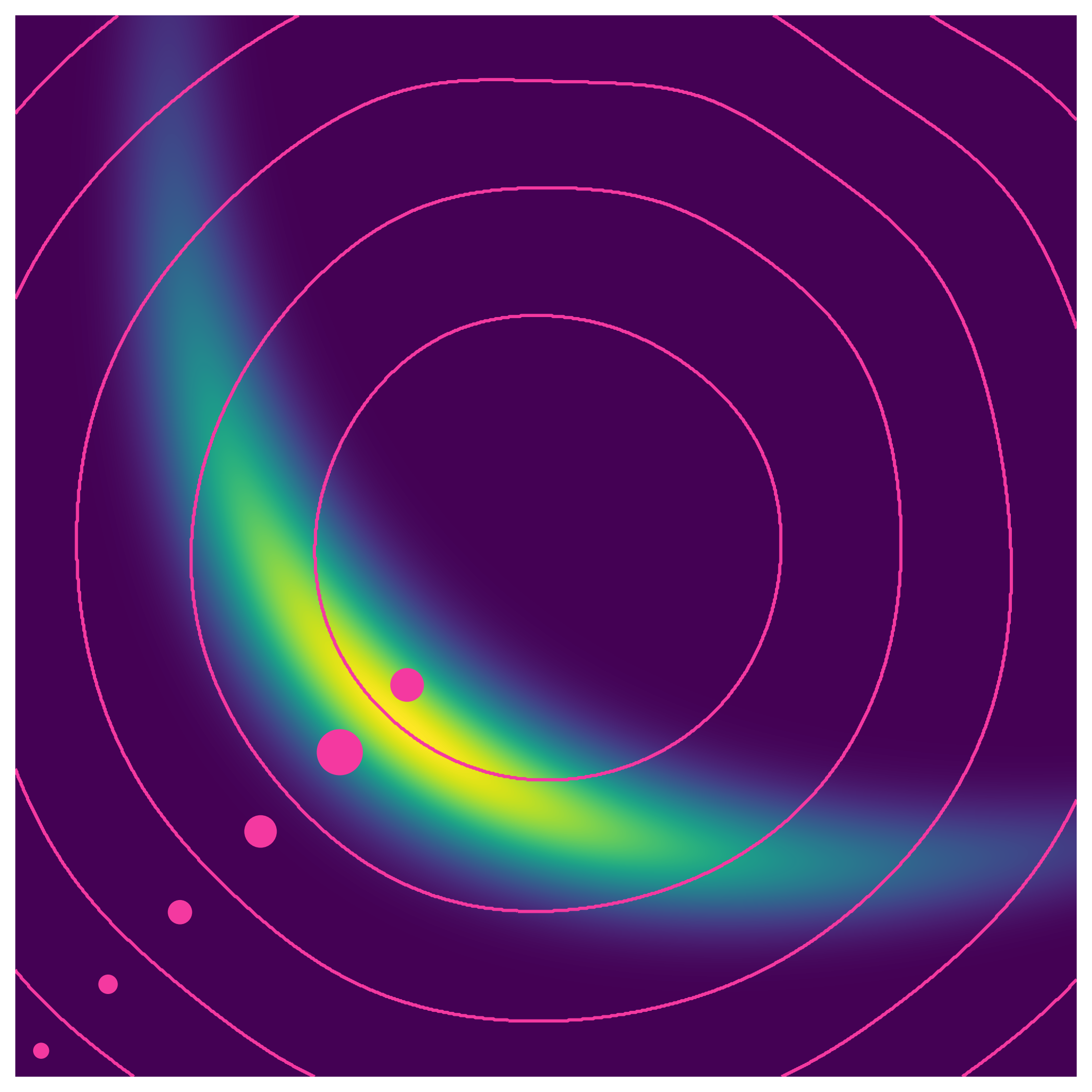}
         \vspace{-1.3em}
         \caption{VIC }
         \label{fig:bananas_sweepivae_10}
     \end{subfigure}
     \begin{subfigure}{0.32\columnwidth}
         \centering
         \includegraphics[width=\textwidth]{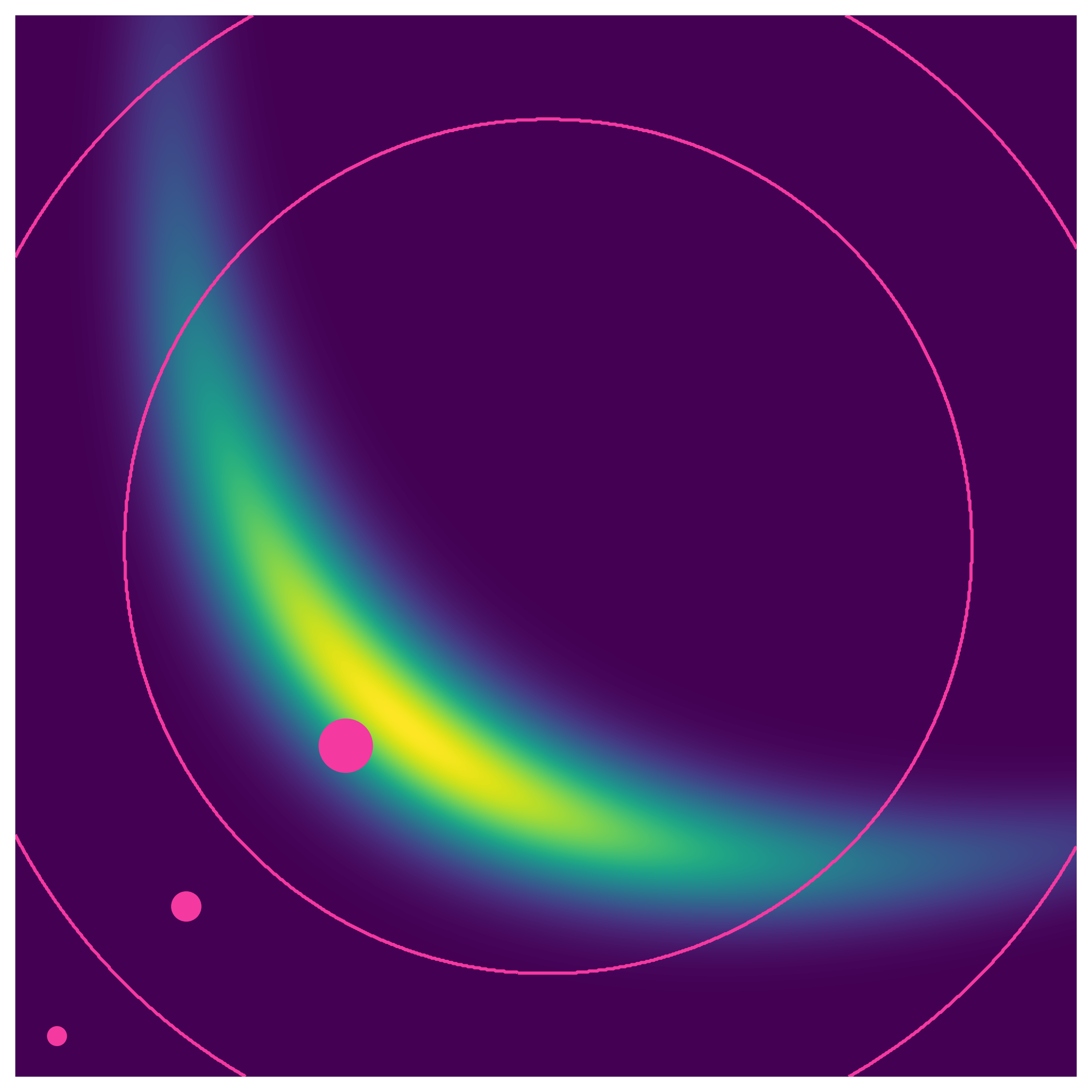}
         \vspace{-1.3em}
         \caption{VIC low rate }
         \label{fig:bananas_sweepivae_1}
     \end{subfigure}
     \end{minipage}
\caption{
Compression rates of a Banana source \cite{balle_nonlinear_2020} can be decreased when downstream tasks are rotation invariant. 
(Left) Our invariant compressor (VIC, blue) outperforms neural compressors (VC, orange). 5 runs with standard errors in gray.
(Right) VIC quantizes the space using disks to remove unnecessary angular information.
Pink lines are quantization boundaries, dots are code vectors with size proportional to learned probabilities.
Low rates correspond to low $\beta$ in \cref{eq:unconstrained_no_Mx}.
}
\label{fig:bananas_sweeps}
\vspace{-1\baselineskip}
\end{figure}
\ydnote{TODO: make gray line in RD curve more visible}

To build an visual intuition, we compressed samples from a 2D banana source distribution \cite{balle_nonlinear_2020}, assuming rotation invariant tasks, \eg, classifying whether points are in the unit circle.
We also compressed MNIST digits as in \cref{fig:mnist_intro}.
Digits are augmented (rotations, translations, shearing, scaling) both at train and test time to ensure that our invariance assumption still holds.

\paragraphQ{Where do our rate gains come from}
For rotation invariant tasks, our method (VIC) discards unnecessary angular information by learning disk-shaped quantizations (\cref{fig:bananas_sweeps}, bottom right).
Specifically, VIC retains only radial information 
by mapping all randomly rotated points (disks) back to maximal invariants (pink dots).
In contrast, standard neural compressors (VC) attempt to reconstruct all information, which requires a finer partition (\cref{fig:bananas_sweeps}, top right).
As a result (\Cref{fig:bananas_RI}), VIC needs a smaller bit-rate ($y$-axis) for the same desired performance (\disttextinv{}, $x$-axis).
The area under the RD curve (AURD) for VIC is $35.8 {\scriptsize \pm 4.2}$ against $48.1 {\scriptsize \pm 0.3}$ for VC, \ie, expected bit-rate gains are around $70\%$.
Similar gains are achieved for augmented MNIST in  \cref{fig:augmnist++} by reconstructing canonical digits.

\begin{figure}[h]
     \centering
     \begin{subfigure}[h]{0.31\columnwidth}
         \centering
         \includegraphics[width=\textwidth]{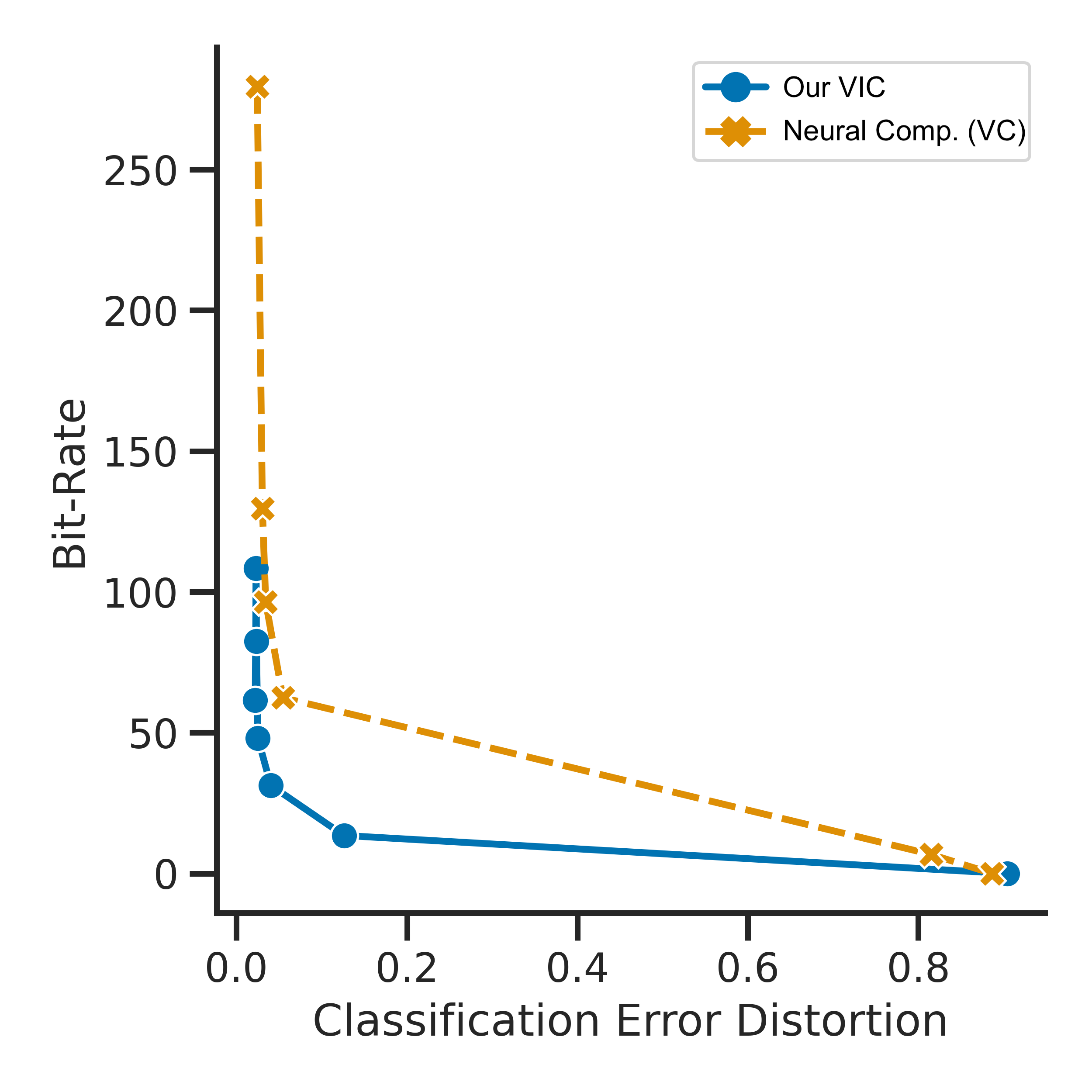}
         \vspace{-1.7em}
         \caption{Rate-Error curve}
         \label{fig:augmnist++_RD}
     \end{subfigure}
     \begin{subfigure}[h]{0.68\columnwidth}
         \centering
         \includegraphics[width=\textwidth]{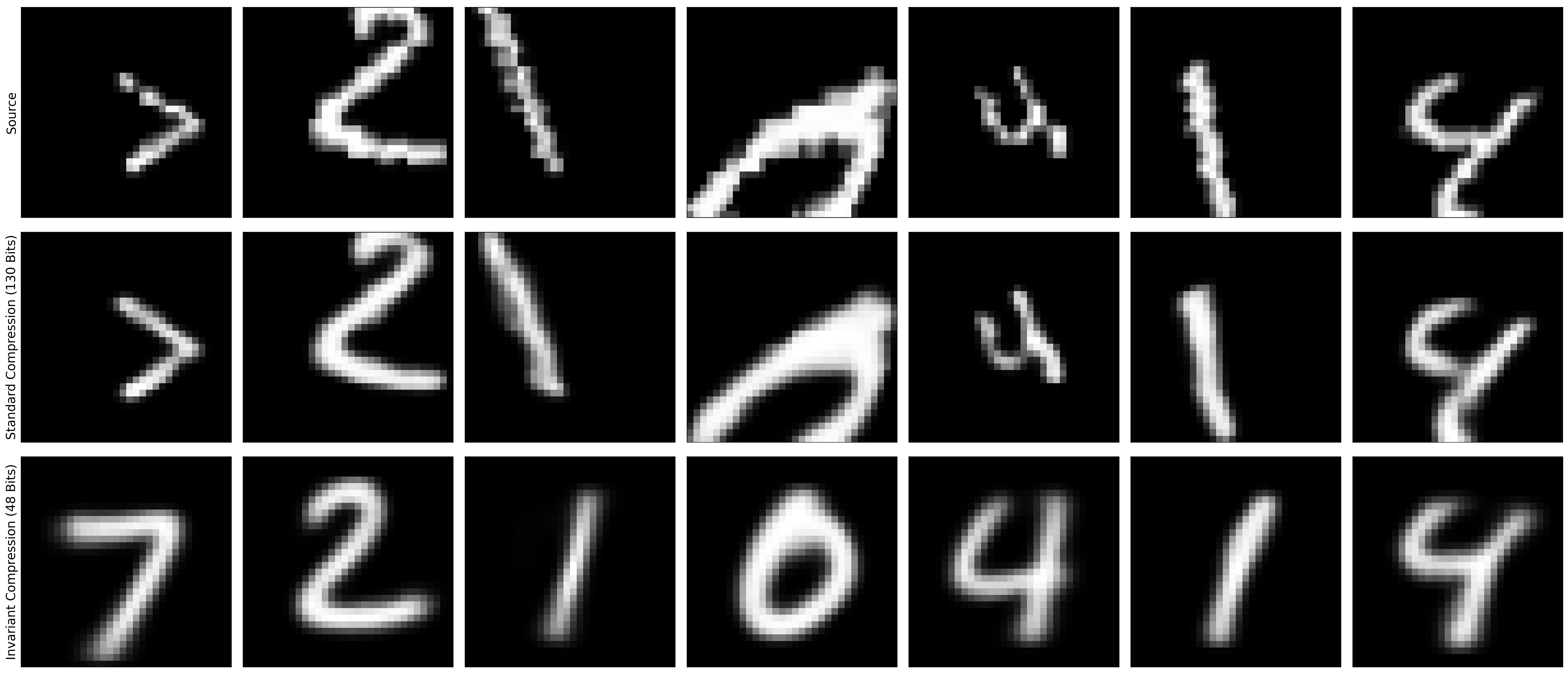}
         \vspace{-1.2em}
         \caption{Reconstructions that allow $99\%$ downstream accuracy}
         \label{fig:augmnist++_RD_rec}
     \end{subfigure}
\caption{
(Left) By reconstructing prototypical digits our VIC (blue) achieves higher compression of augmented MNIST digits than standard neural compressors (VC, orange) without hindering downstream  classification. 5 runs.
(Right) The source examples (first row) as well as reconstructions for the non-invariant (second row) and invariant compressor (last row).
}
\label{fig:augmnist++}
\vspace{-0.5em}
\end{figure}

\paragraphQ{Can we recover the optimal bit-rate}
We investigated whether our losses can achieve the optimal bit-rate for lossy prediction by using supervised augmentations, \ie, $A(x)$ randomly samples a train example $x^+$ that has the same label.
For MNIST the single-task optimal bit-rate is $\H{Y} = \log(10) \approx 3.3$ bits. 
 VIC and BINCE respectively achieve $5.7$ and $5.9$ bits, which shows that our losses are relatively good despite practical approximations. Details in \cref{appx:mnist}.

\paragraphQ{What is the impact of the choice of augmentations}
The choice of augmentation $A$ implicitly defines the desired task-set $\tasks{}$, \ie, $\tasks{}$ is the set of all tasks for which $A$ does not remove information.
As a result \cref{thm:rate_invariance_distortion} can be rewritten as $Rate(\delta) = \MI{X}{A(X)} - \delta$, so the rate decreases when $A$ removes more information from $\rv X$.
To illustrate this we trained our VIC using three augmentation sets on MNIST, all of which keep the true label invariant but progressively discard more $\rv X$ information.
VIC respectively achieves a rate of $185.3$, $79.0$, and $5.7$ bits, which shows the importance of using augmentations that remove $\rv X$ information.
Details and BINCE results are in \cref{appx:mnist}.


\subsection{Evaluating our methods with controlled experiments}
To investigate our methods, we compressed the STL10 dataset \cite{coates_analysis_2011}.
We augment (flipping, color jittering, cropping) the train \textit{and} test set, to ensure that the task invariance assumptions are satisfied.
We focus on more realistic settings in the next section.
In each experiment, we sampled $100$ combinations of hyper-parameters to ensure equal computational budget across models and baselines.

\begin{table}[h]
\vspace{-1\baselineskip}
\caption{
Invariant compressors (BINCE, VIC) outperform classical (PNG, JPEG, WebP) and neural (VC) compressors on STL10. BINCE achieves lossless prediction but compresses $121 \times$ better. 
}
\center
\small
\begin{tabular}{lrrrrrrr}
\toprule
& PNG \cite{graphics_png_isoiec_2003}    &  JPEG  \cite{group_jpeg_itu-t_1992}   & WebP   \cite{webp_google_2018} &  VC  $\rv \Tilde{X}$ &  \textbf{VIC} $\rv \Tilde{X}$ & \textbf{VIC} $\rv Z$  & \textbf{BINCE} \\ %
\midrule 
Decrease in test acc.  & $0$ &  $0.7$   & $1.1$ & $21.0$  &  $25.1$ & $16.1$ & $0.0$ \\ 
Compression gains & $1 \times$   &  $3 \times$   & $13 \times$ & $63 \times$  &  $269 \times$ &$175 \times$  & $121 \times$ \\ %
\bottomrule
\end{tabular}
\label{table:distortion_variation}
\end{table}

\paragraphQ{How do our BINCE and  VIC compare to standard compressors}
In \cref{table:distortion_variation} we compare compressors at the lowest downstream error that they achieved.
As benchmark, we use PNG's lossless compression.
Predicting from PNG corresponds to standard image classification, and obtains a rate of $1.42\scip{4}$ bits per image for $80.8 \%$ accuracy.
Classical lossy methods (JPEG, WebP) achieved up to $13 \times$ bit-rate gains with little drop in performance.
In comparison, our BINCE method achieved $121 \times$  compression gains with no impact on predictions.
Both our invariant (VIC) and standard (VC) neural compressors significantly decreased classification accuracy, which we believe can be explained by the encoders architecture (ResNet18) that we use for consistency (see \cref{appx:stl10}).

\paragraphQ{Should we predict from representations $\rv Z$ or reconstructions $\rv X$}
In \cref{table:distortion_variation} we analyzed the impact of predicting from $\rv Z$ instead of $\Tilde{\rv X}$ for VIC and see that this increases accuracy by $9\%$. 
In contrast, predicting from $\rv Z$ for VC decreases performance by $12\%$ (see \cref{appx:stl10}).
This suggests that invariant reconstructions $\Tilde{\rv X}$ might not be easy to predict from with standard image predictors.

\paragraphQ{Are we learning invariant compressors}
Invariant compressors should provide RD curves that are robust to test distribution shift in the desired augmentations.
We thus trained our VIC by applying the augmentations $50\%$ of the time but varying that probability $p$ at test time.
In \cref{appx:stl10} we show that this distribution shift have negligible influence on RD curves.


\subsection{A zero-shot compressor using pre-trained self-supervised models}
\label{sec:clip_experiments}

BINCE includes a standard contrastive SSL loss. So, we investigated whether existing pre-trained SSL models \cite{chen_simple_2020,radford_learning_2021} can be used to build generic compressors. 
In particular, we investigated whether CLIP \cite{radford_learning_2021} could be quickly turned into a powerful task-centric compressor for computer vision.
In the introduction, we motivated large compression gains by noting that typical image classification tasks can be predicted from detailed captions instead of images (around $1000\times$ more bits). 
CLIP is a vision transformer \cite{dosovitskiy_image_2020} pre-trained on 400M pairs of images and text $(x_{image}, x^+_{text})$ using a contrastive loss.
The ``augmentation'' $A$ is then a function that maps $x_{image}$ to its associated $x^+_{text}$ and vis-versa. This will partition the images and texts into sets, each of which are associated directly or by transitivity in CLIP's dataset.
This suggests that CLIP is retaining the image information that corresponds to a detailed caption, and may be turned into a generic compressor for image classification.


CLIP can essentially be seen as a BINCE model with an image-to-text augmentation, but without an entropy bottleneck.
(For details about the CLIP-BINCE relation see \cref{appx:clip_bince}.)
We thus constructed an approximation of our desired image-to-text BINCE compressor by two simple steps.
First, we downloaded and froze CLIP's parameters. 
Second, we trained, on the small MSCOCO dataset \cite{lin_microsoft_2015}, an entropy bottleneck to compress CLIP's representation.
The latter step can be done by training any lossy compressor on CLIP's representations, we did so using \citepos{balle_variational_2018} hyperprior entropy model with a learned rounded precision.
We then evaluated our resulting compressor on 8 datasets (various classification tasks and image shapes) that were never seen during training (zero-shot), by training an MLP for downstream predictions on each dataset.
One can see this as a multi-task setting (each dataset is a distinct task). We investigate the case of multiple labels per images in \cref{appx:galaxy}.

\begin{table}[h]
\vspace{-1\baselineskip}
\caption{
Converting a pretrained SSL model into a zero-shot compressor achieves substantial bit-rate gains while allowing test accuracies similar to supervised models predicting from raw images. 
}
\small
\center
\begin{tabular}{lrrrrrrrrr}
\toprule
& ImageNet  & STL & PCam & Cars & CIFAR10 & Food       & Pets & Caltech  \\ 
\midrule 
Rate gains vs JPEG &  $1104\times$  & $35\times$ &  $64\times$  &  $131\times$ & $7\times$ & $109\times$ & $150\times$ & $126\times$  \\ 
\midrule 
Our Acc. $[\%]$ &  $76.3$  &  $98.7$ & $80.9$ &  $79.6$ & $95.2$ &  $88.3$ & $89.5$ & $93.4$  \\
Supervised Acc. $[\%]$ & $76.1$   &  $99.0$ &  $82.6$  & $49.1$ &  $96.7$ &  $81.8$ & $90.4$ & $94.5$ \\ 
\bottomrule
\end{tabular}
\label{table:clip}
\end{table}

\paragraphQ{Can we use pretrained SSL to obtain a generic compressor}
\cref{table:clip} shows that we can exploit existing state-of-the-art (SOTA) SSL models to get a powerful image compressor, which achieves $1000\times$ bit-rate gains on ImageNet compared to JPEG (at the quality level used for storing ImageNet).
The bit-rate gains (1\textsuperscript{st} row) are significant across all zero-shot datasets, even for biological tissues (PCam; \cite{veeling_rotation_2018}).
Importantly, these gains come at little cost in test performance.
Indeed, the test accuracies of MLPs from our representations (2\textsuperscript{nd} row) is similar to a near SOTA model trained on the uncompressed images (3\textsuperscript{rd} row is from \citet{radford_learning_2021}).
These results are not surprising as JPEG is optimized to retain perceptual rather than classification information.
Note that the large variance in rate gains come from JPEG rates due to different images shapes (see \cref{table:clip_vs_EB}).

\paragraph{Our CLIP compressor retains all the information needed to get 0 error for those tasks}
\Cref{table:clip} provides the test performance for MLPs, while our theory discusses Bayes risk, which is independent of specific predictors and generalization. 
We estimated the excess Bayes risk for our datasets by counting the images (in train and test) that get compressed to the same $\rv Z$ but have different labels. 
We found that we are in the lossless prediction regime for those datasets.

\begin{table}[h]
\vspace{-1\baselineskip}
\caption{
Our entropy bottleneck (EB) on CLIP improves compression of representations up to $17\times$ with little impact on predictions.
The same compressor is used across datasets. Rates are per image.
}
\small{}
\center
\begin{tabular}{llrrrrrrrr}
\toprule
 & &  ImageNet  & STL & PCam & Cars & CIFAR10  & Food      & Pets & Caltech  \\ 
\midrule 
\multirow{5}{*}{\rotatebox[origin=c]{90}{\centering ~Bit-Rate  }} 
& JPEG & 1.49e6  & 4.71e4 & 9.60e4 & 1.92e5 & 1.05e4  & 1.54e5     & 1.81e5 & 1.69e5  \\ 
 & CLIP & 1.52e4  & 1.52e4 & 1.52e4 & 1.52e4 & 1.52e4    & 1.52e4     & 1.52e4 & 1.52e4  \\ 
 & \ \  \textbf{+EB high} $\beta$ & 2.47e3  & 2.46e3 & 2.61e3 & 2.59e3 & 2.53e3  & 2.39e3      & 2.33e3 & 2.46e3  \\ 
 & \ \  \textbf{+EB} $\beta$   & 1.35e3  & 1.34e3 & 1.49e3 & 1.47e3 & 1.41e3 & 1.27e3      & 1.21e3 & 1.34e3  \\ 
 & \ \  \textbf{+EB low} $\beta$ & 9.63e2  & 9.52e2 & 1.09e3 & 1.07e3 & 1.02e3 & 8.89e2      & 8.35e2 & 9.53e2  \\ 
  \midrule 
\multirow{4}{*}{\rotatebox[origin=c]{90}{\centering ~Test Acc. }}  
& CLIP  & 76.5  & 98.6 & 84.5 & 80.8 & 95.3  & 88.5      & 89.7 &  93.2  \\  
 & \ \  \textbf{+EB high} $\beta$ & 76.6  & 98.7 & 82.7 & 80.4 & 95.3  & 88.5      & 89.6 & 93.5  \\  
 & \ \  \textbf{+EB} $\beta$ & 76.3  & 98.7 & 80.9 & 79.6 & 95.2  & 88.3    & 89.5 & 93.4  \\
 & \ \  \textbf{+EB low} $\beta$ & 76.0  & 98.7 & 80.1 & 78.9 & 94.8 &  87.6      & 88.6 & 92.9  \\ 
\bottomrule
\end{tabular}
\label{table:clip_vs_EB}
\end{table}

\paragraphQ{What is the effect of the entropy bottleneck}
In \cref{table:clip_vs_EB} we compare the pretrained CLIP, to our CLIP compressor with an entropy bottleneck (EB) trained at different values for $\beta$.
When trained with a high $\beta$, our EB improves bit-rates by an average of $6\times$ without impacting predictions.
For our compressor from \cref{table:clip} (CLIP+EB $\beta$) the gains increase to $11\times$ with little predictive impact.
The sacrifice in predictions is more clear for $16\times$ bit-rate gains (low $\beta$).
This shows that CLIP's raw representations retain unnecessary information as it not explicitly trained  to discard information.

\paragraphQ{How would end-to-end BINCE compare to staggered training}
Compression gains can likely be larger by end-to-end training of BINCE, which would require access to CLIP's original dataset.%
\footnote{We investigated finetuning CLIP on MSCOCO but it suffered from catastrophic forgetting.}
To get an idea of potential gains we compared end-to-end and staggered BINCE on augmented MNIST in \cref{appx:mnist}. We found significant rate improvements ($358$ to $131$ bits) for similar test accuracy. 

\paragraph{Our CLIP compressor is simple to use}
In \cref{appx:code_clip}, we provide a minimal script (150 lines) to train a generic compressor in less than five minutes on a single GPU.
The script contains an efficient entropy coder for our model (200 images/second), which shows its practicality. As usual in SSL, the compressed representations are also more computationally efficient to work with than standard compressors.
In our minimal script we achieve the desired performance ($98.7\%$ on STL) using a linear model that is trained in one second, which is $1000\times$ faster than the baseline in \cref{table:clip}. 
This shows that our pipeline can improve computational efficiency in addition to storage efficiency.

\begin{table}[h]
\vspace{-1\baselineskip}
\caption{
Text-image invariance is better than invariance to standard augmentations for image classification.
CLIP and SimCLR are both ResNet50 pretrained with InfoNCE but different augmentations.
}
\small
\center
\begin{tabular}{llrrrrrrrrr}
\toprule
&& ImageNet  & STL & PCam & Cars & CIFAR10 & Food       & Pets & Caltech  \\ 
\midrule 
\multirow{2}{*}{\rotatebox[origin=c]{90}{\centering \scriptsize ~Rate  }} 
& CLIP+EB   &  $2108$  & $1962$ &  $1949$  &  $2421$ & $2111$ & $1991$ & $1867$ & $1968$  \\ 
& SimCLR+EB   &  $2811$  & $2732$ &  $2769$  &  $2751$ & $2950$ & $2077$ & $2839$ & $2502$  \\  
\midrule 
\multirow{2}{*}{\rotatebox[origin=c]{90}{\centering \scriptsize ~Acc.  }} 
& CLIP+EB   &  $63.2$  &  $92.0$ & $78.6$ &  $68.0$ & $65.5$ &  $74.1$ & $81.8$ & $83.0$  \\
& SimCLR+EB   & $62.8$   &  $91.9$ &  $81.4$  & $29.6$ &  $78.6$ &  $60.0$ & $78.9$ & $79.0$ \\ 
\bottomrule
\end{tabular}
\label{table:ssl}
\end{table}

\paragraphQ{What augmentations to use for SSL compression}
\Cref{table:ssl} compares two ResNet50 pretrained with contrastive learning using invariance to text-image (CLIP) or standard image augmentations (SimCLR \cite{chen_simple_2020}) such as cropping or flipping. 
We see that CLIP's augmentation usually give better compression and downstream performance, which shows the importance of the choice of augmentations.
This also supports our motivation of using text-image augmentations, which are likely label-preserving for a vast amount of tasks but discard large amounts of unnecessary information.

\section{Related work}
\label{sec:related}
In \cref{appx:related} we discuss more related work, including invariances in compression and the link to SSL.

\paragraph{Task-centric compression}
To our knowledge, our paper is the first to formalize compression only for predictions.
IB \cite{tishby_information_2000} uses a task-centric distortion, but is not used for compression as it requires supervised training, so there are no advantages compared to compressing predicted labels.
Some authors used heuristics to bypass the supervised issue, \eg, focusing on low frequencies for classification \cite{liu_deepn-jpeg_2018} or high frequencies for segmentation \cite{liu_machine_2019}.
Other authors have incorporated predictive errors to perceptual distortions \cite{liu_recognizable_2016,liu_classification-distortion-perception_2019}, but cannot compress without the perceptual distortion for the same reason as IB. 
One exception is \citepos{weber_observer_2020} compressor, which (when removing their perceptual distortion) minimizes MSE in the hidden layers of a pretrained classifier.
Even more related is \citepos{singh_end--end_2020} work on compressing pretrained features for transfer learning, which is practice is similar to our compression of SSL features.
Their work do not provide theoretical justifications, and are constrained to tasks that are similar to those used for pretraining.

\section{Discussion and Outlook}
\label{sec:conclusion}

Given the ever increasing amount of data that is processed by task-specific algorithms, it is necessary to rethink the current task-agnostic compression paradigm.
We formalized the first compression framework for retaining only the information necessary for high performance on desired tasks. 
Using our theory, we provide two unsupervised objectives for training neural compressors.
Experimentally, we show that these compressors can achieve  bit-rates that are orders of magnitude ($1000\times$ on ImageNet) smaller than standard image compressors without losing predictive performance.

There are a number of caveats that should be addressed. 
First, to achieve better rates, our theory requires an irrecoverable loss of information. This can be an issue if the set of desired tasks changes.
For example, if one uses text-image invariances then it may be impossible to perform image segmentation from the compressed representations.
One solution would be to keep an original copy and use invariant compression for duplicated data, \eg, for the thousands copies of ImageNet.
A second issue is the interpretability of the compressed representations.
This can be partially addressed by reconstructing prototypical data as in \cref{fig:mnist_intro} (post-hoc decoders could be trained for BINCE).
A third caveat is that the compressed representations may be harder to learn from, \eg, neural networks may struggle to predict from representations even if the information is retained.
Although our experiments actually showed the opposite, this should be addressed theoretically, \eg, using decodable information \cite{xu_theory_2020,dubois_learning_2020}.
Finally, successful use of our framework requires access to label-preserving augmentations $A$ that discard significant information about $\rv X$. 
Finding such an $A$ may be challenging for some tasks.
Given that augmentations are ubiquitous in ML, the community will hopefully continue developing task-specific augmentations which we could take advantage of.

Nevertheless, we achieved orders of magnitude improvements in compression for predictions, and we believe that our improvements are just the beginning.
For example, many tasks can be answered by referencing a detailed natural language description of the data. In these cases, the improvements can be very large, potentially 1M$\times$ for videos.\footnote{A movie takes around 10GB to store, but the information relevant to humans (\eg, ``what happened to the house?'', ``how did the movie end?'') can likely be stored in a detailed movie script and would require 100KB.}
In the long-term, we hope that abandoning perceptual reconstructions will enable individuals to process data at scales that are currently only possible at large institutions, and our society to take advantage of large data sources in a more sustainable way.

\clearpage
\newpage

\begin{ack}
We would like to thank Alex Alemi, David Duvenaud, Andriy Mnih, Emile Mathieu, Jonah Philion, Yangjun Ruan, and Ilya Sutskever for their helpful feedback and encouragements.
Resources used in preparing this research were provided, in part, by the Province of Ontario, the Government of Canada through CIFAR, and companies sponsoring the Vector Institute.
BBR acknowledges the support of the Natural Sciences and Engineering Research Council of Canada (NSERC): RGPIN-2020-04995, RGPAS-2020-00095, DGECR-2020-00343.
\end{ack}

\bibliographystyle{IEEEtranN}
\bibliography{bibliography}

\clearpage
\newpage


\appendix
\addcontentsline{toc}{section}{Appendix} 
\part{Appendix} 

\parttoc 


\clearpage
\newpage

\section{Preliminaries}
\label{appx:preliminaries}

\subsection{Notation}
\label{appx:notation}


\kword{Probability} 
We assume a background standard probability space $(\Omega, \mathcal{H}, \mathbb{P})$ that is rich enough to support all random variables used. 
Letters that are upper-case $\rv X$ represent a random variable, while realizations are denoted with the associated lower case $x$. 
The sample space of a random variable will be written using a calligraphic $\mathcal{X}$, and we will say that $\rv X$ takes value in (t.v.i) $\mathcal{X}$.
We denote the probability distribution of $X$ as $P(X)$ and the probability density function, if it exists, as $p(\rv X)$.
$\rv X \dsim \mathcal{N}(0,1)$ denotes that $\rv X$ has a certain distribution (here, Gaussian).
Expectations are written as: $\E{P(\rv x)}{\rv X}$, 
or $\E{p(\rv x)}{\rv X}$ when the density exists.
Independence between two random variables $\rv X$ and $\rv Y$ is denoted with $\rv X \indep \rv Y$.
To denote conditional independence between two random variables $\rv X$ and $\rv Y$ given $\rv Z$ we either use $\rv X  \indep \rv Y \cond \rv Z$ or say that $\rv X - \rv Z - \rv Y$ forms a Markov Chain. 
$f \circ g$ denotes a composition of two functions $f$ and $g$, but in the case of random variable we also use the shorthand $f(\rv X) \defeq f \circ \rv X$.

\kword{Information theory} 
For notational convenience (see \cref{assumption:density}  below), when dealing with log loss we will always assume the existence of probability densities, in which case the KL divergence between two probability distributions on $\mathcal{X}$, $P$ and $Q$, is $\KL{p(\rv X), q(\rv X)} \defeq \int p(\rv X) \log  \frac{p(\rv X)}{q(\rv X)} \sd x$.
The mutual information between random variables $X$ and $Z$ is $\MI{X}{Z} \defeq \KL{{p(\rv X, \rv Z)}, p(\rv X)p(\rv Z)}$.
The (differential or discrete) entropy of a random variable is $\op{H}{\rv X} \defeq \E{P(\rv x)}{- \log p(\rv X)} $, while the conditional (differential) entropy is $\op{H}{\rv X \cond \rv Z} \defeq \E{P(\rv x,\rv Z)}{- \log p(\rv X| \rv Z)}$.


\kword{Equivalence} 
$x \sim x'$ denotes that $x$ and $x'$ are equivalent with respect to (w.r.t.) an equivalence relation on $\mathcal{X}$ (the exact relation being implicit).
The equivalence class of $x$ under $\sim$ consist of all elements that are equivalent to $x$, \ie $[x] \defeq \set{x' \in \mathcal{X} \cond x' \sim x}$.
The set of all equivalence classes (the quotient set) will be denoted as $\mathcal{X} / \sim \, \defeq \set{[x] \cond x \in \mathcal{X}}$, while the canonical projection is denoted as $\pi_{\sim} : x \mapsto [x]$.

\kword{Risk minimization} 
We will often use variational optimization. When the variational family is not made explicit it means that the optimization is over all functions with the correct domain and codomain, \eg  $\min_{q(\rv Y \cond \rv X)}$ means that that the optimization is done over the collection of all conditional probability densities  on $\mathcal{Y}$ given the random variable $X$. 

For a fixed ``action'' or ``decision'' space $\mathcal{A}$, a loss function is defined as $L : \mathcal{Y} \times \mathcal{A} \to \mathbb{R}$. The (expected) risk of a predictor $h \colon \mathcal{X} \to \mathcal{A}$ is $\E{P(X,Y)}{L(Y, h(X))}$. 
The Bayes (best achievable) risk when predicting $\rv Y$ from $\rv X$ using some (unspecified) loss is denoted as $\Risk{Y}{X}$. 
When the loss $L$ is specified, we denote the Bayes risk as $\Riskl{Y}{X} \defeq \inf_{h\colon \mathcal{X} \to \mathcal{A}} \E{P(\rv X, \rv Y)}{L(Y,h(X))}$.
For the case of log loss (always assumed in the main text) we have  $\Risklog{Y}{X} \defeq \inf_{q} \E{P(\rv X, \rv Y)}{- \log q(\rv y|\rv x)}$.
For MSE loss we have $\Riskmse{Y}{X} \defeq \inf_{f \colon \mathcal{X} \to \mathcal{Y}} \E{P(\rv X, \rv Y)}{\| Y - f(X) \|^2}$.
Letters $\rv X$, $\rv Z$, and $\rv Y$ refer to the input, representation and target of a predictive task, respectively.


\subsection{Assumptions}
\label{appx:assumptions}

In this section, we discuss the assumptions that we make throughout our paper. 
Specifically, we discuss why we make those assumption and why such assumptions should hold in practice.
\textbf{All our assumptions should hold in most practical scenarios.}
The following assumptions will be implicit in the rest of our work.

\ydnote{We should remove any notion of M(X) in this assumption and have at as a lemma that comes from the fact that X satisfies this assumption}
\begin{assumption}[Finite risk]
\label{assumption:variance}
We restrict ourselves to tasks $\rv Y$, such that\  $|\Risk{Y}{\eta}| < \infty$ for any finite constant $\eta$ in the domain of the predictor $f$.
Similarly we restrict ourselves to $\rv X$ with $\Risk{\rv X}{\eta} < \infty$, and to equivalences relations on $\calX$ such if there exists a maximal invariant then there exists \textit{some} maximal invariant $M(\rv X)$ with $\Risk{\rv M(X)}{\eta} < \infty$ for any finite constant $\eta$.
\end{assumption}

\Cref{assumption:variance} ensures that we can take differences of Bayes risks as in \cref{def:excess_distortion}.
For the case of log loss our assumption is equivalent to requiring finite (differential or discrete) entropy of $\H{X}$, $\H{M(X)}$ and $\H{Y}$.
For MSE loss, this is equivalent to finite variance for $X$, $Y$ and $M(X)$. Specifically, we will restrict ourselves to random variables $Y$ and $M(X)$ that are bounded in $L^2(\Omega, \mathcal{H}, \mathbb{P})$. (See \cref{assumption:l2:bounded} in \cref{appx:theorem_mse}.)
Note that $\Risk{\rv M(X)}{\eta} < \infty$ comes directly from $\Risk{\rv X}{\eta} < \infty$ for the two main losses that we consider.
Indeed, for log loss this comes directly from the data processing inequality.
For MSE such $M(X)$ can easily be constructed by mapping any $x$ to a value in $[x]$ that is smaller than the expected value over the equivalence class $\op{E}{X | X \in [x]}$.
\ydnote{Should probably be a lemma. The problem is that I don't know how to prove it for \textit{any} loss which would be needed for \cref{appx:theorem_lossless}.}

\begin{assumption}[Existence of regular conditional probabilities] 
\label{assumption:regular} 
We restrict ourselves to standard Borel measurable spaces, so that the existence of regular conditional probability distributions is ensured. 
This is necessary to ensure the existence of probability kernel in \cref{lemma:desintegration}. 
This technical assumption essentially holds for all practical purposes. Unless stated otherwise, we denote $\borel(\mathcal{Y})$ the Borel $\sigma$-algebra of a set $\mathcal{Y}$.
\end{assumption}

\begin{assumption}[Measurability of functions] \label{assumption:measurability} 
We assume that all functions introduced in the following sections are measurable with respect to the ``natural'' measurable spaces of the functions' domain and codomain. (A few special functions will be shown to be measurable.) 
In particular, we require 
\begin{inlinelist}
\item the measurability of $M(\cdot)$ which implies that $M(\rv X)$ is a random variable; and
\item the measurability of the projection $\pi_{\sim}: \mathcal{X} \to \mathcal{X}/\sim$, which implies that there always exists a maximal invariant in the form of the projection $\pi_{\sim}$. 
\end{inlinelist}
This technical assumption holds for essentially all practical purposes.
\end{assumption}

\Cref{assumption:variance,assumption:measurability,assumption:regular} are used throughout our work.
Two further assumptions are needed for log loss, which we remove in \cref{appx:theorem_mse} when we obtain results for MSE.

\begin{assumption}[Countably many equivalence classes] 
\label{assumption:discrete} 
For the log loss risk (\cref{appx:theorem_logloss}) we restrict our discussion to equivalences  such that the quotient set $\mathcal{X}/\sim$ is countable.
This ensures that $M(\rv X)$ is a discrete random variable thereby ensuring that our invariance distortion $\Risk{M(X)}{Z}$ is independent of the choice of maximal invariant $M$ as the conditional entropy is  invariant to bijections.
\end{assumption}

Note that \cref{assumption:discrete}  holds when $\mathcal X$ is countable which always happens in practice due to floating point arithmetic, \ie every real number has to be rounded to the closest 64 bits number.
Another perspective is to say that $\mathcal{X}$ is actually uncountable, but that all tasks we care about are always invariant to rounding to the nearest 64 bits number due to floating point arithmetic.
As a result, the maximal invariant is the usual maximal invariant rounded to the closest floating point. 
For example, if $\rv X$ is a 2D Gaussian we cannot work directly with translations on the y-axis (which gives uncountably many $[x]$, one for each real number on the x-axis), but can work with y-axis invariance combined with invariance to rounding on the x-axis (e.g. closest 64 bits number).

\begin{assumption}[Convenience assumption: Existence of densities]\label{assumption:density} 
In sections \cref{appx:invariant_distortion,appx:theorem_logloss}, where we work with log loss, we restrict ourselves to cases where the (conditional) probability mass/density function exist, \ie, to probability distributions that are absolutely continuous w.r.t.\ to some (shared) underlying measure.
This assumption is not needed but it simplifies the notation, and ensures that the differential entropy of random variables is well defined.
Such assumption could be removed by using the general definition of mutual information as a supremum over partitions and by defining continuous entropy as $\H{X} = \MI{X}{X}$ \cite{kolmogorov_shannon_1956,pinsker_information_1964}, also known as \citepos{jaynes_information_1957} limiting density of discrete points.
\end{assumption}

\subsection{Definitions}
\label{appx:definitions}

In the main paper we were relatively informal in our definitions, here we restate our main definitions more formally.

\begin{definition}[Maximal invariant]\label{def:maximal_invariant}
Let $\sim$ denote an equivalence relation on $\mathcal{X}$.
We say that a measurable function $M: \mathcal{X} \to \mathcal{M}$ is a \textit{maximal invariant} w.r.t.\  $(\mathcal{X}, \sim)$ if
\begin{equation}\label{eq:max_inv}
\forall x,x' \in \mathcal{X} \quad  x \sim x' \iff M(x) = M(x')
\end{equation}
\end{definition}

Note that our notion of maximal invariants generalizes the notion of maximal invariants in probabilistic group theory \cite{eaton_group_1989}. 
We refer the reader to \citet{lehmann_testing_2005} for many examples in the group case. As in the group case, a maximal invariant typically is not unique.

The invariance structure that we want our tasks to have is based on their conditional distributions given $X$, defined as follows.

\begin{definition}[Conditional invariance]\label{def:cond_invariance}
We say that $\rv Y$ is conditionally invariant w.r.t.\  $(\mathcal X, \sim)$, if the regular conditional distribution $x \mapsto P(\rv Y \cond x)$ is invariant w.r.t.\ $\sim$, \ie $\forall x, x' \in \mathcal{X}$ we have 
\begin{equation}\label{appx:eqn:task_invariance}
x \sim x' \implies P(\rv Y \cond x) = P(\rv Y \cond x')
\end{equation}
\end{definition}
\ydnote{use notation from lemma 4.}

\begin{definition}[Invariant tasks of interest]\label{def:invariant_tasks_interest}
The set of all invariant tasks of interest $\tasksinv{}$ w.r.t.\ to a loss and an  equivalence $(\mathcal X, \sim)$ is the set of all random variables $\rv Y$ that are conditionally invariant w.r.t.\ $(\mathcal X, \sim)$ and that satisfy \cref{assumption:variance} (finite risk).
\end{definition}

First we require the notion of a valid distortion \cite{berger_rate_1968}, which ensures that we can apply the rate distortion theorem.%

\begin{definition}[Valid distortion]\label{def:valid_distortion}
Let $\rv X$ and $\rv Z$ be two random variables that take values in $\mathcal{X}$ and $\mathcal{Z}$, respectively.
Then an (expected) distortion $\mathrm{D}$ is \textit{valid} w.r.t.\ $\rv X,\rv Z$ if there exists a point-wise distortion $d : \mathcal{X} \times \mathcal{Z} \to \mathbb{R}_{\geq 0}$ such that $\E{p(\rv X, \rv Z)}{d(\rv X, z)} \leq \infty$ for some $z \in \mathcal{Z}$ and 
\begin{equation}
\op{D}{\rv X, \rv Z} \defeq \E{p(\rv X, \rv Z)}{d(\rv X, \rv Z)} \;.
\end{equation}
\end{definition}

In the context of the current work, a representation $\rv Z$ which arises by encoding $\rv X$ using $p(\rv Z \cond \rv X)$ should not depend on any particular task $\rv Y$. 

\begin{definition}[Representation for a task set]\label{def:representations}
Let $\rv X, \rv Z$ be two random variables and $\tasks{}$ be a set of random variables.
$\rv Z$ is a \textit{representation} of $\rv X$ for $\tasks{}$ if for all $\rv Y \in \tasks{}$ such that $Y$ and $Z$ are not almost surely equal, we have the pairwise conditional independence $\rv Y \indep \rv Z \cond \rv X$.
\end{definition}

Note that if $Z \notin \tasks{}$ then it is not almost surely equal to any $Y \in \tasks{}$. The condition allows for the possibility that $Z \in \tasks{}$, but it must be conditionally independent, given $X$, of all other random variables in $\tasks{}$.

We now recall the excess risk distortion \disttext{}.

\begin{definition}[Excess risk distortion]\label{def:excess_distortion}
Let $\rv X$ and $\rv Z$ be two random variables.
Let $\tasks{}$ be a set of random variables such that under a loss $L$, the Bayes risks in \eqref{eq:excess:risk:dist} below are well defined for each $Y \in \tasks{}$.
The \textit{excess risk distortion} \disttext{} is defined as:
\begin{equation} \label{eq:excess:risk:dist}
\distst{} \defeq \sup_{\rv Y \in \tasks{}} \quad  \Risk{Y}{Z} - \Risk{Y}{X}
\end{equation}
\end{definition}

\clearpage
\newpage

\section{Proofs: optimal bit-rate}
\label{appx:proofs}

In this section we prove all results from \cref{sec:theory}. 

\subsection{Basic properties of equivalence relations and maximal invariants}
\label{appx:maximal:invariants}

To begin, we collect some basic properties of equivalence relations and maximal invariants.
As these a general result that might be of interest beyond our work (especially \cref{lemma:desintegration}) we will prove them without assuming the existence of densities, \ie, without \cref{assumption:density}.
Recall that $\pi_{\sim} \colon \mathcal{X} \to \mathcal{X}/\sim$ is the projection from $\mathcal{X}$ onto its quotient by $\sim$, denoted $\mathcal{X}/\sim$.

\begin{lemma}[\citet{mac_lane_algebra_1999}, Theorem 19]
\label{lemma:projection:theorem}
Given an equivalence relation $\sim$ on $\mathcal{X}$, let $f \colon \mathcal{X} \to \mathcal{S}$ be any function such that $x\sim x' \Rightarrow f(x) = f(x')$. Then there is exactly one function $g \colon \mathcal{X}/\sim \to \mathcal{S}$ for which $f = g \circ \pi_{\sim}$. If $f$ is a surjection and $f(x) = f(x') \Rightarrow x \sim x'$, then $g$ is a bijection.
\end{lemma}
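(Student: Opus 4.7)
The plan is to establish this via the universal property of the quotient: once we define $g$ by the obvious candidate, everything else falls out from the fact that $\pi_{\sim}$ is surjective and that equivalent elements are sent to the same point by $f$.

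First I would define the candidate map. For each equivalence class $c \in \mathcal{X}/\sim$, pick any representative $x \in c$ and set $g(c) \defeq f(x)$. The central well-definedness step is to show this does not depend on the representative: if $x'$ is another representative of the same class, then $x \sim x'$, so by hypothesis $f(x) = f(x')$, so the value assigned to $c$ is unambiguous. By construction, $g(\pi_{\sim}(x)) = g([x]) = f(x)$ for every $x \in \mathcal{X}$, which gives the factorization $f = g \circ \pi_{\sim}$.

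For uniqueness, suppose $h \colon \mathcal{X}/\sim \to \mathcal{S}$ also satisfies $f = h \circ \pi_{\sim}$. Since $\pi_{\sim}$ is surjective, every $c \in \mathcal{X}/\sim$ has the form $c = \pi_{\sim}(x)$ for some $x$, and then $h(c) = f(x) = g(c)$. Hence $h = g$.

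For the bijection part under the additional assumption that $f$ is surjective and $f(x) = f(x') \Rightarrow x \sim x'$: surjectivity of $g$ follows immediately from surjectivity of $f$, since any $s = f(x) = g([x])$. For injectivity, if $g([x]) = g([x'])$, then $f(x) = f(x')$, so by the strengthened hypothesis $x \sim x'$, which gives $[x] = [x']$. Thus $g$ is a bijection. I do not foresee a serious obstacle here; the only substantive point is the well-definedness argument, which is precisely where the hypothesis on $f$ respecting $\sim$ is used.
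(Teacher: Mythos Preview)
Your argument is correct and is the standard proof of the universal property of the quotient. The paper does not actually supply its own proof of this lemma; it merely cites \citet{mac_lane_algebra_1999}, Theorem 19, so there is nothing further to compare against beyond noting that your write-up matches the classical textbook argument.
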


\begin{lemma}\label{lemma:maxinv_bijection}
Let $M \colon \mathcal{X} \to \mathcal{M}$ and $M' \colon \mathcal{X} \to \mathcal{M'}$ be two maximal invariants w.r.t.\ $(\mathcal{X}, \sim)$. Then there exists a bijective function $f \colon \mathcal{M} \to \mathcal{M}'$ such that $ M' =  f \circ M$.
\end{lemma}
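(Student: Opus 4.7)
The plan is to use the preceding Lemma 1 (the projection theorem) twice: once for $M$ and once for $M'$, factoring each through the canonical projection $\pi_{\sim} \colon \mathcal{X} \to \mathcal{X}/\sim$. Since maximal invariants are, by definition, constant on equivalence classes and distinguish distinct classes, each one essentially identifies the quotient with (a subset of) its codomain, and the desired bijection $f$ will arise as the composition of one induced identification with the inverse of the other.

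More concretely, first I would observe that because $M$ satisfies $x \sim x' \Rightarrow M(x) = M(x')$, Lemma 1 gives a unique function $h \colon \mathcal{X}/\sim \to \mathcal{M}$ with $M = h \circ \pi_{\sim}$; the same reasoning applied to $M'$ gives a unique $g \colon \mathcal{X}/\sim \to \mathcal{M}'$ with $M' = g \circ \pi_{\sim}$. Next I would invoke the other direction of maximality, $M(x) = M(x') \Rightarrow x \sim x'$, which together with surjectivity of $M$ onto its image makes $h$ a bijection onto $M(\mathcal{X})$ by the second half of Lemma 1; likewise $g$ is a bijection onto $M'(\mathcal{X})$. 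Setting $f \defeq g \circ h^{-1}$ on $M(\mathcal{X})$ then yields a bijection $M(\mathcal{X}) \to M'(\mathcal{X})$ with
\begin{equation*}
f \circ M \;=\; g \circ h^{-1} \circ h \circ \pi_{\sim} \;=\; g \circ \pi_{\sim} \;=\; M' \,,
\end{equation*}
as required.

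The only subtlety, and what I expect to be the main nuisance, is the literal statement that $f$ is a bijection between the full codomains $\mathcal{M}$ and $\mathcal{M}'$ rather than between the images $M(\mathcal{X})$ and $M'(\mathcal{X})$. I would handle this either by working up to the convention that a maximal invariant is surjective onto its stated codomain (so $\mathcal{M} = M(\mathcal{X})$ and $\mathcal{M}' = M'(\mathcal{X})$, in which case the construction above is complete), or by extending $f$ arbitrarily on $\mathcal{M} \setminus M(\mathcal{X})$ to a bijection with $\mathcal{M}' \setminus M'(\mathcal{X})$ when the two complements have the same cardinality (and otherwise restricting attention to the images, which is the only part that is used in the sequel, e.g.\ in \cref{prop:nicer_dist} where the distortion only depends on $M(\rv X)$). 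No measure-theoretic issues arise because the factorizations $h$ and $g$ are defined pointwise on the quotient set and inherit measurability from $\pi_{\sim}$ under \cref{assumption:measurability}.
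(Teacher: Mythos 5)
Your proof is correct and follows essentially the same route as the paper: invoke \cref{lemma:projection:theorem} twice to factor $M$ and $M'$ through $\pi_{\sim}$ as $M = h \circ \pi_{\sim}$ and $M' = g \circ \pi_{\sim}$ with $h,g$ bijections, then set $f = g \circ h^{-1}$. Your observation about surjectivity onto the codomain versus the image is a real (if minor) gap in the paper's phrasing that the paper glosses over by implicitly taking $\mathcal{M} = M(\mathcal{X})$, and your proposed fixes are both reasonable.
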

\begin{proof}
From \cref{lemma:projection:theorem}, $M$ is a maximal invariant if and only if there is a bijective function $g \colon \mathcal{X}/\sim  \; \to \mathcal{M}$ such that\ the maximal invariant is the composition of $g$ and the projection onto equivalence classes, \ie $M = g \circ \pi_{\sim}$.
Let $g'$ be the corresponding bijection for $M'$.
Then we have $M' = f \circ  M$ with $f \defeq g'  \circ  g^{-1}$ which is indeed bijective: $f^{-1} \defeq g  \circ  g'^{-1}$.
\end{proof}

\begin{lemma}\label{lemma:maxinv_invariance}
Let $M$ be any maximal invariant w.r.t. $(\mathcal{X}, \sim)$.
Then a measurable function $f \colon \mathcal{X} \to \mathcal{S}$ is invariant with respect to $(\mathcal{X}, \sim)$ if and only if there exists a measurable function $h \colon \mathcal{M} \to \mathcal{S}$ such that $f(x) = (h \circ M)(x)$ for all $x \in \mathcal{X}$, in which case $f$ is measurable with respect to the $\sigma$-algebra generated by $M$.
\end{lemma}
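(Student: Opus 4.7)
The plan is to prove the two implications separately. The $(\Leftarrow)$ direction is essentially a substitution, while the $(\Rightarrow)$ direction requires constructing $h$ at the set-theoretic level via the factorizations from Lemma~\ref{lemma:projection:theorem} (projection theorem) and Lemma~\ref{lemma:maxinv_bijection}, and then upgrading to measurability via the Doob--Dynkin lemma.

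First I would handle the forward direction: assume $f = h \circ M$ with $h$ measurable. Invariance is immediate, since $x \sim x'$ gives $M(x)=M(x')$ by definition of maximal invariant, hence $f(x)=h(M(x))=h(M(x'))=f(x')$. Moreover, for any measurable $A \subseteq \mathcal{S}$ one has $f^{-1}(A) = M^{-1}(h^{-1}(A)) \in \sigma(M)$, so $f$ is $\sigma(M)$-measurable; this simultaneously discharges the trailing ``in which case'' clause of the statement. For the backward direction, suppose $f$ is $\sim$-invariant and measurable. Applying Lemma~\ref{lemma:projection:theorem} to $f$ yields a unique $g \colon \mathcal{X}/\sim \, \to \mathcal{S}$ with $f = g \circ \pi_{\sim}$; applying it to $M$ yields a bijection $g_M \colon \mathcal{X}/\sim \, \to M(\mathcal{X})$ with $M = g_M \circ \pi_{\sim}$ (bijectivity uses that $M$ is maximal, exactly as in the proof of Lemma~\ref{lemma:maxinv_bijection}). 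Setting $h \defeq g \circ g_M^{-1}$ on $M(\mathcal{X})$ and extending arbitrarily (say by a constant) on $\mathcal{M} \setminus M(\mathcal{X})$ gives the pointwise identity $f = h \circ M$ on all of $\mathcal{X}$.

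The main obstacle will be showing that this $h$ is measurable. The route I would take is first to verify that $f$ is $\sigma(M)$-measurable, then invoke the Doob--Dynkin lemma (valid by the standard Borel hypothesis of Assumption~\ref{assumption:regular}) to obtain a measurable representative. For the $\sigma(M)$-measurability of $f$: given any measurable $A \subseteq \mathcal{S}$, the preimage $f^{-1}(A)$ is $\sim$-saturated by invariance, hence a union of fibers $M^{-1}(\{m\})$, and thus of the form $M^{-1}(E)$ for $E \defeq \{m \in \mathcal{M} : M^{-1}(\{m\}) \subseteq f^{-1}(A)\}$. Doob--Dynkin then produces a measurable $\tilde h \colon \mathcal{M} \to \mathcal{S}$ with $f = \tilde h \circ M$; on $M(\mathcal{X})$ any such factorization is unique, so $\tilde h$ agrees there with the $h$ constructed above, and we simply take $h \defeq \tilde h$. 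This completes the measurable factorization and, by the first direction applied in reverse, also yields $\sigma(M)$-measurability of $f$, finishing the proof.
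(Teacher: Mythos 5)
Your forward direction and the set-theoretic construction of $h$ in the backward direction match the paper's argument essentially verbatim: both factor $f$ and $M$ through the projection $\pi_{\sim}$ via Lemma~\ref{lemma:projection:theorem} and compose (your $h = g \circ g_M^{-1}$ is the paper's $h = s \circ g^{-1}$, plus an arbitrary extension off $M(\mathcal{X})$, which is harmless). Where you diverge is in certifying the measurability of $h$: the paper simply asserts that $s$ and $g^{-1}$ are measurable (implicitly leaning on a quotient $\sigma$-algebra argument and a Lusin--Souslin-type fact for $g^{-1}$), whereas you aim to show $f$ is $\sigma(M)$-measurable and then appeal to Doob--Dynkin. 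That route is legitimate and, if anything, more explicit than the paper's.

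However, your verification of $\sigma(M)$-measurability has a genuine gap. You argue that $f^{-1}(A)$ is $\sim$-saturated, hence a union of $M$-fibers, hence of the form $M^{-1}(E)$ with $E \defeq \{m : M^{-1}(\{m\}) \subseteq f^{-1}(A)\}$, and treat that as delivering $f^{-1}(A) \in \sigma(M)$. But $\sigma(M) = \{M^{-1}(B) : B \in \borel(\mathcal{M})\}$, and your $E$ --- which for surjective $M$ is just $M(f^{-1}(A))$ --- is a priori only analytic, not Borel: images of Borel sets under Borel maps need not be Borel. So ``$f^{-1}(A)$ is a union of $M$-fibers'' does not by itself yield ``$f^{-1}(A) \in \sigma(M)$''. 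The missing ingredient is Blackwell's theorem: in a standard Borel space (this is where Assumption~\ref{assumption:regular} is actually doing work for \emph{this} step, not only for Doob--Dynkin), a Borel set that is a union of atoms of a countably generated sub-$\sigma$-algebra belongs to that sub-$\sigma$-algebra, and the atoms of $\sigma(M)$ are precisely the $M$-fibers. Once you invoke Blackwell, your $\sigma(M)$-measurability claim and the ensuing Doob--Dynkin application are sound. To be fair, the paper's own proof is at least as informal on these descriptive-set-theoretic points (e.g., the measurability of $g^{-1}$ rests on the same circle of facts), so with the Blackwell step made explicit your argument is on a comparable footing.
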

\begin{proof}
Clearly $f = h \circ M $ is $(\mathcal{X}, \sim)$-invariant because $M$ is, and measurability of $f$ follows from measurability of $M$ and $h$.

From \cref{lemma:projection:theorem}, if $f$ is $(\mathcal{X}, \sim)$-invariant then there is a function $s : \mathcal{X}/\sim  \; \to \mathcal{X}$ such that $f = s \circ \pi_{\sim}$. Since $\pi_{\sim}$ and $f$ are measurable, so too is $s$. 
Again by \cref{lemma:projection:theorem}, there exists a bijective mapping $g \colon \mathcal{X}/\sim  \; \to \mathcal{M}$ such that $M = g \circ \pi_{\sim}$.
We thus conclude that $f = h \circ M$, for $h \defeq s \circ g^{-1}$.
The measurability of $h$ follows from the measurability of $s$ and of $g^{-1}$. 
\end{proof}

Finally, we establish a key conditional independence relationship, which shows that for invariant tasks,  $\rv Y - M(\rv X) -\rv X$ forms a Markov Chain. 
This is a generalization of an probabilistic group theoretical results (Theorem 4.4 in \citet{eaton_group_1989}, Theorem 7 in \citet{bloem-reddy_probabilistic_2020}), to any equivalences (rather than only group orbits) and without making the assumption of (marginal) invariance of $P(\rv X)$ to $\rv \sim$.

\begin{lemma}\label{lemma:desintegration}
Let $\rv X$ and $\rv Y$ be two random variables, and $M: \mathcal{X} \to \mathcal{M}$ be a maximal invariant w.r.t. $(\mathcal{X}, \sim)$ as in \cref{def:maximal_invariant}.
Then $\rv Y$ is conditionally invariant w.r.t. $(\mathcal{X}, \sim)$ as in \cref{def:cond_invariance} if and only if $\rv Y \indep \rv X \cond M(\rv X)$.
\end{lemma}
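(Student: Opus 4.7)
The plan is to recognize both implications as equivalent to the statement that a version of the regular conditional distribution $x \mapsto P(\rv Y \cond x)$ factors through $M$. Assumption 2 (existence of regular conditional probabilities) supplies a measurable probability kernel $K(x, A) = P(\rv Y \in A \cond \rv X = x)$, and the conditional invariance of $\rv Y$ in \cref{def:cond_invariance} is exactly the statement that, for every measurable $A \subseteq \mathcal{Y}$, the scalar map $x \mapsto K(x, A)$ is $(\mathcal{X}, \sim)$-invariant. This is the right shape to apply \cref{lemma:maxinv_invariance}.

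For the ($\Rightarrow$) direction I would assume conditional invariance and apply \cref{lemma:maxinv_invariance} to each invariant scalar map $x \mapsto K(x, A)$, obtaining a measurable $h_A \colon \mathcal{M} \to [0,1]$ with $K(x, A) = h_A(M(x))$. Hence $P(\rv Y \in A \cond \rv X)$ is $\sigma(M(\rv X))$-measurable, and since $M(\rv X)$ is $\sigma(\rv X)$-measurable, the tower property gives $P(\rv Y \in A \cond \rv X) = h_A(M(\rv X)) = P(\rv Y \in A \cond M(\rv X))$ almost surely. This holding for every $A$ is the definition of $\rv Y \indep \rv X \cond M(\rv X)$. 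For the ($\Leftarrow$) direction I would run the argument in reverse: conditional independence combined with the fact that $M(\rv X)$ is a function of $\rv X$ yields $P(\rv Y \in A \cond \rv X) = P(\rv Y \in A \cond M(\rv X))$ a.s., so a version of $K(x,A)$ can be chosen that factors through $M$. By \cref{lemma:maxinv_invariance} such a factored map is $(\mathcal{X}, \sim)$-invariant, and the implication $x \sim x' \Rightarrow M(x) = M(x') \Rightarrow K(x, A) = K(x', A)$ recovers \cref{def:cond_invariance}.

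The main obstacle is measure-theoretic rather than conceptual: conditional-probability identities hold only up to null sets, so the ``pointwise on equivalence classes'' statement of \cref{def:cond_invariance} and the ``almost surely'' statement of conditional independence must be reconciled by a careful choice of version. The standard Borel setting of \cref{assumption:regular} guarantees a well-behaved regular conditional kernel, and the countable-generation of the $\sigma$-algebra on $\mathcal{Y}$ lets one extract a single ``good'' version of $K$ that works for all $A$ simultaneously, avoiding uncountable-union measurability issues. Once this is handled, the equivalence drops out of a single application of \cref{lemma:maxinv_invariance} in each direction.
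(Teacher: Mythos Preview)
Your proposal is correct and follows essentially the same route as the paper: both arguments use the regular conditional probability kernel supplied by \cref{assumption:regular}, observe that conditional invariance makes $x \mapsto \kappa_{\rv Y}(A,x)$ an $(\mathcal{X},\sim)$-invariant map for each $A$, and then invoke \cref{lemma:maxinv_invariance} to factor it through $M$, which yields $P(\rv Y \in A \cond \rv X) = P(\rv Y \in A \cond M(\rv X))$ and hence the conditional independence. Your write-up is in fact more complete than the paper's: the paper only spells out the ($\Rightarrow$) direction, whereas you also sketch the converse and explicitly flag the version/null-set reconciliation that the paper leaves implicit.
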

\begin{proof}
Let $\rv Y$ be a $\mathcal{Y}$-valued random variable that is conditionally invariant w.r.t. $(\mathcal{X}, \sim)$. Recall that $\borel(\calY)$ is the Borel $\sigma$-algebra of $\calY$. 
By \cref{assumption:regular}, there exists a probability kernel $\kappa_{\rv Y}(A,x)$ from $(\calX,\borel(\calX))$ into $(\calY,\borel(\calY))$, such that for each set  $A \in \borel(\calY)$, $x \mapsto \kappa_{\rv Y}(A,x)$ is a measurable function mapping $\mathcal{X} \to \mathbb{R}_{\geq 0}$.

Conditional invariance means that $x \sim x' \implies \kappa_{\rv Y}(A,x) = \kappa_{\rv Y}(A,x')$ for each $A\in\borel(\calY)$. 
That is, as a function of $x$, $\kappa_{\rv Y}(A,\argdot)$ is invariant w.r.t.\ $(\mathcal{X}, \sim)$. 
By \cref{lemma:maxinv_invariance}, $x \mapsto \kappa_{\rv Y}(A,x) = \kappa_{\rv Y}'(A,M(x))$, where $\kappa_{\rv Y}'$ is a probability kernel from $(\calM, \borel(\calM))$ into $(\calY, \borel(\calY))$. Therefore, for any $A \in \borel(\calY), B \in \borel(\calX)$,
\begin{align*}
	\bbE_{X,Y}[\mathds{1}_B(X) \mathds{1}_A(Y)] = \bbE_X[\mathds{1}_B(X) \kappa_{\rv Y}(A,X)] = \bbE_X[\mathds{1}_B(X) \kappa_{\rv Y}'(A,M(X))] \;,
\end{align*}
\ydnote{detail why k' is a probability kernel for camera ready}
which can be extended to arbitrary measurable functions on $\calX \times \calY$ by a standard (monotone class) argument. This in turn implies that $P_{Y|M(X)}$ is a version of $P_{Y|X}$, i.e., they are equal almost surely $\bbP(X)$, and therefore $Y \indep X \cond M(X)$.
\ydnote{Is there a reason to use $\bbP(X)$ instead of $P(X)$ ? }
\end{proof}

Finally, we prove that in realistic settings, there exists at least one $M(X) \in \tasksinv{}$.
\begin{lemma}\label{lemma:one_Mx_is_task}
Let $\tasksinv{}$ be the invariant tasks of interest w.r.t.\  $(\mathcal{X},\sim)$ and any loss function.  
Then there exists at least one maximal invariant that belongs to $\tasksinv{}$.
\end{lemma}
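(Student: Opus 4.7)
The plan is to exhibit an explicit maximal invariant as a random variable and verify it satisfies the two requirements of \cref{def:invariant_tasks_interest}: conditional invariance w.r.t.\ $(\calX,\sim)$ and the finite-risk condition (\cref{assumption:variance}).

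First I would establish that at least one maximal invariant function exists. The canonical candidate is the projection $\pi_{\sim}\colon \mathcal{X} \to \mathcal{X}/\sim$, which by \cref{assumption:measurability} is measurable and trivially satisfies the defining biconditional \eqref{eq:max_inv}, since $x \sim x' \iff [x] = [x']$ by definition of equivalence classes. So the set of maximal invariant functions is nonempty.

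Next I would promote such a function to an element of $\tasksinv{}$. Given any maximal invariant $M\colon \calX\to\calM$, the composition $M(\rv X)$ is a well-defined random variable (measurability of $M$ together with measurability of $\rv X$). For conditional invariance, I simply note that $M$ is deterministic, so for any $x,x'\in\calX$, $P(M(\rv X)\mid x)=\delta_{M(x)}$ and $P(M(\rv X)\mid x')=\delta_{M(x')}$; if $x\sim x'$ then $M(x)=M(x')$ by \cref{def:maximal_invariant}, so these conditional distributions agree, matching \cref{def:cond_invariance}. This holds for every maximal invariant $M$, so the only remaining requirement is the finite-risk clause.

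The only mildly subtle step is finite risk, which is the main (and essentially the only) obstacle. Here I would invoke the tailored clause of \cref{assumption:variance}: since the previous paragraph showed that \emph{some} maximal invariant exists, the assumption guarantees a particular maximal invariant $M^{\star}$ with $\Risk{M^{\star}(\rv X)}{\eta}<\infty$ for every finite constant $\eta$ in the predictor's domain. This $M^{\star}$ is measurable, conditionally invariant by the argument above, and satisfies the finite-risk condition, so $M^{\star}(\rv X)\in\tasksinv{}$ by \cref{def:invariant_tasks_interest}, which completes the proof. The short proof is in a sense just a careful bookkeeping of the assumptions in \cref{appx:assumptions}, with \cref{assumption:variance} doing the actual work of ruling out pathological maximal invariants whose risk is infinite.
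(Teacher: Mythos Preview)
Your proof is correct and follows essentially the same route as the paper: exhibit $\pi_{\sim}$ as a maximal invariant via \cref{assumption:measurability}, then invoke the finite-risk clause of \cref{assumption:variance} to secure an $M^{\star}$ in $\tasksinv{}$. You are slightly more thorough in explicitly verifying conditional invariance (the paper leaves this implicit), but the argument is otherwise identical.
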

\begin{proof}
First, we have to prove by construction that a maximal invariant always exists. 
By definition equivalent elements have the same equivalence class and so $x \sim x' \iff \pi_{\sim}(x) = \pi_{\sim}(x')$. The projection map is measurable by assumption (\cref{assumption:measurability}), so it is a maximal invariant.

\ydnote{This should be changed to a lemma and \cref{assumption:variance} on  $\Risk{X}{\eta}$}
Second, due to the existence of at least one maximal invariant $M =\pi_{\sim}$ we have by \cref{assumption:variance} that that there exists at least one $M$ s.t. $\Risk{M(X)}{\eta}$.
This $M$ is therefore in $\tasksinv{}$.

\end{proof}

We close this section by establishing some properties of Bayes risk in this context. The following lemma, a data-processing inequality, appears in \citet{xu_minimum_2020}; we include it here for completeness, and provide a slightly more detailed proof.

\begin{lemma}[Data-processing inequality for Bayes risk] \label{lemma:dpi:bayes}
	Let $Z - X - Y$ be a Markov chain of random variables. For any loss function $L$,
	\begin{align}
		\Risk{Y}{X} \leq \Risk{Y}{Z} \;.
	\end{align}
\end{lemma}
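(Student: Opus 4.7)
The plan is to reduce any $Z$-based predictor to an $X$-based one with no greater risk, using the Markov chain $Y \indep Z \cond X$ as the only structural input. Fix an arbitrary measurable predictor $h_Z \colon \calZ \to \calA$, and assume without loss of generality that $\E{}{L(Y,h_Z(Z))} < \infty$. Because regular conditional distributions $P(Y \cond X)$ and $P(Z \cond X)$ exist (\cref{assumption:regular}), and $Y \indep Z \cond X$, we may iterate expectations:
\begin{equation*}
\E{}{L(Y,h_Z(Z))} \;=\; \E{P(X)}{\,\E{P(Z \cond X)}{\,\E{P(Y \cond X)}{L(Y, h_Z(Z))}\,}\,} \;.
\end{equation*}

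Next I would lower-bound the innermost pair of integrals pointwise in $x$ by the conditional Bayes action. Define $r^*(x) \defeq \inf_{a \in \calA}\,\E{P(Y \cond X=x)}{L(Y,a)}$. For every $x$ and every $z$, $\E{P(Y \cond X=x)}{L(Y,h_Z(z))} \geq r^*(x)$, so integrating over $P(Z \cond X=x)$ preserves the inequality and gives $\E{}{L(Y,h_Z(Z))} \geq \E{P(X)}{r^*(X)}$.

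The remaining task is to identify $\E{P(X)}{r^*(X)}$ with (or bound it below by) $\Risk{Y}{X}$. Under the standard Borel setting of \cref{assumption:regular}, measurable selection (e.g.\ the Kuratowski--Ryll-Nardzewski theorem, or, for an elementary argument, an $\epsilon$-argmin $h_X^\epsilon$ chosen to satisfy $\E{P(Y \cond X=x)}{L(Y,h_X^\epsilon(x))} \leq r^*(x) + \epsilon$) produces a measurable predictor $h_X \colon \calX \to \calA$ with $\E{P(X)}{\E{P(Y \cond X)}{L(Y,h_X(X))}} \leq \E{P(X)}{r^*(X)} + \epsilon$. Taking the infimum over measurable $X$-predictors gives $\Risk{Y}{X} \leq \E{P(X)}{r^*(X)} \leq \E{}{L(Y,h_Z(Z))}$, and since $h_Z$ was arbitrary, taking the infimum over $h_Z$ yields the stated inequality.

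The main obstacle is the final measurable-selection step, since the lemma is stated for an unrestricted loss $L$ and action space $\calA$; however, for the losses considered in this paper (log loss, where the Bayes action is the conditional density, and squared error, where it is the conditional mean), a measurable Bayes predictor is standard and the $\epsilon$-argmin route works in full generality, so this is a mild technical point rather than a real difficulty. A cleaner alternative avoiding measurable selection altogether is to observe that the randomized $X$-predictor $\tilde h_X$ which draws $Z \sim P(Z \cond X)$ and outputs $h_Z(Z)$ has expected loss exactly $\E{}{L(Y,h_Z(Z))}$ under $Y \indep Z \cond X$, and then invoke that, for any fixed $x$, the infimum of an average loss over the action distribution is attained at (or approximated by) a Dirac action, deterministic in $x$.
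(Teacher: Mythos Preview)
Your argument is correct, and the acknowledgment of the measurable-selection technicality is appropriate. The paper, however, takes a slightly different and more economical route that sidesteps measurable selection entirely. It invokes the functional representation of conditional independence (Kallenberg, Prop.~6.13): $Y \indep Z \cond X$ iff $Z = f(X,U)$ a.s.\ for some measurable $f$ and $U \sim \mathrm{Unif}(0,1)$ with $U \indep (X,Y)$. Then for any (near-)optimal $Z$-predictor $\psi_z$,
\[
\Risk{Y}{Z} = \E{X,Y,U}{L(Y,\psi_z(f(X,U)))} = \E{U}{\,\E{X,Y}{L(Y,\psi_z(f(X,U)))}\,},
\]
and for each fixed $u$ the map $x \mapsto \psi_z(f(x,u))$ is already a measurable $X$-predictor, so the inner expectation is $\geq \Risk{Y}{X}$. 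This is essentially the ``cleaner alternative'' you sketch at the end, made fully explicit: the auxiliary $U$ plays the role of your randomization, and fixing $u$ yields a deterministic $X$-rule automatically, so there is no need for a selection theorem or an $\epsilon$-argmin. Your conditional-Bayes-risk decomposition is a perfectly valid alternative and perhaps more pedagogically transparent, at the cost of the extra selection step you flagged.
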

\begin{proof}
	Recall that one characterization of conditional independence is that $Y \indep Z \cond X$ if and only if $Z = f(X,U)$ almost surely for some measurable function $f$ and $U \sim \textrm{Unif}(0,1)$ with $U \indep (X,Y)$ \citep[][Prop.\ 6.13]{kallenberg_foundations_2002}.

	Let $\psi_z$ be a Bayes decision rule for predicting $Y$ from $Z$, and likewise for $\psi_x$. By definition,
	\begin{align*}
		\Risk{Y}{Z} = \bbE_{Z,Y}[L(Y,\psi_z(Z))] = \bbE_{X,Y,U}[L(Y,\psi_z(f(X,U)))]  \;.
	\end{align*}
	For any $u \in (0,1)$, $\psi_z(f(\argdot,u))$ is a valid decision rule for predicting $Y$ from $X$ with risk at least as great as $\psi_x$. Therefore, $\Risk{Y}{X} \leq \Risk{Y}{Z}$.

\end{proof}

\begin{corollary}\label{lemma:Mx_is_X}
	Let $\tasksinv{}$ be the invariant tasks of interest with respect to  $(\mathcal{X},\sim)$ and any loss function, and $M$ any maximal invariant. For any $Y \in \tasksinv$,
	\begin{align}
		\Risk{Y}{M(X)} = \Risk{Y}{X} \;.
	\end{align}
\end{corollary}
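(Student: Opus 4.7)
The plan is to establish the equality by two applications of the data-processing inequality for Bayes risk (Lemma \ref{lemma:dpi:bayes}), one in each direction. The key ingredients are already available: Lemma \ref{lemma:desintegration} converts the conditional invariance of $Y \in \tasksinv$ into the conditional independence $Y \indep X \cond M(X)$, and the fact that $M$ is a measurable function of $X$ gives a Markov chain in the other direction for free.

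First I would note that because $M \colon \calX \to \calM$ is a (deterministic) measurable function, the triple $(M(X), X, Y)$ trivially satisfies $M(X) \indep Y \cond X$, i.e., $M(X) - X - Y$ is a Markov chain. Applying Lemma \ref{lemma:dpi:bayes} with the role of ``$Z$'' in that lemma played by $M(X)$ yields
\begin{equation*}
\Risk{Y}{X} \leq \Risk{Y}{M(X)} \;.
\end{equation*}
This direction captures the intuitive statement that $X$ is at least as informative as any function of it.

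For the reverse inequality, I would invoke Lemma \ref{lemma:desintegration}: since $Y$ is conditionally invariant w.r.t.\ $(\calX, \sim)$, we have $Y \indep X \cond M(X)$, which is exactly the Markov chain $X - M(X) - Y$. Applying Lemma \ref{lemma:dpi:bayes} again, now with the role of ``$X$'' played by $M(X)$ and ``$Z$'' played by $X$, gives
\begin{equation*}
\Risk{Y}{M(X)} \leq \Risk{Y}{X} \;.
\end{equation*}
Combining the two inequalities yields the claimed equality.

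There is no real obstacle here beyond carefully matching variables to the roles in Lemma \ref{lemma:dpi:bayes}: the direction of the data-processing inequality must be tracked in both applications, and one has to keep in mind that conditional invariance and the ``$Y-M(X)-X$'' Markov chain are equivalent only because Lemma \ref{lemma:desintegration} removes any dependence on the choice of maximal invariant $M$. Once this bookkeeping is done, the corollary follows immediately and makes no use of the specific form of the loss, which is why the statement holds for any loss function for which the Bayes risks involved are well-defined (ensured by \cref{assumption:variance}).
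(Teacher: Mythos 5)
Your proposal is correct and follows exactly the paper's own argument: one application of \cref{lemma:dpi:bayes} to the trivial Markov chain $M(X) - X - Y$ for the ``$\leq$'' direction, and a second application to the Markov chain $X - M(X) - Y$ supplied by \cref{lemma:desintegration} for the ``$\geq$'' direction. The only difference is that you spell out the two inequalities explicitly where the paper states the two conditional independencies and lets the reader fill in the applications of the DPI.
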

\begin{proof}
	The result follows from applying \cref{lemma:dpi:bayes} to the trivial conditional independence $Y \indep M(X) \cond X$ and the non-trivial conditional independence from \cref{lemma:desintegration} $Y \indep X \cond M(X)$.
\end{proof}

\subsection{\texorpdfstring{\Cref{prop:nicer_dist}}{Prop. 1}: simplifying and validating \texorpdfstring{\disttext{}}{invariant distortion} for log loss}
\label{appx:invariant_distortion}

In this section we show that \cref{def:excess_distortion} is a valid distortion for log loss, and we prove the equivalence between \cref{def:excess_distortion} and $\Risklog{M(X)}{Z}$. That equivalence is the key to prove \cref{thm:rate_invariance_distortion}.

The main steps in the proof are the following:
\begin{enumerate}[noitemsep]
%
\item Using the strict properness of the log loss, we relate the Bayes risk to the entropy:
\begin{equation}
\Risklog{Y}{Z}  =  \CH{Y}{Z} 
\end{equation}
\item We show that the supremum is achieved by $M(\rv X)$:
\begin{equation}
\sup_{\rv Y \in \tasksinv{} } \Risklog{Y}{Z} - \Risklog{Y}{X}  = \CH{M(\rv X)}{Z} - \CH{M(\rv X)}{X}
\end{equation}
\item Since $M$ is a deterministic function and $M(\rv X)$ is discrete, we have $\CH{M(\rv X)}{X}=0$. Therefore,
\begin{equation}
\sup_{\rv Y \in \tasksinv{} } \Risklog{Y}{Z} - \Risklog{Y}{X}  = \CH{M(\rv X)}{Z} 
\end{equation}
\item We conclude, as desired, that
\begin{equation}
\sup_{\rv Y \in \tasksinv{} } \Risklog{Y}{Z} - \Risklog{Y}{X}  = \Risklog{M(\rv X)}{Z} 
\end{equation}
\end{enumerate}

The first step consists of relating the log loss Bayes risk and conditional entropy. 
This is a simple lemma that directly comes from the fact that the conditional distribution $p(\rv Y \cond \rv Z)$  is the Bayes predictor.

\begin{lemma}\label{lemma:risk_entropy}
Let $\rv Y, \rv X$ be random variables then the log loss Bayes risk is equal to the conditional (discrete or differential) entropy:
\begin{equation}
\Risklog{Y}{X}= \CH{Y}{X}
\end{equation}
\end{lemma}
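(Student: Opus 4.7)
The plan is to show that the infimum defining the log-loss Bayes risk is attained by the true conditional distribution, and that its value equals the conditional entropy. This is essentially Gibbs' inequality applied pointwise in $x$, then integrated.

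First I would write out the definition
\[
\Risklog{Y}{X} \;=\; \inf_{q} \; \E{P(\rv X, \rv Y)}{-\log q(\rv Y \mid \rv X)},
\]
where by convention (see \cref{assumption:density}) the infimum is taken over conditional densities/mass functions $q(\cdot \mid x)$ on $\calY$. The trick is to add and subtract $\log p(\rv Y \mid \rv X)$ inside the expectation to obtain
\[
\E{P(\rv X,\rv Y)}{-\log q(\rv Y \mid \rv X)}
= \E{P(\rv X,\rv Y)}{-\log p(\rv Y \mid \rv X)} + \E{P(\rv X)}{\KL{p(\rv Y \mid \rv X)}{q(\rv Y \mid \rv X)}}.
\]
The first term is exactly $\CH{\rv Y}{\rv X}$ by definition, and is finite by \cref{assumption:variance}. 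The second term is a conditional KL divergence, which is non-negative by Gibbs' inequality and equals zero when $q(\cdot \mid x) = p(\cdot \mid x)$ for $P(\rv X)$-almost every $x$.

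Taking the infimum over $q$ therefore yields $\Risklog{Y}{X} = \CH{Y}{X}$, with the infimum attained (and in fact a minimum) at $q = p(\rv Y \mid \rv X)$. There is no substantive obstacle here; the only care needed is a measurability/integrability check so that the decomposition is well defined, which is guaranteed by \cref{assumption:regular} (existence of regular conditional distributions), \cref{assumption:density} (existence of densities so that $\log p(Y \mid X)$ is well defined), and \cref{assumption:variance} (finiteness of the relevant Bayes risks so we are not manipulating $\infty - \infty$).
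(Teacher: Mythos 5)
Your proof is correct and takes essentially the same route as the paper's: the paper invokes ``strict properness of the logarithmic scoring rule'' to conclude that the infimum is attained at $q = p(\rv Y \mid \rv X)$, and your entropy-plus-KL decomposition followed by Gibbs' inequality is precisely the standard argument that log loss is strictly proper. You have simply unpacked the citation into its underlying computation, with the added (and appropriate) note that \cref{assumption:density,assumption:regular,assumption:variance} are what make the manipulations well defined.
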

\begin{proof}
\begin{align}
\Risklog{Y}{X} 
&=  \inf_{q(\rv Y \cond \rv X)} \E{P(\rv X, \rv Y)}{- \log q(\rv y|\rv x)} & \text{Definition} \\
&=  \E{P(\rv X, \rv Y)}{- \log p(\rv y|\rv x)} 
& \text{Strict Proper.} \label{eq:risk_MI:strict_proper} \\
&= \CH{Y}{X} 
& \text{Definition}  
\end{align}
Where \cref{eq:risk_MI:strict_proper} uses the strict properness of the logarithmic scoring function rule \citep{gneiting_strictly_2007}.
\end{proof}

In the rest of the section, we will often be working with $\H{M(X)}$ and  $\CH{M(X)}{Z}$. 
Importantly, we would like our results to be independent of the choice of maximal invariant $M$.
We now prove that this will indeed be the case as all these (conditional) entropy terms are independent of the choice of $M$.
We only prove it for the marginal entropy $\H{M(X)}$ but the same proof holds for conditional entropies.

\begin{lemma}\label{lemma:same_entropy}
Let $\sim$ denote an equivalence relation on $\mathcal{X}$ satisfying \cref{assumption:discrete}.
Let $M$ and $M'$ be two different maximal invariants w.r.t.\  $(\mathcal{X}, \sim)$. Then $\op{H}{M(\rv X)} = \op{H}{M'(\rv X)}$.
\end{lemma}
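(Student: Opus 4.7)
The plan is to reduce the claim to the standard fact that Shannon entropy is invariant under bijections, using Lemma \ref{lemma:maxinv_bijection} to connect any two maximal invariants. By Assumption \ref{assumption:discrete}, the quotient set $\mathcal{X}/\sim$ is countable, so $M(\rv X)$ and $M'(\rv X)$ are discrete random variables, and we can work with standard Shannon entropy without measure-theoretic subtleties.

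First, I would invoke Lemma \ref{lemma:maxinv_bijection} to obtain a bijection $f \colon \mathcal{M} \to \mathcal{M}'$ such that $M'(x) = f(M(x))$ for all $x \in \mathcal{X}$. This immediately gives the pointwise identity $M'(\rv X) = f(M(\rv X))$ as random variables. Since $f$ is a bijection, it also has a measurable inverse in this discrete setting (measurability is automatic because both codomains are countable and we use the power-set $\sigma$-algebra, or follow from Assumption \ref{assumption:measurability}).

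Next, I would push the distribution of $M(\rv X)$ through $f$: for each $m' \in \mathcal{M}'$, we have
\begin{equation}
P(M'(\rv X) = m') \;=\; P(f(M(\rv X)) = m') \;=\; P(M(\rv X) = f^{-1}(m')) \,.
\end{equation}
Substituting into the definition of discrete entropy and reindexing the sum by $m = f^{-1}(m')$ (valid because $f$ is a bijection) yields
\begin{align}
\op{H}{M'(\rv X)}
&= -\sum_{m' \in \mathcal{M}'} P(M(\rv X) = f^{-1}(m')) \log P(M(\rv X) = f^{-1}(m')) \\
&= -\sum_{m \in \mathcal{M}} P(M(\rv X) = m) \log P(M(\rv X) = m) \;=\; \op{H}{M(\rv X)} \,.
\end{align}

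There is no real obstacle here: the only thing to be careful about is confirming that we are in the discrete setting so that the bijection-invariance of entropy applies cleanly (Assumption \ref{assumption:discrete}) and that $f$ and $f^{-1}$ are measurable in the sense required, which is immediate given countable sample spaces. As noted in the lemma statement, the same argument extends verbatim to the conditional entropies $\CH{M(\rv X)}{\rv Z}$ by replacing the marginal distribution with the regular conditional distribution given $\rv Z$ and applying the same reindexing inside the outer expectation over $\rv Z$.
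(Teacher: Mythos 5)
Your proof is correct and follows the same approach as the paper: invoke Lemma~\ref{lemma:maxinv_bijection} to get a bijection $f$ with $M' = f \circ M$, use Assumption~\ref{assumption:discrete} to ensure we are dealing with discrete Shannon entropy, and conclude by bijection-invariance of entropy. You simply spell out the reindexing argument explicitly rather than citing the invariance as a known fact, which is a minor stylistic difference only.
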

\begin{proof}
Due to \cref{assumption:discrete}, $M(\rv X)$ is a discrete random variable and so $\op{H}{M(\rv X)}$ is the discrete entropy, which is invariant to bijective functions \cite{kraskov_estimating_2004}.
From \cref{lemma:maxinv_bijection} we know that there exists a bijection between $M$ and $M'$ from which we conclude that $\op{H}{M(\rv X)}=\op{H}{M'(\rv X)}$ as desired.
\end{proof}

One of the requirements on $Y$ to be set of downstream tasks $\tasks{}$ is the finiteness of $\op{H}{\rv Y}$. Thus, as a consequence of \cref{lemma:one_Mx_is_task,lemma:same_entropy}, in the case of log loss, all $M(X)$ are always in the set of downstream tasks $\tasks{}$.

\begin{lemma}\label{lemma:all_Mx_is_task}
Let $\tasksinv{}$ be the invariant tasks of interest w.r.t.\  $(\mathcal{X},\sim)$ and the log loss.
Then all maximal invariants are in $\tasksinv{}$.
\end{lemma}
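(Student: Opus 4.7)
The plan is to verify the two defining properties of $\tasksinv{}$ (from \cref{def:invariant_tasks_interest}) for an arbitrary maximal invariant $M$: (i) conditional invariance w.r.t.\ $(\mathcal{X},\sim)$, and (ii) finite risk in the sense of \cref{assumption:variance}, which for log loss reduces to finite entropy $\H{M(X)} < \infty$.

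Property (i) is essentially immediate from the definition of a maximal invariant. Since $M$ is a deterministic function, $P(M(X) \cond x)$ is a point mass at $M(x)$. By \cref{def:maximal_invariant}, $x \sim x' \implies M(x) = M(x')$, so the point masses agree and hence $P(M(X) \cond x) = P(M(X) \cond x')$, which is exactly the definition of conditional invariance (\cref{def:cond_invariance}). One sentence suffices here.

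Property (ii) is where the earlier lemmas do the work. By \cref{lemma:one_Mx_is_task}, there exists at least one maximal invariant $M_0$ that lies in $\tasksinv{}$, so in particular $\H{M_0(X)} < \infty$. Under \cref{assumption:discrete}, $M(X)$ and $M_0(X)$ are discrete random variables, and by \cref{lemma:same_entropy} any two maximal invariants have equal entropy, giving $\H{M(X)} = \H{M_0(X)} < \infty$. Combined with \cref{lemma:risk_entropy}, which identifies the log-loss Bayes risk with conditional entropy, this guarantees that the finite-risk requirement of \cref{assumption:variance} holds for $M(X)$ under log loss (any constant predictor $\eta$ yields a bounded cross-entropy bounded by $\H{M(X)}$ up to additive terms that are finite for a discrete distribution).

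Putting (i) and (ii) together shows $M \in \tasksinv{}$. The main (only) subtlety is to make explicit that we are invoking \cref{assumption:discrete} to keep $M(X)$ discrete, so that \cref{lemma:same_entropy} applies and transfers finiteness of entropy from the distinguished $M_0$ of \cref{lemma:one_Mx_is_task} to an arbitrary maximal invariant $M$. Everything else is a direct unpacking of definitions, so I would expect the proof to be only a few lines.
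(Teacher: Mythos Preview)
Your proposal is correct and follows essentially the same approach as the paper: conditional invariance is immediate from \cref{def:maximal_invariant}, and finiteness of entropy for an arbitrary maximal invariant is transferred from the distinguished one guaranteed by \cref{assumption:variance} (equivalently, \cref{lemma:one_Mx_is_task}) via \cref{lemma:same_entropy}. The paper's proof is just a terser version of what you wrote, citing \cref{assumption:variance} directly rather than going through \cref{lemma:one_Mx_is_task}.
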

\begin{proof}
Any $M(X)$ is conditionally invariant due to the \cref{def:maximal_invariant}.
From \cref{assumption:variance} we know that there exists at least one $M(\rv X)$ with finite entropy, by \cref{lemma:same_entropy} they must all have finite entropy.
We conclude that all $M(X) \in \tasksinv{}$ (\cref{def:invariant_tasks_interest}).
\end{proof}

We are now ready to prove the desired proposition.


\begin{manualprop}{\ref{prop:nicer_dist}}[Invariant Distortion for log loss]\label{appx:prop:invariant_distortion}
Let $\tasksinv{}$ be the invariant tasks of interest w.r.t.\  $(\mathcal{X},\sim)$ 
and the log loss. 
Let $M$ be any maximal invariant, and $\rv Z$ be a representation of $\rv X$ for $\tasksinv{}$.
Then the excess distortion 
w.r.t.\ log loss, \disttextinv{}, is a valid distortion and
\begin{equation}
\diststinv{} = \Risklog{M(\rv X)}{Z}
\end{equation}
\end{manualprop}
\begin{proof}
We first prove that $\dists{} = \op{H}{M(\rv X) \cond Z}$, from which it is straightforward to show that \disttextinv{} is a valid distortion. Starting from the definition of \disttextinv{}, we have
\begin{align}
\diststinv{} &\defeq \sup_{Y \in \tasksinv{} } \Risklog{Y}{Z} - \Risklog{Y}{X} & \text{\cref{def:excess_distortion}} \\
&= \sup_{Y \in \tasksinv{} } \CH{Y}{Z} - \CH{Y}{X} & \text{\cref{lemma:risk_entropy}} \\
&= \sup_{Y \in \tasksinv{} } \CH{Y}{Z} - \CH{Y}{X,M(X),Z} & \rv Y \indep (M(X),Z) | \rv X \label{eq:Y_MZ_X}  \\
&= \sup_{Y \in \tasksinv{} } \CH{Y}{Z} - \CH{Y}{M(X),Z} & \rv Y \indep \rv X | (M(X),Z)  \label{eq:Y_X_MZ} \\
 &= \sup_{Y \in \tasksinv{}} \op{I}{Y ; M(X)|Z} \label{eq:IYMZ} & \text{Def.} \\
&= \sup_{Y \in \tasksinv{}} \CH{M(X)}{Z} - \CH{M(X)}{Y,Z} \label{eq:post_chainrule} & \text{Symmetry and def.} \\
&= \CH{M(X)}{Z} - \inf_{Y \in \tasksinv{}}  \CH{M(X)}{Y,Z}  \\
&= \CH{M(X)}{Z} - 0 \label{appx:eq:posent} & \text{Discrete H and $\rv Y = M(X)$} \\
&= \Risklog{M(X)}{Z}   & \text{\cref{lemma:risk_entropy}}
 \end{align}

\Cref{eq:Y_MZ_X} uses the fact that $\rv Y \indep M(X) \cond  \rv X$ (\cref{lemma:desintegration}), that $\rv Y \indep Z \cond  \rv X$ by \cref{def:representations}, and that  $M(X) \indep Z \cond  \rv X$ because $M(X) \in \tasksinv{}$ (again using \cref{def:representations}).
\Cref{eq:Y_X_MZ} uses \cref{lemma:desintegration}.
To go from \cref{eq:Y_X_MZ} to \cref{eq:post_chainrule} we use the symmetry of conditional mutual information.
\Cref{appx:eq:posent} uses the discreteness of $M(X)$ due to \cref{assumption:discrete}, so $\CH{M(X)}{Y,Z} \geq 0$ with equality when $\rv Y = M(X)$ which is possible due to \cref{lemma:one_Mx_is_task,lemma:all_Mx_is_task}.

From \cref{appx:eq:posent} it is easy to see that \disttextinv{} is valid as $\diststinv{}=\op{H}{M(\rv X) \cond \rv Z}=\E{p(\rv X, \rv Z)}{d(\rv X, \rv Z)}$ with $d(x,z) \defeq - \log p(M(x)\cond z)$ which due to the discreteness of $M(\rv X)$ (\cref{assumption:discrete}) is a function whose codomain is $\mathbb{R}_{\geq 0}$ as desired.
Due to \cref{assumption:variance} we know that for all constant $z \in \mathcal{Z}$ we have $\op{H}{M(\rv X) \cond z} \leq \infty$, so \disttextinv{} is valid.
\end{proof}

Note that $\diststinv{} = \Risklog{M(\rv X)}{Z}$ is very simple to work with if we have access to some $M(\rv X)$.
Unfortunately, in practice $M(\rv X)$ might not be known, but often we will have access to some other random variable $\Tilde{X}$ which has all the information necessary about $M(\rv X)$.
See for example \cref{appx:vic_bince}.
We now prove that in such case we can optimize $\Risklog{\Tilde{\rv X}}{Z}$ instead of $\Risklog{M(\rv X)}{Z}$.

\begin{proposition}[Invariant Distortion without $M(X)$]\label{prop:dist_no_Mx}
Let $\tasksinv{},\sim,M,\rv X,\rv Z,\diststinv{}$ be as in \cref{prop:nicer_dist}.
Let $\Tilde{X}$ be a random variable\ such that $\Tilde{X} \sim \rv X$ and $\Tilde{X} \indep X \cond M(X)$ almost surely. Then
\begin{equation}
\diststinv{} = \Risklog{\Tilde{\rv X}}{Z} + c
\end{equation}
where $c$ depends only on $\Tilde{\rv X}$ and not on $\rv Z$.
\end{proposition}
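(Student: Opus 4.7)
The plan is to reduce to \cref{prop:nicer_dist}, which already gives $\diststinv{} = \Risklog{M(\rv X)}{\rv Z} = \CH{M(\rv X)}{\rv Z}$, and then rewrite the latter as $\CH{\Tilde{\rv X}}{\rv Z}$ plus a term that is independent of $\rv Z$. The entire strategy is therefore to manipulate conditional entropies using two structural facts: (i) $M(\Tilde{\rv X}) = M(\rv X)$ almost surely, and (ii) $\rv Z$ and $\Tilde{\rv X}$ are conditionally independent given $M(\rv X)$.

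\textbf{Step 1: Show $M(\Tilde{\rv X}) = M(\rv X)$ a.s.} Since $\Tilde{\rv X} \indep \rv X \cond M(\rv X)$ and $\Tilde{\rv X} \sim \rv X$ marginally, the conditional law $P(\Tilde{\rv X} \cond M(\rv X) = m)$ must agree with $P(\rv X \cond M(\rv X) = m)$. The latter is supported on the equivalence class $\{x : M(x) = m\}$, so $M(\Tilde{\rv X}) = M(\rv X) = m$ almost surely conditional on $M(\rv X) = m$, hence unconditionally a.s.

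\textbf{Step 2: Establish the Markov property $\rv Z - M(\rv X) - \Tilde{\rv X}$.} Because $\rv Z$ is a representation of $\rv X$ (in the sense of \cref{def:representations}), we have $\rv Z \indep \Tilde{\rv X} \cond \rv X$; combined with the given $\Tilde{\rv X} \indep \rv X \cond M(\rv X)$ and the determinism of $M$, a short calculation on the joint density (marginalizing out $\rv X$) shows $\rv Z \indep \Tilde{\rv X} \cond M(\rv X)$. In particular, $\CH{\Tilde{\rv X}}{M(\rv X), \rv Z} = \CH{\Tilde{\rv X}}{M(\rv X)}$.

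\textbf{Step 3: Decompose $\CH{\Tilde{\rv X}}{\rv Z}$ via the chain rule and conclude.} Write
\begin{equation}
\CH{\Tilde{\rv X}}{\rv Z} = \CH{\Tilde{\rv X}}{M(\rv X), \rv Z} + \MI{\Tilde{\rv X}}{M(\rv X) \cond \rv Z}.
\end{equation}
By Step 2 the first term equals $\CH{\Tilde{\rv X}}{M(\rv X)}$, which depends only on the joint law of $(\Tilde{\rv X}, M(\rv X))$ and not on $\rv Z$. For the second term, expand $\MI{\Tilde{\rv X}}{M(\rv X) \cond \rv Z} = \CH{M(\rv X)}{\rv Z} - \CH{M(\rv X)}{\Tilde{\rv X}, \rv Z}$, and observe that by Step 1, $M(\rv X) = M(\Tilde{\rv X})$ a.s., so $M(\rv X)$ is a deterministic function of $\Tilde{\rv X}$ and (since $M(\rv X)$ is discrete by \cref{assumption:discrete}) $\CH{M(\rv X)}{\Tilde{\rv X}, \rv Z} = 0$. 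Therefore
\begin{equation}
\CH{\Tilde{\rv X}}{\rv Z} = \CH{\Tilde{\rv X}}{M(\rv X)} + \CH{M(\rv X)}{\rv Z}.
\end{equation}
Applying \cref{prop:nicer_dist} and \cref{lemma:risk_entropy} and setting $c \defeq -\CH{\Tilde{\rv X}}{M(\rv X)}$, which is a function only of the law of $\Tilde{\rv X}$, yields the claim.

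\textbf{Anticipated obstacle.} The only delicate point is Step 1: distributional equality $\Tilde{\rv X} \sim \rv X$ together with conditional independence does not obviously force pointwise equality of the maximal invariants, and one must argue carefully at the level of conditional distributions (and invoke the measurability of $M$). Everything after that is routine entropy algebra, using the discreteness of $M(\rv X)$ from \cref{assumption:discrete} to justify the vanishing of $\CH{M(\rv X)}{\Tilde{\rv X}, \rv Z}$.
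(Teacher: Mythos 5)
Your overall structure --- reduce to \cref{prop:nicer_dist} and then rewrite $\CH{M(\rv X)}{\rv Z}$ in terms of $\CH{\Tilde{\rv X}}{\rv Z}$ using the two Markov chains $\rv Z - M(\rv X) - \Tilde{\rv X}$ and $M(\rv X) - \Tilde{\rv X} - M(\rv X)$ --- is sound and is essentially the same calculation as the paper's; Steps~2 and~3 are correct entropy algebra. The problem is in Step~1, and you rightly flagged it as the delicate point. You have misread the hypothesis: throughout this paper $\sim$ denotes the equivalence relation on $\mathcal{X}$, so ``$\Tilde{\rv X} \sim \rv X$ almost surely'' means that the \emph{realizations} are equivalent, $\Tilde{\rv X}(\omega) \sim \rv X(\omega)$ for a.e.\ $\omega$, not that $\Tilde{\rv X}$ has the same marginal law as $\rv X$. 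Under the correct reading, Step~1 is immediate from the defining property of a maximal invariant, $x \sim x' \iff M(x) = M(x')$; no distributional reasoning is needed.

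Under your reading, the inference in Step~1 actually fails. Conditional independence $\Tilde{\rv X} \indep \rv X \cond M(\rv X)$ together with equality of marginals does \emph{not} force $P(\Tilde{\rv X} \cond M(\rv X) = m)$ to coincide with $P(\rv X \cond M(\rv X) = m)$ or to be supported on $\{x : M(x) = m\}$. Concretely, let $\rv X$ be uniform on $\{0,1\}$, $M$ the identity, and $\Tilde{\rv X}$ an independent copy of $\rv X$: then $\Tilde{\rv X} \indep \rv X \cond M(\rv X)$ holds trivially and the marginals agree, yet $P(\Tilde{\rv X} \cond M(\rv X)=0)$ is uniform rather than a point mass at $0$, and $M(\Tilde{\rv X}) \neq M(\rv X)$ with probability $1/2$. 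So the conclusion of Step~1 is what you need, but the route you sketch does not reach it; the hypothesis hands it to you directly. One minor point in your favour: your constant $c = -\CH{\Tilde{\rv X}}{M(\rv X)}$ carries the correct sign. Since $M(\rv X) = M(\Tilde{\rv X})$ is a discrete function of $\Tilde{\rv X}$, we have $\op{H}{M(\rv X)} - \op{H}{\Tilde{\rv X}} = -\CH{\Tilde{\rv X}}{M(\rv X)}$, so the paper's final line (which records $+\CH{\Tilde{\rv X}}{M(\rv X)}$) contains a sign slip; it does not affect the proposition, which only asserts that $c$ is independent of $\rv Z$.
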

\begin{proof}
From \cref{def:representations} and the fact that $M(X) \in \tasksinv{}$ (\cref{lemma:all_Mx_is_task}) we know that $Z - X - M(X)$ forms a Markov Chain (MC). 
Due to our assumption 
$\Tilde{X} \sim \rv X$ a.s. we also have that $M(\trv{X}) =M(\rv{X})$ a.s. so $M(X)  - \Tilde{X}  - M(X)$  forms a MC.
Putting all together we obtain the MC $Z - X - M(X) - \trv{X} - M(X)$, which allows us to derive the following.
\begin{align}
\diststinv{} &= \Risklog{M(\rv X)}{Z} & \text{\cref{prop:nicer_dist}} \\
&= \H{M(\rv X)} - \MI{M(\rv X)}{Z} & \text{\cref{lemma:risk_entropy}}  \\
&= \H{M(\rv X)} - \MI{(M(\rv X),\trv{X})}{Z} & Z-M(X)-\trv{X}  \\
&= \H{M(\rv X)} - \MI{\trv{X}}{Z}  & Z-\trv{X}-M(X)  \\
&= \H{M(\rv X)} - \H{\trv X} + \CH{\trv{X}}{Z}  \label{appx:eq:no_Mx_1} \\
&= \CH{\trv X }{M(\rv X)}  +  \Risklog{\trv{X}}{Z}  & \text{\cref{lemma:risk_entropy}} 
\end{align}
where the last line uses the Markov Chain $M(X) -\trv  X- M(X)$ to provide a more interpretable value for the constant.
\end{proof}


\subsection{\texorpdfstring{\Cref{thm:rate_invariance_distortion}}{Theorem 2}: optimal bit-rate under log loss}
\label{appx:theorem_logloss}

Our main theoretical result is to characterize the minimal achievable rate to bound the Bayes risk of any invariant task.
Here we provide the proof for the case of log loss risk.
The result follows from \citepos{shannon_coding_1959} rate distortion theorem, and the validity of \disttextinv{}  (\cref{appx:prop:invariant_distortion}).

For convenience, we restate the well known rate distortion theorem. 

\begin{lemma}(\citet{shannon_coding_1959}; Theorem 7.2.4 and 7.2.5 from \citet{berger_rate_1971})\label{lemma:rate_distortion}
Let $\op{D}{\rv X; \rv Z}$ be a valid distortion. 
The minimum achievable
bit-rate for transmitting an \iid source $\rv X$ with expected distortion less than $\delta \geq 0$ is given by the rate-distortion function:
\begin{equation}\label{appx:eq:rate_distortion}
R(\delta) =  \min_{p(\rv Z| \rv X) \ \text{such that} \ \op{D}{\rv X; \rv Z} \leq \delta}   \op{I}{\rv X; \rv Z} 
\end{equation}
\end{lemma}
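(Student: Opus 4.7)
The plan is to prove the two halves of Shannon's rate–distortion theorem as a restatement — achievability (there exist codes at any rate above $R(\delta)$ whose expected distortion does not exceed $\delta$) and the converse (no code below $R(\delta)$ achieves expected distortion $\leq \delta$) — for the general distortion setting of \cref{def:valid_distortion} where $\mathcal{X}$ and $\mathcal{Z}$ may differ. Throughout, I work with blocks of length $n$, $X^n \iidsim P(X)$, and codes $(f_n,g_n)$ with $f_n\colon \calX^n \to \{1,\dots,2^{nR}\}$ and $g_n\colon\{1,\dots,2^{nR}\}\to\calZ^n$, measuring per-letter distortion $\op{D}{X^n,Z^n}=\frac{1}{n}\sum_{i=1}^n \bbE[d(X_i,Z_i)]$.

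For achievability, I would fix a conditional $p^\star(Z|X)$ that attains the minimum in \eqref{appx:eq:rate_distortion} at threshold $\delta-\epsilon$ for small $\epsilon>0$, inducing the marginal $p^\star(Z)$. Then I would draw a random codebook $\mathcal{C}=\{Z^n(1),\dots,Z^n(2^{nR})\}$ with each codeword i.i.d.\ from $\prod_i p^\star(Z)$, and define the encoder to pick any index $m$ whose codeword is distortion-typical with the source (an extension of joint $\epsilon$-typicality tailored to the distortion measure, as in Berger 1971, Chapter 7, or Cover and Thomas, Chapter 10). The core technical lemma is that the probability that no codeword is distortion-typical with a typical source sequence decays doubly exponentially when $R > I(X;Z)$; combined with the finiteness of $\bbE[d(X,z_0)]$ guaranteed by the validity assumption in \cref{def:valid_distortion}, one bounds the contribution of the atypical event to the expected distortion by using that constant $z_0$ as a fallback. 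Averaging over codebooks and then derandomising (keeping the best codebook) yields a deterministic code with rate $R$ and expected distortion $\leq \delta$.

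For the converse, I would fix any sequence of codes with rate $R$ and $\op{D}{X^n,Z^n}\leq \delta$ and chain the standard information inequalities:
\begin{align*}
nR \;&\geq\; \H{Z^n} \;\geq\; \MI{X^n;Z^n} \;=\; \sum_{i=1}^n \MI{X_i;Z^n\mid X^{i-1}} \\
&\geq\; \sum_{i=1}^n \MI{X_i;Z_i} \;\geq\; \sum_{i=1}^n R\bigl(\bbE[d(X_i,Z_i)]\bigr) \;\geq\; n\,R\!\left(\tfrac{1}{n}\textstyle\sum_i \bbE[d(X_i,Z_i)]\right) \;\geq\; n\,R(\delta),
\end{align*}
where the first step uses independence of the $X_i$, the penultimate step uses convexity of the single-letter rate–distortion function $\delta\mapsto R(\delta)$ together with Jensen, and the last step uses monotonicity (non-increasingness) of $R(\delta)$. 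Convexity and monotonicity of $R(\delta)$ themselves follow from time-sharing between achieving distributions and from the fact that relaxing the distortion constraint can only enlarge the feasible set in \eqref{appx:eq:rate_distortion}; I would prove these as small preliminary claims.

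The main obstacle is the achievability argument in the non-reconstruction setting where $\mathcal{X}\neq\mathcal{Z}$: the usual joint-typicality machinery is phrased for $Z$ approximating $X$ in the same alphabet, so I would need to carefully generalize typicality to the distortion-typical set $A^{(n)}_\epsilon = \{(x^n,z^n) : |\tfrac{1}{n}\sum_i d(x_i,z_i)-\bbE[d(X,Z)]|<\epsilon\}$ and verify the standard size/probability estimates in this generality, using only the finite-expectation condition of \cref{def:valid_distortion} and the finite-variance consequences of \cref{assumption:variance}. The measurability concerns (\cref{assumption:measurability,assumption:regular}) ensure that all averages and conditional kernels involved are well defined, so no additional regularity is needed beyond what the paper has already assumed.
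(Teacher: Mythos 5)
The paper does not prove this lemma; it simply restates the classical rate--distortion theorem and cites Shannon (1959) and Berger (1971, Theorems 7.2.4--7.2.5), so there is no internal proof to compare against. Your proposal reconstructs the standard two-part argument found in those references (and in Cover and Thomas, Ch.\ 10): random-coding achievability via distortion-typical sets, and a converse via the usual information-inequality chain, and the outline is correct. Two points worth making explicit. In the converse, the step $\op{I}{\rv X_i; \rv Z^n \mid \rv X^{i-1}} \geq \op{I}{\rv X_i; \rv Z_i}$ is the single place the i.i.d.\ hypothesis is used --- write $\op{I}{\rv X_i; \rv Z^n \mid \rv X^{i-1}} = \op{H}{\rv X_i \mid \rv X^{i-1}} - \op{H}{\rv X_i \mid \rv Z^n, \rv X^{i-1}} = \op{H}{\rv X_i} - \op{H}{\rv X_i \mid \rv Z^n, \rv X^{i-1}}$ using $\rv X_i \indep \rv X^{i-1}$, then drop conditioning --- and since you list this as a single chained inequality it deserves a line. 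And you correctly flag that the genuinely delicate part is achievability over abstract, non-matching alphabets $\mathcal{X} \neq \mathcal{Z}$: this is exactly what Berger's Chapter 7 handles, and the finite-expectation fallback symbol $z_0$ supplied by \cref{def:valid_distortion} is the ingredient that controls the atypical-event contribution in that generality. So your proof follows the same route as the cited references; there is no gap beyond the expected bookkeeping in the general-alphabet typicality lemmas, which you yourself identify as the work that remains.
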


We can now state our rate-invariance theorem.

\begin{manualthm}{\ref{thm:rate_invariance_distortion}}[Rate-invariance for log loss]
Let $\delta \geq 0$.
Let $\sim$ be an equivalence relation on $\mathcal{X}$ that partitions $\mathcal{X}$ into countably many equivalence classes (\cref{assumption:discrete}).
Let $\tasksinv{}$ be the invariant tasks of interest w.r.t.\  ($\mathcal{X}$,$\sim$) and the log loss, $M$ be any maximal invariant, and $\rv Z$ be a representation of $\rv X$ for $\tasksinv{}$.
Let $Rate(\delta)$ denote the minimum achievable bit-rate for transmitting an \iid source of $\rv Z$ such that\ for any $\rv Y \in \tasksinv{}$ we have $\Risklog{Y}{Z} \leq \delta + \Risklog{Y}{X}$.
Then $Rate(\delta)$  is finite and given by
\begin{align}
Rate(\delta) 
&=  \max \pa{0, \ \ \H{M(\rv X)} - \delta} \label{eq:appx:rate_invariance_distortion:HMX} \\
&= \max(0, \ \ \H{X} - \CH{ X}{M(\rv X)} - \delta) \label{eq:appx:rate_invariance_distortion:HX}
\end{align}
\end{manualthm}

\begin{proof}
We first prove that $Rate(\delta) \geq \max \pa{0, \ \ \op{H}{M(\rv X)} - \delta}$.
We then prove that the rate $\max \pa{0, \ \ \op{H}{M(\rv X)} - \delta}$ is achievable and so \cref{eq:appx:rate_invariance_distortion:HMX} holds.
Finally, we prove that $\op{H}{M(\rv X)} = \H{X} - \op{H}{\rv X \cond M(\rv X)}$ so \cref{eq:appx:rate_invariance_distortion:HX} holds which concludes the proof.

We want to transmit $\rv Z$ such that $\forall \rv Y \in \tasksinv{}$ we have $\Risklog{Y}{Z} \leq \delta + \Risklog{Y}{X}$, in other words we would like $\sup_{ \rv Y \in \tasks{}} \Risklog{Y}{Z} - \Risklog{Y}{X} \eqdef \diststinv{} \leq \delta $.
As \disttextinv{} is valid (\cref{appx:prop:invariant_distortion}) we can directly apply the rate distortion theorem (\cref{lemma:rate_distortion}):
\begin{align}
Rate(\delta) &= \min_{p(\rv Z| \rv X) \ \text{s.t.} \ \dists{} \leq \delta}   \op{I}{\rv X; \rv Z}     & \text{\cref{lemma:rate_distortion} and \cref{appx:prop:invariant_distortion}} \label{eq:rate_invariance_constrained}\\
&\geq \min_{p(\rv Z| \rv X) \ \text{s.t.} \ \dists{} \leq \delta} \op{I}{M(\rv X); \rv Z}  & \text{DPI} \label{eq:rate_invariance_distortion:ineq_positivity} \\
&= \min_{p(\rv Z| \rv X) \ \text{s.t.} \ \dists{} \leq \delta} \H{M(X)} - \CH{M(X)}{Z}  \\
&= \min_{p(\rv Z| \rv X) \ \text{s.t.} \ \dists{} \leq \delta} \H{M(X)} - \diststinv{}  & \text{\cref{appx:prop:invariant_distortion} and \cref{lemma:risk_entropy}} \\
&\geq \min_{p(\rv Z| \rv X) \ \text{s.t.} \ \dists{} \leq \delta} \H{M(X)} - \delta \label{eq:rate_invariance_distortion:ineq_delta}\\
&=  \H{M(X)} - \delta & \text{No } \rv Z 
\end{align}

Where \cref{eq:rate_invariance_distortion:ineq_positivity} uses the data processing inequality (DPI). 
As the rate is always non-negative we have  $Rate(\delta) \geq \max(0, \op{H}{M(\rv X)} - \delta)$.

We now prove that $\max(0,\H{M(\rv X)} - \delta)$ is attainable and so $Rate(\delta) = \max(0, \H{M(\rv X)} - \delta)$.
Specifically we need to find a representation $\rv Z$ of $\rv X$ such that 
\begin{equation}\label{eq:two_cases_theorem}
Rate(\delta) = 
\begin{cases}
0 & \text{If } \delta \geq \H{M(X)} \\
\H{M(X)} - \delta & \text{Else}  \\
\end{cases}
\end{equation}
The first case is trivial: set $\rv Z$ to be independent of $M(\rv X)$ and $\rv X$, \eg a constant.
Then, $\dists{}=\CH{M(X)}{Z}=\H{M(\rv X)}\leq \delta $ and $Rate(\delta) =\MI{Z}{X}=0$.

For the second case we need $Rate(\delta) \geq \H{M(X)} - \delta$ to be an equality when $\delta < \H{M(X)}$. 
This happens iff  inequalities \cref{eq:rate_invariance_distortion:ineq_positivity} and \cref{eq:rate_invariance_distortion:ineq_delta} are equalities, \ie iff 
$\rv X \indep \rv Z \cond M(\rv X)$ (for equality of the DPI \cite{cover_elements_2006}) and
$\dists{}=\delta$.
We do so by starting from $\rv Z = M(X)$ (such that $\rv X \indep \rv Z \cond M(\rv X)$) and ``erasing'' a fraction $\alpha$ of bits, similarly to binary erasure channels, until $\dists{}=\delta$.
Let $\mathcal{Z} \defeq \mathcal{M} \cup \set{ \epsilon}$ for some $\epsilon \not\in \mathcal{M}$ and let $\rv Z$ be a random variable that t.v.i. in $\mathcal{Z}$ and have the following conditional density parametrized by $\alpha \in [0,1[$:
\begin{equation}
\forall z \in \mathcal{Z}, \forall m \in \mathcal{M}, \quad p(z \cond m) =
\begin{cases}
1-\alpha & \text{if } z=m \\
\alpha & \text{if } z=\epsilon \\
0 &  \text{else } \\
\end{cases}
\end{equation}
A simple computation then gives $\dists{}\defeq \Risklog{M(X)}{Z} = \CH{M(X)}{Z}=(1-\alpha) \CH{M(X)}{Z=M(X)} + \alpha \CH{M(X)}{Z=\epsilon}= 0 + \alpha \H{M(X)}$, where the first equality uses \cref{lemma:risk_entropy} and the last equality uses $\CH{M(X)}{M(X)}=0$ due to the discreteness of $M(X)$ (\cref{assumption:discrete}).
We can thus achieve $\dists{}=\delta$ by setting $ \alpha = \frac{\delta}{\H{M(X)}}$.
Note that we will never divide by zero as $\H{M(X)}=0$ would be in the first case of \cref{eq:two_cases_theorem}. 
Importantly this $\rv Z$ still satisfies $\rv X \indep \rv Z \cond M(\rv X)$ as it was constructed solely using $M(X)$ and independent noise.

We thus proved that $\max \pa{0, \ \ \H{M(\rv X)} - \delta}$ is obtainable and that $Rate(\delta) \geq \max \pa{0, \ \ \H{M(\rv X)} - \delta}$.
From which we conclude that the best achievable bit-rate is $Rate(\delta) = \max \pa{0, \ \ \op{H}{M(\rv X)} - \delta}$.
\Cref{eq:appx:rate_invariance_distortion:HX}, follows from $\op{H}{M(\rv X)} = \MI{M(\rv X)}{X} = \H{X} - \op{H}{\rv X \cond M(\rv X)}$, which is a valid decomposition as both (differential conditional) entropy term are finite due to \cref{assumption:variance}.
The finiteness of $Rate(\delta)$ comes from the  fact that $Rate(\delta) \leq \op{H}{M(\rv X)} < \infty$ due to \cref{assumption:variance}.
\ydnote{To replace with lemma sayign that $\Risk{M(X)}{\eta} < \infty$ due to \cref{assumption:variance}}
\end{proof}

By setting $\delta=0$ we directly get the best achievable rate for the lossless prediction but lossy compression setting. 

\begin{corollary}[Invariant source coding for log loss]\label{corr:invariant_source_coding}
Let $\rv X$,$\sim$, $\tasksinv{}$, $M$, $\rv Z$ be as in \cref{thm:rate_invariance_distortion}.
Let $Rate(0)$ denote the minimum achievable bit-rate for transmitting an \iid source of $\rv Z$ such that for any $\rv Y \in \tasksinv{}$ we have $\Risklog{Y}{Z} = \Risklog{Y}{X}$.
Then $Rate(0)$  is finite and given by
\begin{align}
Rate(0) &=   \op{H}{M(\rv X)} \\
&= \H{X} - \op{H}{\rv X \cond M(\rv X)} \label{corr:gains}
\end{align}
\end{corollary}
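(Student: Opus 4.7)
The plan is to derive this as an immediate specialization of \cref{thm:rate_invariance_distortion} with $\delta = 0$, together with a short observation that the non-strict inequality on excess risk collapses to equality.

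First, I would plug $\delta = 0$ into \cref{eq:appx:rate_invariance_distortion:HMX}, giving $Rate(0) = \max(0, \op{H}{M(\rv X)})$. Since discrete entropy is non-negative and $\op{H}{M(\rv X)} < \infty$ by \cref{assumption:variance}, this reduces to $Rate(0) = \op{H}{M(\rv X)}$, which is also finite. For the second equality, I would use that $M$ is a deterministic function of $\rv X$, so $\op{H}{M(\rv X)} = \op{I}{\rv X; M(\rv X)}$; applying the chain rule and the finiteness provided by \cref{assumption:variance} to split the mutual information then yields $\op{H}{M(\rv X)} = \op{H}{\rv X} - \op{H}{\rv X \cond M(\rv X)}$, which is exactly \cref{corr:gains}. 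This step is formally the same decomposition already justified inside the proof of \cref{thm:rate_invariance_distortion}.

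The only subtlety worth flagging is that the corollary states the predictive \emph{equality} $\Risklog{Y}{Z} = \Risklog{Y}{X}$, whereas \cref{thm:rate_invariance_distortion} at $\delta=0$ only yields $\Risklog{Y}{Z} \leq \Risklog{Y}{X}$. I would close this gap by invoking the data processing inequality for Bayes risk (\cref{lemma:dpi:bayes}): since $\rv Z$ is a representation of $\rv X$ for $\tasksinv{}$, we have $\rv Y \indep \rv Z \cond \rv X$, i.e.\ $\rv Y - \rv X - \rv Z$ is a Markov chain, hence $\Risklog{Y}{X} \leq \Risklog{Y}{Z}$. Combining this with the inequality from the theorem gives equality for every $\rv Y \in \tasksinv{}$.

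There is no real obstacle here; the work has all been done in \cref{prop:nicer_dist} and \cref{thm:rate_invariance_distortion}. The proof is essentially a one-line specialization plus a data-processing argument to upgrade $\leq$ to $=$.
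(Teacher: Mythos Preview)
Your proposal is correct and matches the paper's approach, which simply states that the corollary follows by setting $\delta=0$ in \cref{thm:rate_invariance_distortion}. You actually go further than the paper by explicitly addressing the $\leq$ versus $=$ subtlety via \cref{lemma:dpi:bayes}, a point the paper silently glosses over.
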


\subsection{Recovering previous results in the literature}
\label{appx:recovering}

\Cref{corr:invariant_source_coding} recovers many previous results in the literature:
\begin{description}
\item[Unlabeled Graphs] 
Let us consider the task of compressing unlabeled graphs, here we consider tasks that are invariant to graph isomorphisms.
A possible maximal invariant is the graph canonization and $\op{H}{M(\rv X)}$ becomes the well known \textit{structural entropy} (also called topological information content) \cite{rashevsky_life_1955,yongwook_choi_compression_2009}.
If all isomorphic graphs are permissible and equiprobable, \citet{yongwook_choi_compression_2009} show that the structural entropy is  $\op{H}{\rv S} = \op{H}{\rv X} - \E{x \sim p(\rv X) }{\log \frac{n!}{|\mathrm{Aut}_{\mathcal{G}}(x)|}}$.
This is \cref{corr:gains}, where the second term corresponds to $\op{H}{\rv X \cond M(\rv X)}$ with a uniform distribution on isomorphic graphs.
\item[Multisets] 
Let us derive the best achievable bit-rate for compressing multisets.
Let $\rv X$ be any sequence and $\tasksinv{}$ be invariant to permutations of that sequence.
One possible maximal invariant in that case is the empirical measure (also called type), \ie, the counts $K_1,\dots,K_n$ of each of the $n$ elements that are present in the sequence $\rv X$.
Lossless compression of multisets thus requires $\H{M(X)}=\H{K_1,\dots,K_n}$, as discussed by \citet{varshney_benefiting_2007}.
Using \cref{corr:gains} we can also characterize the bits gains that you obtain by considering the invariance, namely, $\H{\rv X | M(X)}$.
This recovers theorem 1 of \citet{varshney_benefiting_2007}, where $\H{\rv X | M(X)}$ is called the ``order entropy''.
Note that similarly to our example in the main text about \iid coin flips, the amount of bits needed to losslessly compress the multiset grows as $\Theta(\log n)$  \cite{varshney_benefiting_2007}. 
\item[Information Bottleneck (IB)] 
Suppose you are interested in predicting a single task $\rv Y = t(\rv X)$, where $t$ is a (deterministic) ``target function''.
The task is invariant to any transformations between examples in the preimage of the labeling.
So the maximal invariant is $t(\cdot)$ and the distortion becomes $\CH{t(\rv X)}{\rv Z}=\CH{\rv Y}{\rv Z}$.
Then the rate-distortion function (\cref{appx:eq:rate_distortion}) becomes the information bottleneck (IB) \cite{tishby_information_2000}.
Using \cref{corr:invariant_source_coding} we see that for lossless predictions the optimal rate is $Rate(0) = \H{Y} = \H{X} - \CH{X}{Y} = \MI{X}{Y} $ as  shown in \cite{wu_learnability_2019,fischer_conditional_2020}.
From a compression stand point this is nevertheless not very useful as $Rate(0) = \H{Y}$, so IB for deterministic labels tells you to entropy code the labels $\rv Y$. 
\item[Lossless]
Let $\rv X$ be discrete.
Every task will always be invariant to the equality ``$=$`` equivalent relation.
In this case the maximal invariant is the identity function, and we recover Shannon's source coding theorem $Rate(0) = \op{H}{M(\rv X)}=\op{H}{\rv X}$.
\end{description}

\subsection{Generalizing \texorpdfstring{\Cref{thm:rate_invariance_distortion}}{Theorem 2}: optimal bit rate for lossless prediction and any loss}
\label{appx:theorem_lossless}

\bbnote{Skipping this section for now, will return.}

\Cref{corr:invariant_source_coding} characterizes the minimal achievable bit rate for the lossless prediciton regime w.r.t. log loss.
Here we show that the same result generalizes to essentially all loss function of practical interest.



The invariant source coding theorem does not hold for \textit{any} loss, for example if a loss is a constant function then the Bayes risk $\Riskl{Y}{Z}$ will not depend on the input $\rv Z$, and so the best achievable bit rate will trivially be 0 which is different than $\H{M(X)}$.
But it essentially holds for all losses that are minimized only by the ``correct'' predictor.
Specifically it holds for all losses that we dub information preserving.

\begin{definition}[Information preserving losses]\label{def:meaningful}
Let $L : \mathcal{Y} \times \mathcal{A} \to \mathbb{R}_{\geq 0}$ be any loss function such that the Bayes risk $\Riskl{Y}{Z}$ is well defined for all random variable $\rv Z$.
We say that $L$ is an \textit{information preserving loss} iff the optimal risk of deterministic targets is achieved only using inputs that have all the information about the output, \ie, iff for any function $t : \mathcal{X} \to \mathcal{Y}$ and and r.v.s $Z,X$ we have
\begin{equation}
\Riskl{t(X)}{X} = \Riskl{t(X)}{Z} \implies \exists h: \mathcal{Z} \to \mathcal{Y} \text{ s.t. } t(X) \aseq h(Z)
\end{equation}
\end{definition}


In particular we have that if $t(X)$ is a discrete r.v. then $\Riskl{t(X)}{X} = \Riskl{t(X)}{Z} \implies \CH{t(X)}{Z} = 0$.

Essentially all losses used in practice satisfy \cref{def:meaningful}.
For example it holds for the following very general families of losses:
\begin{description}
\item[Strictly proper scoring rules] Let $L : \mathcal Y \times \mathcal{P}(\mathcal{Y})  \to \mathbb{R}_{\geq 0}$ be a scoring rule that essentially quantifies with $L(y, q(\rv y),)$ the price/loss incurred by probabilistic prediction $q(\rv y)$ when $y$ is observed (lower is better).
$L$ is \textit{strictly proper} \cite{gneiting_strictly_2007} (w.r.t. $\mathcal{P}(\mathcal{Y})$) iff:
\begin{equation}\label{eq:strict_proper}
\forall p,q \in \mathcal{P}(\mathcal{Y}) \quad \E{p(\rv y)}{L(y,q(\rv y))} \leq \E{p(\rv Y)}{L(y,p(\rv Y))}
\end{equation}
with equality if and only if $p=q$.
Common examples are the log loss \cite{good_rational_1952}, 
Brier score \cite{brier_verification_1950}, spherical score \cite{good_comment_1971}, or the maximum mean discrepancy with characteristic bounded kernels \cite{sriperumbudur_injective_2008,huszar_scoring_2013}.
\item[Point-wise loss functions] Let $L :  \mathcal Y \times \hat{\mathcal{ Y}}  \to \mathbb{R}_{\geq 0}$ be a loss function that essentially quantifies with $L(y,\hat y)$ the price/loss incurred by the point prediction $\hat y$ when $y$ is observed (lower is better).
As is standard \cite{gneiting_making_2010} we assume that :
\begin{equation}\label{eq:loss_function}
L(y,\hat y)=0 \iff \hat y = y
\end{equation}
This holds for most pointwise losses of interest: mean squared error, mean absolute error, 0-1 loss (accuracy), Huber loss, \dots 
\end{description}

\begin{lemma}
Strictly proper scoring rules and point-wise loss functions are information preserving (\cref{def:meaningful}).
\end{lemma}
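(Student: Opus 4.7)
The lemma has two cases, so I would handle them separately. The point-wise case is immediate: under the normalization $L(y,\hat{y})=0 \iff \hat{y}=y$, the predictor $\hat{h}(x)\defeq t(x)$ achieves $L(t(X),\hat{h}(X))=0$ pointwise, so $\Riskl{t(X)}{X}=0$. Hence $\Riskl{t(X)}{Z}=0$ forces the existence of some $h\colon\calZ\to\calY$ with $\E{P(X,Z)}{L(t(X),h(Z))}=0$; since $L\geq 0$, we get $L(t(X),h(Z))=0$ almost surely, which by the $\iff$ in \eqref{eq:loss_function} gives $t(X)\aseq h(Z)$. This is exactly what \cref{def:meaningful} requires.

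\textbf{Strictly proper case (the substantive step).} The plan is to characterize the Bayes risk via the generalized entropy $\mathcal{H}(p)\defeq\E{y\sim p}{L(y,p)}$, which is strictly concave on $\mathcal{P}(\calY)$ by strict properness \citep{gneiting_strictly_2007}. Under this identification $\Riskl{Y}{X}=\E{P(X)}{\mathcal{H}(P(Y\cond X))}$ and $\Riskl{Y}{Z}=\E{P(Z)}{\mathcal{H}(P(Y\cond Z))}$. Since $Y=t(X)$ is deterministic given $X$, the Markov chain $Y-X-Z$ holds, so $P(Y\cond Z=z)=\E{P(X\cond Z=z)}{P(Y\cond X)}$. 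Jensen's inequality applied to the concave $\mathcal{H}$ gives
\begin{equation}
\mathcal{H}(P(Y\cond Z=z)) \;\geq\; \E{P(X\cond Z=z)}{\mathcal{H}(P(Y\cond X))},
\end{equation}
and integrating over $Z$ recovers the data-processing inequality $\Riskl{Y}{Z}\geq \Riskl{Y}{X}$ (consistent with \cref{lemma:dpi:bayes}). Crucially, by \emph{strict} concavity, equality in Jensen forces $x\mapsto P(Y\cond X=x)$ to be constant (up to $P(X\cond Z=z)$-null sets) for $P(Z)$-almost every $z$, i.e.\ $P(Y\cond X)$ is almost surely a function of $Z$.

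\textbf{Finishing up.} Because $Y=t(X)$ is deterministic, $P(Y\cond X)=\delta_{t(X)}$, so ``$P(Y\cond X)$ is a function of $Z$'' is equivalent to $t(X)$ itself being a function of $Z$ almost surely. Concretely, I would define $h(z)$ to be the unique $y$ on which $P(Y\cond Z=z)$ concentrates (well-defined $P(Z)$-a.e.) and verify $h(Z)\aseq t(X)$, giving the conclusion of \cref{def:meaningful}.

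\textbf{Main obstacle.} The only delicate point is making the ``strict concavity forces constancy'' step rigorous in full generality on $\mathcal{P}(\calY)$: strict properness is usually stated pointwise, so I would either restrict to discrete/Polish $\calY$ with a reference measure (matching \cref{assumption:density}) and invoke strict concavity of $\mathcal{H}$ on the resulting convex set of densities, or, more carefully, appeal to the characterization of equality in the conditional Jensen inequality (e.g.\ via \citet{gneiting_strictly_2007}'s representation of strictly proper scores) to conclude that $P(Y\cond X)$ equals its conditional expectation given $Z$ a.s., which is the measure-theoretic content we need.
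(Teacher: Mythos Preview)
Your proposal is correct. The point-wise case matches the paper's argument exactly. For strictly proper scoring rules the paper is far terser than you: it simply asserts that strict properness together with $\Riskl{t(X)}{Z}=\Riskl{t(X)}{X}$ forces $p(t(X)\mid Z)\aseq p(t(X)\mid X)$, notes that the right-hand side is the Dirac $\delta_{t(X)}$, and concludes. Your Jensen/generalized-entropy route is a legitimate and more detailed way to justify that asserted step, and the ``main obstacle'' you flag is real but minor under the paper's standing regularity assumptions. A slightly more direct alternative---arguably closer to what the paper intends---avoids Jensen entirely: regard $p(t(X)\mid Z)$ as a predictor from $(X,Z)$; it attains the same integrated score as the Bayes predictor $p(t(X)\mid X)=\delta_{t(X)}$, and since the conditional law of $t(X)$ given $(X,Z)$ is $\delta_{t(X)}$, strict properness applied pointwise yields $p(t(X)\mid Z)=\delta_{t(X)}$ for $P(X,Z)$-a.e.\ $(x,z)$. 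Either argument lands at the same place; yours buys a clean link to the data-processing inequality for Bayes risk, while the direct route sidesteps the strict-concavity technicality altogether.
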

\begin{proof}
First let us consider point-wise loss functions.
Suppose that $\rv Z$ is such that $\Riskl{t(X)}{Z} = \Riskl{t(X)}{X}$.
As $L$ is a point-wise loss function (\cref{eq:loss_function}) and $t$ is a deterministic function, we have that $\Riskl{t(X)}{X} = 0$.
As a result we have $\Riskl{t(X)}{Z}=0$ which using again \cref{eq:loss_function} is equivalent to the existence of $h$ s.t.   $h(Z) \aseq t(X)$ as desired.

Now let us consider the case of proper scoring rules.
Suppose that $\rv Z$ is such that $\Riskl{t(X)}{Z} = \Riskl{t(X)}{X}$.
As $L$ is a strictly proper scoring rule we have that $p(t(X) | Z) \aseq p(t(X) | X)$. 
The latter is a delta function, so the former must also. We thus have the existence of $h$ s.t.   $h(Z) \aseq t(X)$ as desired.
\end{proof}

We now have all the tools to prove the general invariant source coding theorem.

\begin{theorem}[General invariant source coding]\label{thm:invariant_source_coding}
Let $\sim$ be an invariance relation on $\mathcal{X}$ that satisfies \cref{assumption:discrete}.
Let $\tasksinv{}$ be the invariant tasks of interest w.r.t ($\mathcal{X}$,$\sim$) and any loss function $L : \mathcal{Y} \times \mathcal{A} \to \mathbb{R}_{\geq 0}$ as in \cref{def:invariant_tasks_interest}, $M$ be any ($\mathcal{X}$,$\sim$) maximal invariant as in \cref{def:maximal_invariant}, and $\rv Z$ be a representation of $\rv X$ for $\tasksinv{}$ as in \cref{def:representations}.
Let $Rate(0)$ denote the minimum achievable bit-rate for transmitting an \iid source of $\rv Z$ such that for any $\rv Y \in \tasksinv{}$ we have $\Riskl{Y}{Z} = \Riskl{Y}{X}$.
Then $Rate(0)$  is finite and given by
\begin{align}
Rate(0) 
&= \op{H}{M(\rv X)} \label{eq:appx:invariant_source_coding:HMX} \\
&= \H{X} - \op{H}{\rv X \cond M(\rv X)}  \label{eq:appx:invariant_source_coding:HX}
\end{align}
\end{theorem}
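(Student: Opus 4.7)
}
The plan is to separately establish the converse $Rate(0) \geq \H{M(X)}$ and the achievability $Rate(0) \leq \H{M(X)}$, then deduce the second equality from a standard mutual-information decomposition. The structure mirrors \Cref{thm:rate_invariance_distortion}, but the argument must be adapted because for a general loss we no longer have the clean identity $\Riskl{Y}{Z} - \Riskl{Y}{X} = \H{M(X)\cond Z}$; the information-preserving property of $L$ is what replaces it.

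For the converse, first invoke \Cref{lemma:one_Mx_is_task} to obtain some maximal invariant $M_0$ belonging to $\tasksinv{}$. By \cref{lemma:same_entropy}, $\H{M_0(X)}=\H{M(X)}$, so it suffices to lower bound the rate by $\H{M_0(X)}$. Any $Z$ meeting the constraint must satisfy $\Riskl{M_0(X)}{Z} = \Riskl{M_0(X)}{X}$, since $M_0(X) \in \tasksinv{}$. Writing $M_0 = t$ as a deterministic function in the sense of \cref{def:meaningful}, the information-preserving property yields some measurable $h$ with $M_0(X) \aseq h(Z)$. Under \cref{assumption:discrete} $M_0(X)$ is discrete, hence $\H{M_0(X)\cond Z}=0$ and $\MI{M_0(X)}{Z} = \H{M_0(X)}$. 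Since $Z$ is a representation of $X$ for $\tasksinv{}$, \cref{def:representations} gives $M_0(X) \indep Z \cond X$, so $M_0(X) - X - Z$ is a Markov chain and the data-processing inequality yields $\MI{X}{Z} \geq \MI{M_0(X)}{Z} = \H{M_0(X)} = \H{M(X)}$. Taking the infimum over admissible $Z$ proves the converse.

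For achievability, take $Z = M(X)$ itself. This $Z$ is trivially a valid representation (it is a function of $X$), and for every $Y \in \tasksinv{}$, \Cref{lemma:Mx_is_X} gives $\Riskl{Y}{M(X)} = \Riskl{Y}{X}$, so the zero-excess-risk constraint holds for every loss. The associated rate is $\MI{X}{Z} = \MI{X}{M(X)} = \H{M(X)} - \H{M(X)\cond X} = \H{M(X)}$, where the last equality uses that $M$ is deterministic and $M(X)$ is discrete. Combining with the converse gives \cref{eq:appx:invariant_source_coding:HMX}, and \cref{eq:appx:invariant_source_coding:HX} follows by rewriting $\H{M(X)} = \MI{X}{M(X)} = \H{X} - \H{X\cond M(X)}$, with both terms finite by \cref{assumption:variance}.

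The only real subtlety will be the converse step: specifically, carefully pushing $M_0(X) \aseq h(Z)$ through to $\H{M_0(X)\cond Z}=0$, and then invoking the correct Markov chain direction ($M_0(X) \indep Z \cond X$, not the reverse) to apply DPI. Everything else, including the finiteness $Rate(0) \leq \H{M(X)} < \infty$, is routine bookkeeping once the information-preserving property has been used to convert the equality of Bayes risks into an almost-sure functional dependence.
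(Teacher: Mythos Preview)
Your proposal is correct and follows essentially the same approach as the paper: both directions use \cref{lemma:one_Mx_is_task} and \cref{lemma:Mx_is_X}, the information-preserving property to obtain $\CH{M(X)}{Z}=0$ in the converse, and the discreteness assumption to close. The only cosmetic difference is that you invoke the DPI on the Markov chain $M_0(X)-X-Z$ explicitly, whereas the paper phrases the converse via the rate-distortion theorem with the constraint $\CH{M(X)}{Z}=0$; these amount to the same thing.
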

\begin{proof}
By \cref{lemma:Mx_is_X}  we know that for all $\rv Y \in \tasksinv{}$ we have $ \Riskl{Y}{X} = \Riskl{Y}{M(X)}$  for any loss function.
As a result, the lossless prediction bit rate is at most $Rate(0) \leq \op{H}{M(\rv X)}$ because by \citepos{shannon_mathematical_1948} source coding theorem  $M(X)$ can be transmitted using $\H{M(\rv X)}$ bits as it is discrete (\cref{assumption:discrete}) and its entropy is finite (\cref{assumption:variance}).
\ydnote{TO replace with lemma finite risk of M(X)}
 
Let us now show that it is not possible to achieve a lower rate.
By \cref{lemma:one_Mx_is_task} there exists at least one maximal invariant such that $M(\rv X) \in \tasksinv{}$.
We now prove that there is no $\rv Z$ such that $\Riskl{M(\rv X)}{Z} = \Riskl{M(\rv X)}{X}$ and can be transmitted with less than $\H{M(\rv X)}$ bits.
Suppose that $\Riskl{M(\rv X)}{Z} = \Riskl{M(\rv X)}{X}$,  then because $L$ is a meaningful loss function we have there exists  function $h$ s.t. $h(Z) = M(X)$.
Using the discreteness of $M(X)$  (\cref{assumption:discrete}), we thus have $\CH{M(X)}{Z}=0$.
Using the RD theorem (\cref{lemma:rate_distortion}) we know that the minimum bit rate for transmitting $\rv Z$ under the constraint $\CH{M(X)}{Z}=0$ is $\MI{Z}{M(X)} = \H{M(X)} - \CH{M(X)}{Z} = \H{M(X)} - 0$.
We thus find that transmitting a $\rv Z$ which ensures lossless predictions cannot require less  than $\H{M(\rv X)}$ bits which concludes the proof that $Rate(0) = \H{M(\rv X)}$.
To get \cref{eq:appx:invariant_source_coding:HX} we use the same decomposition as in \cref{thm:rate_invariance_distortion}.
\end{proof}

\subsection{Generalizing \texorpdfstring{\Cref{thm:rate_invariance_distortion}}{Theorem 2}: optimal bit rate under MSE loss}
\label{appx:theorem_mse}

In \cref{appx:theorem_logloss} we proved the rate-invariance theorem for the case of log loss.
Log loss is the standard loss function for classification in ML, but in the case of regresssion it is more common to use the MSE loss function.
In \cref{thm:invariant_source_coding} we have seen that our results for lossless prediction regime also holds for MSE (and other losses).
Due to the importance of MSE in ML, we also provide a full rate-invariance theorem for MSE. 

We assume that $Y \in \mathbb{R}^k$ for all $Y \in \tasksinv{}$, with some $k < \infty$. Tasks taking values in fewer dimensions can always be padded with zeros. In this section $||\argdot ||$ denotes the Euclidean norm, the $L^2$-norm of a random variable $X$ is $\sqrt{\mathbb{E}[||X||^2]}$, and $L^2(\Omega, \mathcal{H}, \mathbb{P})$ is the Hilbert space of all $\mathbb{R}^k$-valued random variables with finite $L^2$-norm (random variables that are almost surely equal are identified as the same element of $L^2$). Since $\Omega$ and $\mathbb{P}$ remain unchanged but we may consider different $\sigma$-algebras, we use, e.g., $L^2(\mathcal{H})$ for short.

Importantly, we do not require \cref{assumption:discrete} (countable $\mathcal{X}/\sim$). 
Instead, we require the following common (known as a finite power constraint in compression \cite{cover_elements_2006}) regularity condition on $\tasksinv{}$ to ensure that we can attain a relevant supremum.

\ydnote{should change this assumption and paragraph to $Y$ and $X$ and then have lemma for $M(X)$
}
\begin{assumption}[$L^2$-boundedness]\label{assumption:l2:bounded}
    We assume that $\tasksinv$ is bounded in $L^2$. That is, there is some $0<B^2<\infty$ such that $\bbE[||Y||^2] \leq B^2$ for all $Y \in \tasksinv$.
\end{assumption}

Note that this is a slightly more stringent version of \cref{assumption:variance}, as it essentially requires bounded variance of $\rv Y$ rather than only finite variance.

Let $\borel(\calM)$ denote the $\sigma$-algebra generated by a maximal invariant (all maximal invariants generate the same $\sigma$-algebra because they are one-to-one functions of each other \cref{lemma:maxinv_bijection}), and let $\calM_b$ denote all $\mathbb{R}^k$-valued functions that are $\borel(\calM)$-measurable and bounded in $L^2$. Then $\calM_b \subset \tasksinv$, and by \cref{lemma:one_Mx_is_task}, $\calM_b$ is non-empty. 

The main step in the proof of a rate-invariance theorem for MSE is the following proposition, which shows that the excess risk distortion for MSE is a valid distortion that can be expressed in terms of a maximum over $\calM_b$.




\begin{proposition}[Invariant Distortion for MSE]\label{appx:prop:invariant_distortion_mse}
Let $\tasksinv{}$ be the invariant tasks of interest w.r.t.\  $(\mathcal{X},\sim)$  and w.r.t.\ the MSE.
Fix any maximal invariant $M$ that is also in $\tasksinv{}$, and let $\rv Z$ be a representation of $\rv X$ for $\tasksinv{}$. 
Then the excess distortion  w.r.t.\ MSE, \disttextinv{}, is a valid distortion and
\begin{equation} \label{eq:mse:distortion}
\diststinv{} = \sup_{f \in \calM_b}  \Riskmse{f(M(\rv X))}{Z} \;.
\end{equation}
\end{proposition}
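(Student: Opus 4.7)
The plan is to prove the two claims in sequence: establish the equality $\diststinv{} = \sup_{f \in \calM_b} \Riskmse{f(M(X))}{Z}$ by reducing each side to a sup over $\calM_b$, then exhibit a pointwise distortion realizing the sup to conclude validity.

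For the equality, I start from $Y \in \tasksinv{}$ and use \cref{lemma:desintegration} to obtain $Y \indep X \mid M(X)$, so $\bbE[Y \mid X] = \bbE[Y \mid M(X)] = f_Y(M(X))$ where $f_Y(m) := \bbE[Y \mid M(X) = m]$. Because $Z$ is a representation of $X$ for $\tasksinv{}$ (\cref{def:representations}), the Markov chain $Z - X - Y$ gives $\bbE[Y \mid Z] = \bbE[\bbE[Y \mid X] \mid Z] = \bbE[f_Y(M(X)) \mid Z]$. Conditioning on $(X,Z)$ and using $\bbE[Y \mid X, Z] = \bbE[Y \mid X]$ kills the cross term in the expansion of $\|(Y - \bbE[Y|X]) + (\bbE[Y|X] - \bbE[Y|Z])\|^2$, producing the Pythagorean identity
\begin{equation*}
\Riskmse{Y}{Z} - \Riskmse{Y}{X} = \bbE[\|\bbE[Y|X] - \bbE[Y|Z]\|^2] = \Riskmse{f_Y(M(X))}{Z},
\end{equation*}
where the last equality uses $\Riskmse{f_Y(M(X))}{X} = 0$. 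Jensen together with \cref{assumption:l2:bounded} yields $\|f_Y(M(X))\|_{L^2} \leq \|Y\|_{L^2} \leq B$, so $f_Y \in \calM_b$. For the reverse direction, any $f \in \calM_b$ makes $f(M(X))$ conditionally invariant (by $\borel(\calM)$-measurability through $X$) and $L^2$-bounded by $B$, hence in $\tasksinv{}$, and the same identity gives its excess risk as $\Riskmse{f(M(X))}{Z}$. Taking suprema on both sides yields the claimed equality.

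For validity, I follow the template of the log-loss case: if the supremum is attained by some $f^{\ast} \in \calM_b$, set $d(x, z) := \|f^{\ast}(M(x)) - h^{\ast}(z)\|^2$ with $h^{\ast}(z) := \bbE[f^{\ast}(M(X)) \mid Z = z]$; then $d \geq 0$, $\bbE[d(X,Z)] = \Riskmse{f^{\ast}(M(X))}{Z} = \diststinv{}$, and $\bbE[d(X, z_0)] \leq 2B^2 + 2\|h^{\ast}(z_0)\|^2 < \infty$ for any fixed $z_0$, which fits \cref{def:valid_distortion}. Rewriting the sup as $\sup_{\|f(M(X))\|_{L^2} \leq B} \bigl(\|f(M(X))\|^2 - \|Tf\|^2\bigr)$ with $Tf := \bbE[f(M(X)) \mid Z]$ a bounded linear operator of norm at most $1$, the functional is convex in $f$, so the sup sits on the sphere $\|f(M(X))\|_{L^2} = B$ and reduces to minimizing $\|Tf\|^2$ there. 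The main obstacle is attainment of this minimum: $T^{\ast}T$ is self-adjoint positive with $\|T^{\ast}T\| \leq 1$, but its smallest spectral value need not be an eigenvalue in general. I would resolve this either by imposing mild regularity on the joint law of $(M(X), Z)$ ensuring compactness of $T^{\ast}T$, or by passing to a supremizing sequence $f_n \in \calM_b$ and constructing the pointwise distortion via a limit of the associated $d_n$.
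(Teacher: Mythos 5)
Your derivation of the equality is correct and is essentially the paper's own proof in cleaner notation: both arguments reduce the excess risk to $\bbE[\,\|\bbE[Y \mid X] - \bbE[Y \mid Z]\|^2\,]$ (you call this the Pythagorean identity; the paper expands quadratics and cancels the common term), both use conditional invariance and the representation Markov chain to rewrite this as $\Riskmse{f_Y(M(X))}{Z}$ with $f_Y := \bbE[Y \mid M(X)]$, and both observe that $\{f_Y : Y \in \tasksinv\} = \calM_b$ (Jensen together with \cref{assumption:l2:bounded} for the forward inclusion; $f_h = h$ for $h \in \calM_b \subset \tasksinv$ for the reverse), so the two suprema coincide.

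Your attainment concern in the validity step is genuine, and in fact you have located a spot where the paper's own argument breaks. Writing $P^\perp$ for the orthogonal projection onto $L^2(\borel(\calZ))^\perp$ in $L^2(\mathcal{H})$, the object to maximize is $\sup_{f \in \calM_b}\|P^\perp f(M(X))\|^2$. The paper replaces this with $\sup_{h \in \calM_b \cap L^2(\borel(\calZ))^\perp}\|h(M(X))\|^2$ and concludes $\diststinv{}\in\{0,B^2\}$. That replacement is unjustified, because $P^\perp f(M(X))$ need not be $\borel(\calM)$-measurable, and it is false in general: take $M(X)$ uniform on $\{0,1\}$ and $Z$ equal to $M(X)$ with probability $1-p$ and to $1-M(X)$ with probability $p$, for $0 < p < 1/2$; then $\sup_{f\in\calM_b}\|P^\perp f(M(X))\|^2 = 4B^2 p(1-p)$ lies strictly between $0$ and $B^2$ while $\calM_b \cap L^2(\borel(\calZ))^\perp = \{0\}$. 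The correct characterization is $\diststinv{} = B^2(1 - m^2)$ with $m$ the smallest singular value of the conditional-expectation operator $f(M(X)) \mapsto \bbE[f(M(X)) \mid Z]$ restricted to $L^2(\borel(\calM))$, and $m$ need not be attained. Your proposed fix (a) does not actually help, though: a compact injective operator on an infinite-dimensional space has $m = 0$ with no minimizer, which is precisely the bad case. As written, neither your argument nor the paper's cleanly establishes the validity clause when the supremum over $f$ is not attained; a limiting argument along your proposal (b), or a direct proof that \cref{def:valid_distortion} can be met without an extremal $f^*$, is what is missing.
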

\ydnote{If we wanted to be picky we should use the same expectation notation everywhere (I don't feel strongly about either). Also not important for main deadline.}
\begin{proof}
	For compactness of notation, we use, for example, $\bbE_{X,Y}$ to denote expectation with respect to $\bbP$, and $\bbE_X \bbE_{Y|X}$ to denote an iterated expectation. Our proof makes use of conditional expectation in $L^2$ being defined as projection in a Hilbert space. See \citep[e.g.,][Ch.\ 22-23]{jacod_probability_2004}.

	Firstly, fix some $Y\in \tasksinv$. It is well known that
	\begin{align*}
	    \bbE[||Y - \phi(X)||^2] & = \bbE[\; \bbE[||Y||^2\mid X] - ||\bbE[Y\mid X]||^2 \;] + \bbE[\; ||\bbE[Y|X] - \phi(X)||^2 \;] \;.
	\end{align*}
	Taking the infimum over all measurable $\phi\colon \mathcal{X} \to \mathbb{R}^k$, we have
	\begin{align*} 
	    \inf_{\phi\colon \mathcal{X} \to \mathbb{R}^k} \bbE_{X,Y}[||Y - \phi(X)||^2] = \bbE_{X}\left[\; \bbE_{Y|X}[||Y||^2 ] - ||\bbE_{Y|X}[Y]||^2 \; \right] \;,
	\end{align*}
	when $\phi(X) = \bbE[Y\mid X]$ $\bbP(X)$-almost everywhere. Now by the conditional invariance, $Y \indep X \cond M(X)$ (\cref{lemma:desintegration}), which implies $\bbE_{Y|X}[f(Y)] = \bbE_{Y|M(X)}[f(Y)]$ for any measurable function $f$. Therefore, 
	\begin{align} \label{eq:phi:decomp}
	    \inf_{\phi\colon \mathcal{X} \to \mathbb{R}^k} \bbE_{X,Y}[||Y - \phi(X)||^2] = \bbE_{M(X)}\left[\; \bbE_{Y|M(X)}[||Y||^2 ] - ||\bbE_{Y|M(X)}[Y]||^2 \; \right] \;,
	\end{align}
	when $\phi(X) := \phi'(M(X)) = \bbE_{Y|M(X)}[Y]$ $\bbP(X)$-almost everywhere.

	Similarly, for fixed $Z$ t.v.i $\calZ$ with $Z \indep Y \cond M(X)$,
	\begin{align} \label{eq:psi:decomp}
    & \bbE_{Z,Y}[||Y - \psi(Z)||^2]  = \bbE_{M(X)} \bbE_{Y,Z|M(X)}[[||Y - \psi(Z)||^2] \\ 
    & \quad = \bbE_{M(X)}\left[\; \bbE_{Y|M(X)}[||Y||^2 ] - ||\bbE_{Y|M(X)}[Y]||^2 \; \right]  \nonumber \\
    & \quad\;  + \bbE_{M(X)}\left[\; ||\bbE_{Y|M(X)}[Y] - \bbE_{Z|M(X)}[\psi(Z)] ||^2 \;  
      + \bbE_{Z|M(X)}[|| \psi(Z) - \bbE_{Z|M(X)}[\psi(Z)]||^2 ] \;\right] \nonumber
	\end{align}

	Observe that for any $Y\in\tasksinv$, \eqref{eq:phi:decomp} and the first term of \eqref{eq:psi:decomp} will cancel in the excess risk distortion. Therefore, 
	\begin{align*}
		\diststinv{} = \sup_{Y \in \tasksinv} \inf_{\psi : \calZ\to\bbR^k} &
			\bbE_{M(X)}\left[\; ||\bbE_{Y|M(X)}[Y] - \bbE_{Z|M(X)}[\psi(Z)] ||^2 \right. \\
      & \left. + \bbE_{Z|M(X)}[|| \psi(Z) - \bbE_{Z|M(X)}[\psi(Z)]||^2 ] \;\right] \;. \nonumber
	\end{align*}
	
	When taking the supremum over $Y\in\tasksinv$, $Y$ can only affect \disttextinv{} through its conditional expectation given $M(X)$, $\bbE_{Y|M(X)}[Y]$. That conditional expectation is a $\borel(\calM)$-measurable function, so $\bbE_{Y|M(X)}[Y] \in \calM_b$ for all $Y\in\tasksinv$. Therefore,
	\begin{align*}
		\{ \bbE_{Y|M(X)}[Y] \colon Y \in \tasksinv \} \subset \calM_b \subset \tasksinv \;,
	\end{align*}
	and we can take the supremum over functions $f \in \calM_b$ instead of $Y \in \tasksinv$, which yields
	\begin{align*}
		\diststinv{} = \sup_{f \in \calM_b} \inf_{\psi : \calZ\to\bbR^k} &
			\bbE_{M(X)}\left[\; ||f(M(X)) - \bbE_{Z|M(X)}[\psi(Z)] ||^2 \right. \\
      & \left. + \bbE_{Z|M(X)}[|| \psi(Z) - \bbE_{Z|M(X)}[\psi(Z)]||^2 ] \;\right] \;. \nonumber
	\end{align*}
	Expanding each quadratic and canceling terms involving $\bbE_{M(X)}[||\bbE_{Z|M(X)}[\psi(Z)]||^2]$, we find
	\begin{align}
		\diststinv{} &= \sup_{f \in \calM_b} \inf_{\psi : \calZ\to\bbR^k} 
			\bbE_{M(X),Z}\left[ || f(M(X)) - \psi(Z) ||^2 \right] \\
			& = \sup_{f \in \calM_b}  \Riskmse{f(M(\rv X))}{Z} \\
			& = \sup_{f \in \calM_b} \bbE_{M(X),Z}\left[ || f(M(X)) - \bbE_{M(X)|Z}[f(M(X))] ||^2 \right] \;.
	\end{align}
	
	Now, since conditional expectation given $Z$ is just projection onto the (Hilbert) subspace $L^2(\borel(\calZ))$, we have
	\begin{align*}
	    \diststinv{} &= \sup_{h \in \calM_b \cap L^2(\borel(\calZ))^{\perp}} \bbE_X[ ||h(M(X))||^2 ] \;,
	\end{align*}
	where $L^2(\borel(\calZ))^{\perp}$ is the subspace orthogonal to $L^2(\borel(\calZ))$ in $L^2(\mathcal{H})$. Since $L^2(\borel(\calZ))^{\perp}$ and $L^2(\borel(\calM))$ are both closed (sub-)Hilbert spaces, it is straightforward to show that so too is their intersection $L^2(\borel(\calM)) \cap L^2(\borel(\calZ))^{\perp}$. The bounded (by $B$) elements of $\calM_b \cap L^2(\borel(\calZ))^{\perp}$ are just the closed ball of radius $B$, so 
	\begin{align*}
	    \diststinv{} &= \sup_{\substack{h \in L^2(\borel(\calM)) \cap L^2(\borel(\calZ))^{\perp} \\ \bbE_X[ ||h(M(X))||^2 ] \leq B^2}} \bbE_X[ ||h(M(X))||^2 ] \;.
	\end{align*}
	Now, since neither $\calM_b$ nor $L^2(\borel(\calZ))^{\perp}$ is empty, their intersection is empty if and only if $\calM_b \subset L^2(\borel(\calZ))$, i.e., $\borel(\calM)yeah  \subset \borel(\calZ)$: all maximal invariants can be written as functions of $Z$. In that case, $\diststinv{}=0$. Alternatively, if $\calM_b \cap L^2(\borel(\calZ))^{\perp}$ is not empty, then choose some $h^*$ from it such that $\bbE_X[||h^*(M(X))||^2] = B^2 = \sup_{f \in \calM_b} \Riskmse{f(M(\rv X))}{Z}$.
	
	
	Defining $d(x,z) := || h^*(M(x)) ||^2$ yields a valid distortion \disttextinv{}.
\end{proof}

Note that the last last part of the proof makes it clear that for MSE the invariance distortion is either $0$ or $B^2$.
Intuitively this happens because MSE risk is not invariant to bijections so it possible to make any predictive mistake arbitrarily bad by setting $M(X)$ to be arbitrarily large at this mistaken prediction. 
This suggests that for the MSE risk (and other loss functions that are not invariant to bijections) the expected excess risk might be better suited than the worst case excess risk that we considered.

As the invariant distortion under MSE is valid, we can now simply incorporate it into the rate distortion theorem to get the desired theorem.

\begin{theorem}[Rate-invariance for MSE]\label{thm:rate_invariance_distortion_mse}
Let $\delta \geq 0$.
Let $\tasksinv{}$ be the invariant tasks of interest w.r.t.\ ($\mathcal{X}$,$\sim$) and the MSE, $M$ be any maximal invariant in $L^2(\Omega,\mathcal{H},\mathbb{P})$, and $\rv Z$ be a representation of $\rv X$ for $\tasksinv{}$.
Let $Rate(\delta)$ denote the minimum achievable bit-rate for transmitting an \iid source of $\rv Z$ such that\ for any $\rv Y \in \tasksinv$ we have $\Riskmse{Y}{Z} \leq \delta + \Riskmse{Y}{X}$.
Then $Rate(\delta)$  is given by
\begin{align}
Rate(\delta) & = 
\inf_{P(\rv Z | \rv X) \text{ s.t. } \diststinv{} \leq \delta }   \op{I}{\rv X; \rv Z} \label{eq:rate_invariance_distortion_mse} \;.
\end{align}
where $\diststinv{} \defeq \sup_{f \in \calM_b} \Riskmse{f(M(\rv X))}{Z}$.
\end{theorem}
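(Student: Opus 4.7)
The plan is to invoke Shannon's rate-distortion theorem directly, using Proposition \ref{appx:prop:invariant_distortion_mse} to supply the necessary validity of the invariance distortion under MSE. The bulk of the technical work is already contained in that proposition; the theorem itself is essentially a repackaging.

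First I would observe that the operational constraint in the statement, ``$\Riskmse{Y}{Z} \leq \delta + \Riskmse{Y}{X}$ for every $Y \in \tasksinv$'', is by definition of \disttextinv{} in \cref{def:excess_distortion} equivalent to $\diststinv{} \leq \delta$. So the set of encoders $P(\rv Z \mid \rv X)$ admissible under the theorem's predictive-risk constraint coincides exactly with the set admissible under the distortion constraint $\diststinv{} \leq \delta$. Second, I would appeal to \cref{appx:prop:invariant_distortion_mse}, which establishes both that \disttextinv{} admits the closed-form expression $\sup_{f \in \calM_b} \Riskmse{f(M(\rv X))}{\rv Z}$ and, crucially, that it can be written as $\bbE_{p(\rv X,\rv Z)}[d(\rv X,\rv Z)]$ for a nonnegative measurable $d$ with finite expectation for at least one $z \in \calZ$ (in the proof, $d(x,z) = \|h^*(M(x))\|^2$, which is bounded by $B^2$). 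This is precisely \cref{def:valid_distortion}, so \disttextinv{} qualifies as a valid distortion.

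With both ingredients in hand, the theorem follows immediately by applying the rate-distortion theorem \cref{lemma:rate_distortion} with distortion measure \disttextinv{} and threshold $\delta$:
\begin{equation*}
Rate(\delta) \;=\; \inf_{P(\rv Z \mid \rv X)\ \text{s.t.}\ \diststinv{} \leq \delta} \op{I}{\rv X;\rv Z},
\end{equation*}
which is exactly \cref{eq:rate_invariance_distortion_mse}. No further simplification in terms of $\H{M(\rv X)}$ is available (unlike in the log-loss case of \cref{thm:rate_invariance_distortion}), because MSE risk is not invariant to bijections of the target and because $\calM_b$ is in general an infinite-dimensional function class, so the theorem is stated at the level of the RD optimization itself.

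The main obstacle in principle would be checking validity of \disttextinv{} under MSE, but this is already dispatched by \cref{appx:prop:invariant_distortion_mse}; the only subtlety worth flagging is confirming that the pointwise distortion $d(x,z) = \|h^*(M(x))\|^2$ from the proposition's construction satisfies $\bbE[d(\rv X,z)] < \infty$ for \emph{some} constant $z$, which holds trivially because $d$ does not depend on $z$ and $\bbE[d(\rv X,z)] = \bbE_X[\|h^*(M(\rv X))\|^2] \leq B^2 < \infty$ by \cref{assumption:l2:bounded}. Apart from that bookkeeping step, the proof is a direct corollary of the proposition and the classical RD theorem.
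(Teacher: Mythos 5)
Your proposal is correct and follows exactly the paper's own argument: the paper's proof of \cref{thm:rate_invariance_distortion_mse} is a one-line invocation of the validity of \disttextinv{} from \cref{appx:prop:invariant_distortion_mse} together with the rate-distortion theorem \cref{lemma:rate_distortion}. The extra detail you supply — unpacking why the operational constraint matches $\diststinv{} \leq \delta$, and verifying $\bbE_X[\|h^*(M(\rv X))\|^2] \leq B^2 < \infty$ — is exactly the bookkeeping the paper delegates to the proposition, so nothing new is introduced and nothing is missing.
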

\begin{proof}
The result \eqref{eq:rate_invariance_distortion_mse} follows from the fact that \disttextinv{} is a valid distortion (\cref{appx:prop:invariant_distortion_mse}) and the rate-distortion theorem (\ref{lemma:rate_distortion}). 
\end{proof}
\ydnote{For final version we'll have to deal with the fact that the distortion seems to be either 0 or B so there are only 2 rates that can be reached.
Is it stillf usefule ?
THe derivations seems very useful, but the result less so.
Really for MSE we should a about median / mean instead of sup excess risk ...
}
\ydnote{For final version (not a priority) we might want to provide also an upper bound, which is achieved if $f^*(M(X))$ is gaussian with variance $sigma^2$, specifically, $\frac{1}{2}\log \frac{\sigma^2}{\delta}$. This shows that for $\delta > 0$ we do have finite rate.} 
As a corollary, we obtain the following lower bound for the rate, which may be useful in practice.
\begin{corollary}
	Let $M$ be any $\bbR^k$-valued maximal invariant with a probability density with respect to Lebesgue measure. Let $g \colon \mathcal{M} \to \bbR^k$ be any homeomorphism of $M$ (including the identity map), and $f^*$ any maximum distortion achieving function. Then the following lower bounds hold:
	\begin{align} \label{eq:mse:rate:lower:bounds}
	 Rate(\delta) & \geq  h(g(M(X))) - \frac{k}{2}\ln(2\pi e \delta/k) \\
	 Rate(\delta)	&  \geq h(f^*(M(X))) - \frac{k}{2}\ln(2\pi e \delta/k) \;.
	\end{align} 
\end{corollary}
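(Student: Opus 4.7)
The plan is to combine the rate-distortion formula from Theorem~\ref{thm:rate_invariance_distortion_mse} with the data-processing inequality and the classical Shannon lower bound for mutual information under a squared-error constraint. Both $g(M(X))$ and $f^*(M(X))$ are deterministic functions of $X$, so we can bound $\MI{X}{Z}$ below by the mutual information with either of them, after which the Shannon lower bound splits this into a differential entropy minus a log-distortion term.

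Concretely, fix any $P(Z\mid X)$ that is feasible in Theorem~\ref{thm:rate_invariance_distortion_mse}, i.e.\ with $\diststinv{} \leq \delta$, and let $Y = g(M(X))$. Because $Y$ is a deterministic function of $X$, $Z - X - Y$ is a Markov chain, and the data-processing inequality gives $\MI{X}{Z} \geq \MI{Y}{Z} = h(Y) - h(Y\mid Z)$. Since $g$ is a homeomorphism and $M(X)$ admits a Lebesgue density, so does $Y$, hence $h(Y)$ is well-defined. Let $\hat Y(Z) = \bbE[Y \mid Z]$ be the MMSE estimator; then $h(Y\mid Z) = h(Y - \hat Y(Z) \mid Z) \leq h(Y - \hat Y(Z))$, and the maximum-entropy bound on $\bbR^k$ (Gaussians maximize differential entropy given a second moment) yields
\begin{equation}
h(Y - \hat Y(Z)) \;\leq\; \tfrac{k}{2} \ln\bigl(2\pi e \, \bbE[\|Y - \hat Y(Z)\|^2]/k\bigr) \;=\; \tfrac{k}{2} \ln\bigl(2\pi e \, \Riskmse{Y}{Z}/k\bigr).
\end{equation}
Provided $g(M(X)) \in \calM_b$, feasibility together with Proposition~\ref{appx:prop:invariant_distortion_mse} gives $\Riskmse{Y}{Z} \leq \diststinv{} \leq \delta$, so by monotonicity of $\ln$ we obtain the first bound; taking the infimum over $Z$ preserves the inequality. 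The second bound is identical with $Y = f^*(M(X))$: by definition of a maximum-distortion-achieving function, $f^* \in \calM_b$ and $\Riskmse{f^*(M(X))}{Z} = \diststinv{} \leq \delta$, so the same chain of inequalities concludes.

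The main delicate point is verifying that $g(M(X))$ (and $f^*(M(X))$) actually belongs to $\calM_b$, i.e.\ is bounded in $L^2$, so that the distortion constraint may be invoked; for the homeomorphism $g$ this is a tacit regularity condition that should be made explicit. A secondary subtlety is that $h(f^*(M(X)))$ requires $f^*(M(X))$ to admit a Lebesgue density: this is automatic whenever $f^*$ is essentially injective, but for a general maximum-distortion-achieving function one may need to approximate $f^*$ by an invertible element of $\calM_b$ attaining the supremum up to $\epsilon>0$ and then pass to the limit.
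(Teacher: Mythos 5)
Your proof is correct and takes essentially the same route as the paper's: data-processing inequality to pass from $\op{I}{\rv X;\rv Z}$ to $\op{I}{Y;\rv Z}$, then the Shannon lower bound (translation-invariance and conditioning-reduces-entropy to land on the residual's entropy, followed by the maximum-entropy Gaussian bound under a second-moment constraint), capped off by the MSE distortion constraint $\Riskmse{Y}{Z} \leq \diststinv{} \leq \delta$. The only cosmetic difference is that the paper bounds $\log\det K$ via a per-component Jensen step while you invoke the isotropic bound directly (these are equivalent), and the regularity caveats you raise—that $g(M(X))$ and $f^*(M(X))$ must lie in $\calM_b$ and that $f^*(M(X))$ must admit a Lebesgue density for $h(f^*(M(X)))$ to be defined—are genuine and are also glossed over in the paper's proof.
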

\begin{proof}
	First, by the DPI, for any homeomorphism $g$ of $M$,
	\begin{align*}
		\op{I}{\rv X; \rv Z} \geq \op{I}{\rv M(X); \rv Z} = \op{I}{ g(M(X)); \rv Z} \geq  \op{I}{ f^*(M(X)); \rv Z} \;.
	\end{align*}
	The first inequality is an equality if and only if $X \indep Z \cond M(X)$; the second if and only if $f^*$ is a homeomorphism of $M$ (and therefore is itself a maximal invariant). Second, using the translation-invariance of differential entropy and the fact that conditioning reduces differential entropy, 
	\begin{align} \label{eq:information:chain}
		\op{I}{\rv M(X); \rv Z} & = h(M(X)) - h(M(X) \cond Z) \\
			& \geq h(M(X)) - h(M(X) - \bbE_{M(X)|Z}[M(X)] \cond Z) \\
			& \geq h(M(X)) - h(M(X) - \bbE_{M(X)|Z}[M(X)]) \;,
	\end{align}
	Now, 
	\begin{align*}
		\bbE_{M(X),Z}\left[ ||M(X) - \bbE_{M(X)|Z}[M(X)]||^2 \right] & \leq \bbE_{M(X),Z}\left[ || f^*(M(X)) - \bbE_{M(X)|Z}[f^*(M(X))] ||^2 \right] \\
		& = \diststinv{} \;. 
	\end{align*}
	The maximum entropy distribution subject to this second-moment constraint is the $k$-dimensional Gaussian distribution $\mathcal{N}(0, K)$, where $K$ is a diagonal covariance matrix with entries $K_{ii} = \bbE_{M(X),Z}\left[ (f^*(M(X))_{ii} - \bbE_{M(X)|Z}[f^*(M(X))]_{ii})^2 \right]$. The differential entropy of that Gaussian distribution is $\frac{1}{2}\log\left( (2\pi e)^k \det{K} \right)$, and by Jensen's inequality,
	\begin{align*}
		\log\det{K} & = \sum_{i=1}^k \log K_{ii} = \sum_{i=1}^k \log \left( \bbE_{M(X),Z}\left[ (f^*(M(X))_{ii} - \bbE_{M(X)|Z}[f^*(M(X))]_{ii})^2 \right] \right) \\
		& \leq k \log \left( \sum_{i=1}^k  \frac{1}{k} \bbE_{M(X),Z}\left[ (f^*(M(X))_{ii} - \bbE_{M(X)|Z}[f^*(M(X))]_{ii})^2 \right] \right) \\
		& = k\log (\diststinv{}/k) \;.
	\end{align*}

	Putting this together with the first inequality in \eqref{eq:information:chain},
	\begin{align*}
		\op{I}{\rv M(X); \rv Z} & \geq h(M(X)) - \frac{k}{2} \log(2\pi e) - \frac{k}{2}\log (\diststinv{}/k) \\
		& \geq h(M(X)) - \frac{k}{2} \log(2\pi e) - \frac{k}{2}\log(\delta/k) \;.
	\end{align*}
	The same argument holds for either of $g(M(X))$ or $f^*(M(X))$, yielding the stated lower bounds.
\end{proof}


\clearpage
\newpage

\section{Variational objectives}
\label{appx:objectives}

In this section we will derive the variational bounds for estimating the rate and the distortion.
In contrast to the proofs of main theoretical results (previous section) derivations  will be less formal.
Throughout this section we focus on the log loss and implicitly make all assumptions described in \cref{appx:assumptions}.

Recall that the optimal bit-rate is simply the Rate Distortion function using our invariance distortion (Rate-Invariance function; \cref{eq:rate_invariance_constrained} ), so any optimal encoder (for a given $\delta$) can be obtained by using the following arg minimum:
\begin{equation}\label{appx:unconstrained_RD}
 \argmin_{p(\rv Z| \rv X) \ \text{s.t.} \ \Risklog{M(X)}{Z} \leq \delta}  \quad  \op{I}{\rv X; \rv Z}
\end{equation}
As optimization in machine learning is typically unconstrained, we prefer using the following Lagrangian formulation.
\begin{equation}\label{appx:eq:unconstrained_RD}
\argmin_{p(\rv Z| \rv X)}  \quad  \op{I}{\rv X; \rv Z} + \beta \cdot \Risklog{M(\rv X)}{\rv Z}
\end{equation}
Both of these formulations are equivalent in that the set of encoders that minimize \cref{appx:eq:unconstrained_RD} for some $\beta \in \mathbb{R}_{\geq 0} $ is equal to the set of encoders that minimize \cref{appx:unconstrained_RD} for some $\delta \in \mathbb{R}_{\geq 0}$ \cite{everett_generalized_1963,berger_rate-distortion_2003}.

Note that due to the piece-wise linearity of our RI function (\cref{fig:schema_RD}), \citet{kolchinsky_caveats_2019} tells us that sweeping over $\beta \in \mathbb{R}_{\geq 0}$ using \cref{appx:eq:unconstrained_RD} will only enable us to learn the vertices of the RI function, namely, $Rate(\H{M(X)})=0$ for $\beta \in [0,1[$ and $Rate(0)=\H{M(X)}$ for $\beta > 1$,
while any point on the RI curve is simultaneously optimal for $\beta=1$.
In other words, although the solutions of \cref{appx:eq:unconstrained_RD} span the entire RI curve, it is, in theory, not possible to decide which points on the RI curve to obtain by sweeping over beta.
\citet{kolchinsky_caveats_2019} shows that this can be easily solved by considering the squared distortion $\Risklog{M(\rv X)}{\rv Z}^2$, in which case sweeping over $\beta$ would be equivalent to sweeping over delta $\delta$ in \cref{appx:unconstrained_RD}.
We did not see any difference in practice so preferred using the more understandable  $\Risklog{M(\rv X)}{\rv Z})$.

Both terms $\MI{ X}{Z}$ and $\Risklog{M(\rv X)}{Z}$ are hard to estimate from samples, so the rest of the section is devoted to deriving variational upper bounds on them.

\subsection{Variational upper bound for the rate term \texorpdfstring{$\MI{Z}{X}$}{I[Z;X]}}
\label{appx:variational_rate}

Let us discuss how to approximate the rate term $\MI{X}{Z} $.
The mutual information is well known to be hard to estimate from samples \cite{paninski_estimation_2003,mcallester_formal_2020}, but fortunately many variational bounds have previously proposed, see \citet{poole_variational_2019} for examples.
In the following we denote a family of variational distributions over $\rv Z$ (priors or entropy models) as $\mathcal{Q} \defeq \set{q \in \mathcal{P}(\mathcal{Z})}$.

\subsubsection{Mutual information bottleneck}
\label{appx:mi_bottleneck}

 The first bound that we consider is the standard upper bound on $\op{I}{\rv X; \rv Z}$, \eg, in VAE \cite{kingma_auto-encoding_2014} or VIB \cite{alemi_deep_2017}.
Specifically:
\begin{align}
\MI{Z}{X} &\defeq \H{Z} - \CH{Z}{X} \\
&= \E{p(\rv Z)}{- \log p(\rv Z)} - \CH{Z}{X} \\
&= \inf_{q \in \mathcal{Q}} \E{p(\rv X)p(\rv Z | \rv X)}{- \log \frac{p(\rv Z)q(\rv Z)}{q(\rv Z)} } - \CH{Z}{X} \\
&= \inf_{q \in \mathcal{Q}}  \E{p(\rv X)p(\rv Z | \rv X)}{- \log q(\rv Z)} - \E{p(\rv X)p(\rv Z | \rv X)}{ \log \frac{p(\rv Z)}{q(\rv Z)} }   - \CH{Z}{X} \\
&= \inf_{q \in \mathcal{Q}}  \E{p(\rv X)p(\rv Z | \rv X)}{- \log q(\rv Z)} - \KL{p(\rv Z),q(\rv Z)}   - \CH{Z}{X} \\
&\leq \inf_{q \in \mathcal{Q}}   \E{p(\rv X)p(\rv Z | \rv X)}{- \log q(\rv Z)}  - \CH{Z}{X} \label{eq:IZX_bound}
\end{align}
The approximation gap is then $\min_{q \in \mathcal{Q}} \KL{p(\rv Z) , q(\rv Z)}$.
The bound has the advantage that if $p(\rv Z) \in \mathcal{Q}$ then the bound is tight.
The major issue with the mutual information bottleneck, is that no efficient compressors can in general achieve the rate given by it \citep{agustsson_universally_2020}.%
\footnote{
See \citet{flamich_compressing_2020} or \citet{schulman_sending_2020} for an $\mathcal{O}(\exp(\MI{Z}{X}))$ algorithm.
}
For example, if we decided to entropy code $\rv Z$ using the entropy model $q(Z)$ then we would achieve $\E{p(\rv X)p(\rv Z | \rv X)}{- \log q(\rv Z)}$ bits which is $\CH{Z}{X}$ more than what is given by our bound.%
\footnote{
Bits-back coding \citep{wallace_classification_1990} can efficiently reach the desired bit-rate only because it is in the lossless setting.
}

\subsubsection{Entropy bottleneck}
\label{appx:entropy_bottleneck}

One specific case of the mutual information bottleneck which enables efficient compression, is when $\rv Z$ is discrete and arises from a deterministic transformation of $\rv X$. 
Indeed, in this case $\CH{Z}{X}=0$ so entropy coding (\eg \cite{rissanen_generalized_1976,duda_asymmetric_2009}) can reach the rate given by our bound.
Using the same derivation as for the mutual information bottleneck, we get,
\begin{equation}\label{eq:HZ_bound}
\MI{Z}{X} = \H{Z}  \leq \inf_{q \in \mathcal{Q}} \E{ p(\rv X)p(\rv Z | \rv X) } {- \log q(\rv z)}.
\end{equation}
This is the standard bound used in neural compressors \cite{balle_end--end_2017,theis_lossy_2017}.
The entropy bottleneck bound has the following downsides compared to mutual information bottleneck:
\begin{itemize}
\item It is generally not true that for any $\delta$ the optimal rate can be achieved by a discrete and deterministic $\rv Z$.
For the specific case of $\delta=0$ and with \cref{assumption:discrete} it is the case, as we can simply set $\rv Z = M(\rv X)$.
\item It is not suitable for gradient based optimization w.r.t. to the encoder (due to the discreteness of $\rv Z$) so we typically have to add noise during training \cite{balle_end--end_2017} which can cause a mismatch between training and testing \cite{agustsson_universally_2020}.
\end{itemize}

Despite these issues we will mostly use the entropy bottleneck bound in experiments as we want our method to give rise to practical compressors.

\subsection{Variational upper bound for the distortion term 
\texorpdfstring{$\Risklog{M(X)}{Z}$}{R[M(X)|Z]}}
\label{appx:variational_distortion}

Let us now consider variational upper-bounds on the distortion $\Risklog{M(X)}{Z}$.
For conciseness we will consider the same setting as in the main paper, \ie, log loss risk and countably many equivalence classes (\cref{assumption:discrete}).
But it is easy to see that the direct distortion bound generalizes to any loss without \cref{assumption:discrete}. \footnote{$\Riskl{M(X)}{Z} \leq \inf_{h \in \mathcal{H}'} \E{p(\rv X)p(\rv Z |\rv X)}{L(M(X),h(Z))}$ which comes from the fact that we are taking an $\inf$ over a subset $\mathcal{H}'$ of all possible predictors.}

\subsubsection{Direct distortion}
\label{appx:direct_dist}

The obvious variational bound on the Bayes risk is the Bayes risk constrained to some functional family. 
$\mathcal{Q}' \defeq \set{q : \mathcal{Z} \to  \mathcal{P}(\mathcal{M})}$ denotes a variational family of regular conditional distributions (decoders), then, 
\begin{align}\label{eq:direct_distortion_bound}
\Risklog{M(X)}{Z} &\defeq \inf_{q} \E{p(\rv X)p(\rv Z |\rv X)} {- \log q(M(\rv x) \cond \rv z)}  \\
&\leq \inf_{q' \in \mathcal{Q}'} \E{p(\rv X)p(\rv Z |\rv X)} {- \log q'(M(\rv x) \cond \rv z)}
\end{align}
which comes from the fact that we are taking an $\inf$ over a subset $\mathcal{Q}'$ of all possible distribution.
A simple derivation shows that the approximation gap is $\min_{q' \in \mathcal{Q}'} \operatorname{D}_{\text{KL}}\left[p(M(\rv x),\rv z)  \|
 q'(M(\rv x) \cond \rv z)p(\rv Z) 
\right]$, so the bound is tight if $p(M(\rv x) \cond \rv z) \in \mathcal{Q}'$.
This direct distortion is simple, but typical variational families will require predicting (``reconstructing'') an expected prediction $\E{q'(M(X)|Z)}{M(X)|Z}$ which is challenging when  $\mathcal{M}$ is in high dimension (\eg unaugmented images).

\subsubsection{Contrastive distortion}
\label{appx:contrastive_dist}

We now consider a bound that does not require explicitely predicting $M(X)$, by considering a noise contrastive estimator \cite{gutmann_noise-contrastive_2010}.
Suppose that for any $\rv Z$ we can sample from a sequence $\mathbf{M} = (M^+, M^-_1, \dots, M^-_n)$, where $M^+ \dsim p(M(\rv X) \cond Z)$ and $\set{M^-_i}_{i=1}^n \iidsim p(M(\rv X))$.
Let $\mathcal{F} \defeq \set{f : \mathcal{M} \times \mathcal{Z} \to \mathbb{R}}$ be a variational family of discriminators which is used scores how likely $M(X),\rv Z$ are to be sampled from the joint $p(M(\rv X),\rv Z)$ rather than the product of the marginal $p(M(\rv X))p(\rv Z)$, then,
\begin{align}\label{eq:contrastive_distortion_bound}
lhs &= \Risklog{M(X)}{Z}\\
&= \CH{M(X)}{Z} & \text{\cref{lemma:risk_entropy}}\\
&= \H{M(X)} - \MI{M(X)}{Z} \label{eq:MIMXz} \\
&\leq \H{M(X)} - \inf_{f \in \mathcal{F}}  \E{p(\rv Z) p(\boldsymbol{\rv M} | \rv Z)}{\log n + \log \frac{\exp f(\rv M^+, \rv Z)}{\sum_{\rv M' \in \mathbf{M}} \exp f(\rv M', \rv Z) }}  \label{eq:infonce_bound} & \text{InfoNCE} \\
&= \inf_{f \in \mathcal{F}}  \E{p(\rv Z) p(\boldsymbol{\rv M} | \rv Z)}{- \log \frac{\exp f(\rv M^+, \rv Z)}{\sum_{\rv M' \in \mathbf{M}} \exp f(\rv M', \rv Z) }} + (const)   
\end{align}
\Cref{eq:infonce_bound} uses InfoNCE \cite{oord_representation_2019}, which is a lower bound on mutual information \cite{poole_variational_2019,song_multi-label_2020}.
The last equation removes constants w.r.t. $p(\rv Z \cond \rv X)$ and $\mathcal{F}$, as these terms do not have to be optimized over.
We see that we are only left with a log softmax term
\footnote{Taking exponentials is not necessary, any function $g : \mathcal{M} \to \mathbb{R}_{\geq 0}$ would work as a discriminator, we use $g \defeq \exp \circ f$ to ensure positivity as this has a nice softmax interpretation and is standard in practice.
Our derivation is still general as we can set $f \defeq log \circ g$.
}
that essentially aims to classify which of all the $M'$ comes from the $p(M(\rv X) \cond \rv Z)$.
The bound is tight if the variational family $\mathcal{F}$ contains the optimal predictor 
and as the number of negatives tends to infinity.
For a detailed discussion about noise contrastive estimation under the log loss, refer to \cite{gutmann_noise-contrastive_2010,ma_noise_2018,rhodes_variational_2019}.

Note that the contrastive distortion has the advantage of not having to reconstruct high dimensional data (\eg for images), but it suffers from bias in the case where the number of negatives $n$ is small \cite{poole_variational_2019}.

One additional derivation which we will need in the following section, is that an upper bound can also be obtained by replacing $M(X)$ by any other r.v. $U$ s.t. $U - M(X) - \rv Z$ forms a Markov Chain. 
Indeed starting from \cref{eq:MIMXz}, we have,
\begin{align}\label{eq:contrastive_distortion_bound_markov}
lhs &= \H{M(X)} - \MI{M(X)}{Z} \\
&\leq \H{M(X)} - \MI{\rv U}{Z} & \text{DPI}\\
&\leq \inf_{f \in \mathcal{F}}  \E{p(\rv Z) p(\boldsymbol{\rv U} | \rv Z)}{- \log \frac{\exp f(\rv U^+, \rv Z)}{\sum_{\rv U' \in \mathbf{U}} \exp f(\rv U', \rv Z) }} + (const)   
\end{align}
The bound can still be tight if in addition we have the following Markov Chain $ M(X) - U -\rv Z$, which implies that $\MI{\rv U}{Z} = \MI{M(X)}{Z}$.

\subsection{Case study: VIC and BINCE under data augmentations}
\label{appx:vic_bince}

The derivations in the previous 2 subsections are relatively general and abstract.
As a case study, we now discuss the two objectives that we propose in the main paper for the case where we have access to the desired data augmentation  and where we use neural functional families.
Namely, the variational families for the entropy model $p_{\phi}(\rv Z)$, the encoder $p_{\varphi}(\rv Z \cond \rv X)$, the decoder $p_{\phi}(M(\rv X) \cond \rv Z)$, and the discriminator $f_{\psi}(\rv X , \rv Z)$ are all parametric neural families.
Throughout this subsection we will consider that we only have access to $p(\rv X)$ through a dataset $\mathcal{D} \defeq \set{x_i}_{i}$ of samples which were independently sampled from $p(\rv X)$.

Let us formalize what we mean by having access to the correct data augmentations.
Let us denote as $\rv A$ the r.v. over a set of augmentations $a : \mathcal{X} \to \Tilde{\mathcal{X}}$, \ie, a stochastic process.
Let $\trv X = A(X)$ be the augmented source.
Note that by $A(\rv X)$ we mean the r.v. which arises by sampling an augmentation $a$ from the stochastic process $p(A)$, and then applying it to some samples $x$ from $\rv X$.
\begin{assumption}[Augmentations]\label{assumption:augmentations}
We assume knowledge of a random augmentation generator $\rv A$ that satisfies the following two key properties
\begin{itemize}
\item \textbf{Retain invariance}. We require $A$ to retain the invariance structure to $(\sim,\mathcal{X})$, specifically, $X \sim A(X)$ almost surely.
\item \textbf{Remove information}. We require $A$ to remove as much information as possible about the input. 
Specifically, $X \indep A(X) \cond M(X) $ almost surely.
\end{itemize}
\end{assumption}
The first requirement is simple but clearly not sufficient.
For example, the identity function does satisfy such requirement for any equivalence relation ($X \sim X$ by definition), yet it does not correspond to what we think as an augmentation because it does not remove any information about the input.
The second requirement formalizes exactly what is required, namely that the augmentation must remove all information about the input besides the knowledge about its equivalence class (which is needed for the first requirement).

The first requirement will typically hold. 
The second in more stringent.
Note that it holds if for all equivalent examples $x\sim x'$ in $\mathcal{X}$ we have $p(A(x)) = p(A(x'))$.
Indeed $p(A(x)) =p(\trv X|x) =p(\trv X|x,M(x))$ and similarly $p(A(x')) =p(\trv X|x',M(x'))$, using $M(x)=M(x')$ we have $p(\trv X|x,M(x)) =p(\trv X|x',M(x))$ for all equivalent $x,x'$ which implies that $X \indep \trv X \cond M(X) $ as desired.
In practice this only needs to hold for examples in our datasets, \ie, the second requirement holds if for all equivalent $x,x' \in \mathcal{D}$ in a dataset we have $p(A(x)) = p(A(x'))$.
This is likely to hold in practice as the number of examples that are equivalent in a dataset will be small if $|\mathcal{X}| \gg |\mathcal{D}|$ as is typically the case.
In particular, if a dataset does not contain any equivalent examples, \ie, for any $x,x' \in \mathcal{D}$ we have $x \not\sim x'$ then the requirement trivially holds.



\subsection{Issue: dealing with unknown \texorpdfstring{$M(X)$}{M(X)}}
\label{appx:unkown_M}

One issue which arises in practice is that we generally do not have access to $M(X)$.
We will overcome this issue by taking advantage of the fact that we have access to data augmentations $\rv A$  that induce our equivalence relation.
Intuitively, we will treat the augmented r.v. $\trv{X} \defeq A(\rv X)$ as if it were the source, and use the actual source $\rv X$ instead of $M(\trv{X})$.
Note that by $A(\rv X)$ we mean the r.v. which arises by sampling an augmentation $a$ from the stochastic process $p(A)$, and then applying it to some samples $x$ from $\rv X$.
From now on, let us denote as $\rv{Z} \dsim p(Z|\trv{X})$ the representation that arises from the augmented source.
\ydnote{Would be slightly more precise to introduce a new letter $\trv{Z}$ but I feel it will confuse people}
Under suitable conditions on $\rv A$ we can replace the previous objective \cref{appx:eq:unconstrained_RD} with the following equivalent objective, which we denote as $\Lri^\beta$, 
\begin{equation}\label{appx:eq:unconstrained_RD_no_Mx}
\argmin_{p(\rv Z | A(\rv X) )}  \quad  \MI{A(\rv{X})}{\rv{Z}} + \beta \cdot \Risklog{X}{\rv{Z}}.
\end{equation}
By equivalence of those objectives we mean that for any $\beta \in \mathbb{R}_{\geq 0}$ the set of RD tuples $(\MI{X}{ Z},\Risklog{M(X)}{Z})$ that are achieved by solutions of \cref{appx:eq:unconstrained_RD}  is equal to the set of RD tuples $(\MI{\trv X}{ Z},\Risklog{X}{Z})$ that are achieved by solutions of \cref{appx:eq:unconstrained_RD_no_Mx}.
In other words, they generate the same segment of the RI function.

First, let us show why we can replace $\rv X$ by $\trv{  X}$, \ie, show that for any $\beta$  we have that $\argmin_{p(Z| X)}  \quad  \MI{X}{ Z} + \cdot \Risklog{M(X)}{Z}$ is equivalent to $\argmin_{p(\rv{Z}| \trv{X})}  \quad  \MI{\trv{X}}{ \rv{Z}} + \beta \Risklog{M(\trv{X})}{\rv{Z}}$.
This can be seen from the fact that the optimal bit rate in \cref{thm:rate_invariance_distortion} only depends on $\H{M(X)}$ and $\delta$.
In particular, $Rate(\delta)$ does not depend on the distribution of the source inside the equivalence classes, $p(X|M(X))$.
Indeed, an optimal representation will compress all that information.%
\footnote{Note that the bit rate gains $\CH{X}{M(X)} - \delta$ clearly depend on $p(X|M(X))$, but not the actual bit-rate $\H{M(X)} - \delta$.}
As a result, we can attain the same optimal bit rate by considering any source $\trv{\rv X}$ that is a transformed version of $X$ as long as the transformation does not change the distribution of the maximal invariant, \ie, $p(M(\rv X))=p(M(\trv{ X}))$.
This is clearly the case for $\trv{X} \defeq A(X)$ as our augmentation retains invariance (\cref{assumption:augmentations}).

Now let us consider why and when replacing $M(\trv{X})$ by $\rv X$ makes sense.
Using \cref{prop:dist_no_Mx} we know that if $\rv A$ is s.t. $\trv{X} \sim X $ and $\trv{X} \perp X \cond M(X) $ forms a Markov Chain then we can replace (up to constants which are not important for $\argmin$) the distortion term
$\Risklog{M(\trv{X})}{\rv{Z}}$ by $\Risklog{X}{\rv{Z}}$. 
These are exactly our requirements on augmentations (\cref{assumption:augmentations}).

For the rest of this section we will thus be working with $\Lri^\beta$ (\cref{appx:eq:unconstrained_RD_no_Mx}) instead of \cref{appx:eq:unconstrained_RD}.
Note that this means that, in theory, we should always use the augmented $\Tilde{X}$ from now on, \ie, not only at train time but also at test time.

\subsubsection{Variational Invariant Compressor (VIC)}
\label{appx:vic}

As seen in the main text the VIC loss is essentially a neural compressor where inputs are augmented but not the target reconstructions.
We derive it by combining our entropy bottleneck bound (\cref{appx:entropy_bottleneck}) and our direct distortion (\cref{appx:contrastive_dist}), which gives the following upper bound on $\Lri{}^\beta$,
\begin{align}
\Lri{}^\beta(\varphi) 
&\defeq \MI{A(\rv X)}{\rv Z} + \beta \cdot \Risklog{\rv X}{\rv Z} \\
&\leq \inf_{\theta} \E{ p(\rv A)p(\rv X)p_{\varphi}(\rv Z | A(\rv X)) } {- \log q_{\theta}(\rv z)} & \text{\cref{eq:HZ_bound} }\\
&+ \beta \cdot \inf_{\phi} \E{p(\rv A)p(\rv X)p_{\varphi}(\rv Z |A(\rv X))} {- \log q_{\phi}(\rv X \cond \rv z)} & \text{\cref{eq:direct_distortion_bound}} \\
&=  \inf_{\theta,\phi} - \E{ p(\rv A)p(\rv X)p_{\varphi}(\rv Z | A(\rv X)) } {  \log q_{\theta}(\rv z) + \beta \log q_{\phi}(\rv X \cond \rv z)}  \label{eq:upperbound_vic}
\end{align}
Using a Monte Carlo estimate for the expectation over $p(\rv X)$, we get our desired objective,
\begin{align}
\Livae{}^\beta(\phi, \theta, \varphi)
\defeq 
-
\frac{1}{|\mathcal{D}|}\sum_{x \in \mathcal{D}}
\E{p(\rv A)p_\varphi(\rv Z|A(x))}{
\log q_\theta(\rv Z) + \beta \cdot \log q_\phi(x \cond \rv Z)}.
\end{align}
In practice, we approximate the expectation over $\rv A$ and $\rv Z$ using a single sample for computational efficiency.
A full algorithm is provided in \cref{alg:vic} and illustrated in \cref{fig:objectives} of the main text.

\begin{algorithm}[H]
\caption{Variational Invariant Compressor (VIC). Single sample forward pass.}
\label{alg:vic}
    \begin{algorithmic}[1]
    \Require Encoder $p_{\varphi}(\rv Z | A(\rv X))$,  Entropy Model $q_{\theta}(\rv Z)$,  Decoder $q_{\phi}(\rv X | \rv Z)$
    \Require Dataset $\mathcal{D}$, random augmentation generator $\rv A$, Lagrange multiplier $\beta$
    \State $x  \leftarrow  \mathrm{select}(\mathcal{D})$  \Comment{sample}
    \State $\Tilde{x} \leftarrow \text{sample}(A(x))$ \Comment{random augment}
    \State $z \leftarrow \mathrm{sample}(p_{\varphi}(\rv Z | \Tilde{x} ))$ \Comment{encode}
    \State $\mathrm{rate\_loss} \leftarrow - \log q_{\theta}(z) $ \Comment{Entropy Bottleneck}
    \State $\mathrm{distortion\_loss}  \leftarrow - \log q_{\phi}(x | z ) $ \Comment{Direct Distortion} \\
    \Return $\mathrm{rate\_loss} + \beta \cdot \mathrm{distortion\_loss}$
\end{algorithmic}
\end{algorithm}

Note that $\Livae{}^\beta$ tends to $\Lri{}^\beta$ when using unconstrained variational families and as the dataset grows to infinity.
This essentially shows that VIC objective (with infinite samples and unconstrained families) will learn the optimal deterministic and discrete $\rv Z$ (as discussed in \cref{appx:entropy_bottleneck}), in particular, when $\beta > 1$ it will learn an encoder which is optimal for the lossless prediction regime.

\subsubsection{Bottleneck InfoNCE (BINCE)}

VIC for images and data augmentation suffers from the issue that it needs a predictor which reconstructs a high dimensional image (as discussed in \cref{appx:direct_dist}).
To solve this issue we discuss our BINCE objective, which as seen in the main text, is essentially a standard contrastive self-supervised (SSL) objective with an additional entropy bottleneck.
We derive it by combining our entropy bottleneck bound (\cref{appx:entropy_bottleneck}) and our contrastive distortion (\cref{appx:contrastive_dist}).

\ydnote{Sorry this paragraph is not easy to explain, also from what I know we are the first ones to derive that properly and show that all of these deep learning tricks are actually valid.
I'm hesitating to use the infamous "proof left to the reader'' because no time / space to typeset everything and I know it's true (proof in my notes)
}
Note that in \cref{appx:contrastive_dist} for each $Z$ we needed a sequence $\textbf{M}$ of outcomes of $M(X)$ that are sampled either from the conditional $p(\rv M(X) | \rv Z)$ or the marginal  $p(\rv M(X))$.
As we will replace $M(X)$ by $X$ ( see \cref{appx:unkown_M}) we now need a sequence of r.v. $\textbf{X} \defeq (X^+,X^-_1,\dots,X^-_n)$  s.t. $X^+$ is  sampled from the conditional $p(\rv X|\rv Z)$ while each $X^-_i$ are independently sampled from the marginal $p(\rv X)$.
Furthermore, as is standard in self-supervised learning (\eg \cite{chen_simple_2020,oord_representation_2019}) we will actually be using a sequence $\check{\textbf{Z}}$ of positive and negative representations instead of $\textbf{X}$.
We do so by independently augmenting and encoding each r.v. in $\textbf{X}$.
Using our requirement on the augmentations (\cref{assumption:augmentations}) we thus have the following Markov Chain $\check{Z} - X - M(X) - A(X) - \rv Z$.
As a result, we can use $\check{\mathbf{Z}}$ instead of $\textbf{X}$ in InfoNCE (see \cref{eq:contrastive_distortion_bound_markov}).
For conciseness we will denote the above sampling procedure as $p_\varphi(Z,\check{\boldsymbol{Z}} \cond A, \rv X)$.
We then have the following upper bound on $\Lri{}^\beta$,
\begin{align}
\Lri{}^\beta(\varphi) 
&\defeq \op{I}{A(\rv X); \rv Z} + \beta \cdot \Risklog{X}{\rv Z} \\
&\leq \inf_{\theta} \E{ p(\rv A)p(\rv X)p_{\varphi}(\rv Z | A(\rv X)) } {- \log q_{\theta}(\rv z)} + (const)    \\
&+ \beta \cdot \inf_{\psi}  \E{p(A)p(X)p_{\varphi}(\rv Z | A(\rv X)) p(\check{\boldsymbol{\rv Z}} | \rv Z)}{- \log \frac{\exp f_\psi(\check Z^+, \rv Z)}{\sum_{\check Z' \in \check{\mathbf{Z}}} \exp f_\psi(\check Z', \rv Z) }} \\
&=  \inf_{\theta,\psi} - \E{ p(\rv A)p(\rv X)p_\varphi(Z,\check{\boldsymbol{Z}} \cond A, \rv X) } {  \log q_{\theta}(\rv z) + \beta \log \frac{\exp f_\psi(\check Z^+, \rv Z)}{\sum_{\check{Z}' \in \check{\mathbf{Z}}} \exp f_\psi(\check{Z}', \rv Z) }}  \label{eq:upperbound_bince} 
\end{align}
Using a Monte Carlo estimate for the expectation over $p(\rv X)$, we get our desired objective,
\begin{equation}
\Lbince{}(\varphi, \theta, \psi) \defeq
- 
\sum_{x \in \mathcal{D}}  \E{p(A) p_{\varphi}(Z, \check{\boldsymbol{\rv Z}}| A,\mathcal{D},x) }{ \log q_\theta(\rv Z) 
+ \beta \cdot \log \frac{\exp f_\psi(\check Z^{+}, \rv Z)}{\sum_{\check Z' \in \check{\mathbf{Z}}} \exp f_\psi(\check Z', \rv Z) }}.
\end{equation}
In practice we approximate the expectation over $\rv A$,$\check{\boldsymbol{\rv Z}}$ and $\rv Z$ using a single sample for computational efficiency.
Just as with VIC we have that $\Livae{}^\beta$ tends to $\Lri{}^\beta$ when using unconstrained variational families and as the dataset and number of negatives $n$ grows to infinity.

\ydnote{Should batch forward pass really be in this section?}
In the main paper we provided a simple algorithm (\cref{alg:BINCE}) to compute BINCE for a single example $x \in \mathcal{D}$.
This is computationally intensive as it requires sampling one sequence of r.v. for each example.
In practice, this is nevertheless easily amenable to batch computation.
Indeed, negative representations $\check Z^-$  are positive representations $\check Z^+$ for a different example.
As a result, we can first sample a batch  $\mathbf{x} \defeq (x_1, \dots, x_{n})$ from $\mathcal{D}$.
Then augment it to two different sequences $\Tilde{\mathbf x},\Tilde{\mathbf x}'$.
And finally represent each sequences to obtain $\mathbf{z},\mathbf{z}'$.
Then for any $z_i \defeq \mathbf{z}[i]$ we have that $z_i' \defeq \mathbf{z}'[i]$ is a positive example while all other $z \in \mathbf{z},\mathbf{z}'$ are negatives.
We thus only need to sample a single representation per example in the dataset.
A full algorithm for batch computations is provided in \cref{alg:BINCE_batch} (using only one of the 2 augmented batches for notational convenience).

\begin{algorithm}[t]
\caption{Batch forward pass for BINCE}
\label{alg:BINCE_batch}
    \begin{algorithmic}[1]
    \Require encoder $p_{\varphi}(\rv Z | A(\rv X))$,  entropy model $q_{\theta}(\rv Z)$, discriminator $f_{\psi}(\rv Z', \rv Z)$
    \Require augmentations $\rv A$, data $\mathcal{D}$, Lagrangian coefficient $\beta$, batch size $b$ 
    \State $\mathbf{x} \leftarrow  \mathrm{select}(\mathcal{D})\ b$ times   \Comment{sample}
    \State $\tilde{\mathbf{x}} \leftarrow \text{sample})(A(\mathbf{x}))$ \Comment{Random augment 1}
    \State $\tilde{\mathbf{x}}' \leftarrow \text{sample})(A(\mathbf{x}))$ \Comment{Random augment 2}
    \State $\mathbf{z},\mathbf{z}' \leftarrow \mathrm{sample}(p_{\varphi}(\rv Z | \tilde{\mathbf{x}}  )), \mathrm{sample}(p_{\varphi}(\rv Z | \tilde{\mathbf{x}}'  ))$ \Comment{Encode}
    \State $\mathbf{zs} \leftarrow \mathrm{concat}(\mathbf{z},\mathbf{z}') $
    \State $\mathrm{rate\_loss}  \leftarrow$ average $ - \log q_{\theta}(z_i)$ over $z_i \in \mathbf{zs}$ \Comment{Entropy Bottleneck}
    \State $\mathrm{distortion\_loss}  \leftarrow 0$
    \For{$i\leftarrow 1, \dots, b$}
    \State $z^+ \leftarrow \mathbf{z}'[i]$ \Comment{Select positive} 
    \State $\mathrm{softmax} \leftarrow \exp f_\psi( z^{+}, z) / (\sum_{z' \in \mathbf{zs}} \exp f_\psi( z',  z))$ \Comment{Softmax} 
    \State $\mathrm{distortion\_loss}  \mathrel{{-}{=}} \frac{1}{b}  \log (\mathrm{softmax}) $ \Comment{Contrastive Distortion} 
    \EndFor{}
    \Return $\mathrm{rate\_loss} + \beta \cdot \mathrm{distortion\_loss}$
\end{algorithmic}
\end{algorithm}

\subsection{CLIP as BINCE's distortion}
\label{appx:clip_bince}

One of our main experiment (\cref{sec:clip_experiments}), consists in using a pretrained CLIP to make a powerful image compressor.
We are able to do so by realizing that CLIP essentially corresponds to BINCE's distortion (second term in \cref{eq:bottlenecked_simclr} with the following choices: 
\begin{itemize}
\item \textbf{Augmentation}: text to image transformation. CLIP’s dataset contains pairs of associated images and detailed text  $(x_{img},x_{txt})$. The ``augmentation'' is then a function that maps $x_{img}$ to its associated  $x_{txt}$ and vis versa.
This will partition the joint image-text space of $\mathcal{X} \defeq \mathcal{X}_{img} \cup \mathcal{X}_{txt}$ into sets, each of which are associated (directly or by transitivity) with a common text description or image.
\item \textbf{Discriminator}: a dot product, i.e, $f_{\psi}(Z’,Z) = Z’^T Z$.
\item \textbf{Encoder}: a deterministic function defined by cases. Specifically, sampling from $p(Z|X)$ gives the output of the visual transformer (image encoder) $Z = ViT(X)$ if $X$ is an image and the output of the text transformer (text encoder) $Z = transformer(X)$ if $X$ is a sentence.
\end{itemize}

The only minor difference is that CLIP performs contrastive learning between text-image and image-text but never text-text and image-image. BINCE would instead make no distinction between modalities as the equivalence class is on the joint image and text space. Both are nevertheless valid approximations to $\Risk{M(X)}{Z}$.

Although CLIP's augmentation will always give rise to a valid equivalence relation, 
it would in theory recover the degenerate solution of $[x] = \mathcal{X}$ for all $x \in \mathcal{X}$ if the dataset was ``infinite''.
Indeed, any image could possible just be described by the text ``an image'', which would recover the aforementioned degenerate solution. 
There are different ways of collecting the datasets that could avoid this issue, \eg, ensuring that the description is more precise than that.
In practice, this is unlikely to be an issue as the dataset is finite.

Another possible theoretical issue of CLIP's augmentation/equivalence structure, is that it is likely that very few images have a common associated text in CLIP's dataset (or vis-versa). 
In theory, this would thus recover the degenerate solution where no points are equivalent to one another, \ie, $[x] = \set{x}$ for all $x \in \mathcal{X}$.
In practice, this issue is probably avoided due to the fact that images will get clustered as long as the the text description is similar enough for the text encoder to provide (essentially) the same text encoding (due to computational/architectural constraints). 
I.e., the images will actually get partitioned based on the value of the \textit{representation} of their associated text rather than the text itself.  

\clearpage
\newpage

\section{Extended related work}
\label{appx:related}
\paragraph{Invariances and symmetries}
 Invariances are ubiquitous in ML, as seen by the use of data augmentations \cite{shorten_survey_2019} and invariant models \cite{shawe-taylor_building_1989,wood_representation_1996,bruna_invariant_2013,cohen_group_2016,zaheer_deep_2017,kondor_generalization_2018,bloem-reddy_probabilistic_2020}.
These force models not to rely on nuisances to improve generalization \cite{dao_kernel_2019,chen_group-theoretic_2020,lyle_benefits_2020}.
We directly discard such nuisances from the data to improve compression.
Others have used symmetries in $\rv X$ for lossless compression of multisets \cite{varshney_benefiting_2007}, graphs \cite{choi_compression_2012,dehmer_history_2011,kontoyiannis_compression_2020}, or structured images \cite{sanchez_symmetry-based_2009,amraee_compression_2011,mitra_symmetry_2013,gnutti_representation_2015,bairagi_symmetry-based_2015}.
We, instead, use invariance of the tasks $\rv Y$ for lossless prediction.

\paragraph{Neural lossy compression}
Most research in neural compression is either focused on estimation and optimization of the rate term \cite{chen_variational_2017,minnen_joint_2018,johnston_computationally_2019,yang_improving_2020,yang_variational_2020,minnen_channel-wise_2020,lee_context-adaptive_2019,agustsson_universally_2020} or on developing perceptually meaningful distortions \cite{blau_rethinking_2019,chen_perceptually_2020,agustsson_generative_2019,mentzer_high-fidelity_2020}.
Our paper also develops a new distortion, but does not optimize for perception.
Improvements in the rate objectives are orthogonal to our work and can also help our method.

\paragraph{Self-supervised learning}
Our objective (\cref{eq:unconstrained_no_Mx} in main text)  can be seen as contrastive SSL \cite{oord_representation_2019,chen_simple_2020} with an information bottleneck,
a version of \cite{zbontar_barlow_2021,bardes_vicreg_2021} with an information instead of a variance bottleneck,
a SSL VIB \cite{alemi_deep_2017}, or
an invariant VAE \cite{kingma_auto-encoding_2014}.
At a higher level our work differs on two key aspects.
First, minimizing the information $\MI{A(X)}{Z}$ 
arises from our desire to perform compression rather than to (optimally \cite{dubois_learning_2020}) help generalization \cite{shamir_learning_2010,vera_role_2018}.
Second, we provide the first formalism of a minimal pretext task $M(X)$ that retains all information about any invariant task.
This is related to the multi-view literature, where one only needs to retain information which is invariant across views \cite{sridharan_information_2008,tosh_contrastive_2021,lee_predicting_2020,tsai_self-supervised_2021}.
The main difference is that we prove the existence of a single pretext task.

The most similar setting to ours is the recent work of \citet{mitrovic_representation_2021}, which (in Appx. D) analyses contrastive learning using equivalence relations. 
Specifically, they also consider tasks $Y$ whose conditional distribution are invariant to an equivalence relation.
Their Theorem 1 is then similar to our \cref{lemma:desintegration}, but only considers the restricted case of deterministic labeling and finite sample space $\mathcal{X}$. Furthermore, they only talk about invariant representations (sufficiency), while we characterize all invariant representations using the maximal invariant (necessity and sufficiency).

\paragraph{Information theory and predictions}
\cref{thm:rate_invariance_distortion} relates exactly predictive loss and compression rate.
Although such results is to our knowledge (surprisingly) new, it fits in a long line of work that relates Bayes predictions and generalized information theory \cite{degroot_uncertainty_1962,grunwald_game_2004,gneiting_strictly_2007,duchi_multiclass_2018,farnia_minimax_2016,dubois_learning_2020,xu_minimum_2020}.

\paragraph{Maximal invariants and minimal sufficient statistics}
As seen by our coin toss example, if the marginal $p(X)$ is invariant to the equivalence, \ie, $X \in \tasksinv{}$, then maximal invariants coincide with minimal sufficient statistics  \cite{halmos_application_1949,bahadur_sufficiency_1954}.
In our work we are interested in predicting a target $\rv Y$ rather than reconstructing the source $\rv X$. 
A sufficient statistic w.r.t. to another r.v. $\rv Y$ is referred to as adequate statistics.\footnote{The standard definition of adequacy from \citet{skibinsky_adequate_1967} also requires the statistic to be sufficient, here we use ``adequacy in the wide sense'' as defined by \citet{takeuchi_characterizations_1975}}
Maximal invariants can thus be seen as minimal adequate statistics for the set of all invariant tasks of interest $\tasksinv{}$.
Using minimal adequate statistics as good representations for performing a task has been well investigated in ML to improving generalization \cite{shamir_learning_2010,jiang_learning_2017,cvitkovic_minimal_2019,achille_information_2018,soatto_visual_2016,dubois_learning_2020}.
The main difference with our work is that 
\begin{inlinelist}
\item we consider adequacy for a collection of tasks instead of a single task;
\item minimality arises from a compression perspective rather than for generalization.
\end{inlinelist}
Although we are not aware of any use of minimal adequacy for compression (even single task), minimal sufficiency is often used for compressing distributions \cite{hayashi_minimum_2018,iri_fine_2019}.

%


\clearpage
\newpage

\section{Reproducibility}
\label{appx:reproducability}
In this section we provide further details of the hyperparameters chosen for the various experiments in the main text.
The code to reproduce all experiments can be found at \codeurl{}.
We checkpoint and use the model which achieves the smallest \textit{validation} loss for evaluation.
Unless stated otherwise, all the models are trained for 100 epochs, using Adam \cite{kingma_adam_2015} as the optimizer, and a batch-size of 128. 
The learning rate starts at $1\sci{3}$ that decreases exponentially until reaching $1\sci{6}$ at the end of training.
For all convolutional layers we use Kaiming normal initialization\cite{he_delving_2015}, for all linear layers we use Kaiming uniform initialization\cite{he_delving_2015},  while all biases are always initialized at 0.
Activation functions are ReLUs while other unspecified parameters are PyTorch \cite{paszke_pytorch_2019} defaults.
For our invariant models, instead of optimizing $\MI{Z}{X} + \beta \dists{}$ we optimize $\lambda \MI{Z}{X} + \dists{}$, which is a more standard formulation for VIB, VAE, and neural compressors.
In the following sections we will sometimes refer to $\beta$ as $\frac{1}{\lambda}$.

\subsection{Banana}
\label{appx:reproducability_banana}

For the Banana dataset most of the arguments were selected so as to replicate Fig.1.B. from \cite{balle_nonlinear_2020}. 
\footnote{Their code can be found at \url{https://github.com/tensorflow/compression/blob/master/models/toy_sources/toy_sources.ipynb}} 

\paragraph{Data}
The data distribution is obtained by starting from a bivariate Gaussian $\rv X \sim \mathcal{N}( \mathbf{0}, \mathrm{diag}([3, 0.5]) )$.
It is then transformed to a banana distribution using the following transformation: $x_2 = x_2 + 0.1  x_1 ^ 2 - 9$.
We then rotate it and shift it: $\vec x = (\mathrm{Rot}(-40) \cdot \vec x) + [-3, -4]^T$.
For every epoch we resample $1024000$ new points, \ie, examples are never seen twice during training).

\paragraph{Hyperparameters}
For all Banana experiments we use a 2 dimensional representation $\rv Z \in  \mathbb{R}^2$, a learning rate of $1\sci{3}$ that decreases exponentially until reaching $1\sci{6}$ at the end of training, and a batch size of $8192$.
The encoder (and decoder if there is one) is always a 2-hidden layer MLP with 1024 hidden neurons, and softplus activation.
In all cases we an entropy bottleneck with a factorized prior from \cite{balle_variational_2018}.

\paragraph{Experiment: \cref{fig:bananas_sweeps}}
We train both a standard variational compressor (VC) and our variational invariant compressor (VIC).
The downstream performance loss is the MSE when predicting the maximal invariant.
In both cases we use $\lambda=0.07$, which was chosen so that the downstream performance is similar for both.
For VIC the data is first augmented using rotations, passed through an encoder, then the decoder predicts the maximal invariant $M: x \mapsto \mathrm{Rot}(225) \cdot [0,\|x\|_2]^T$, \ie, the point with the same radius but positioned at 225 degrees.
Note that we use this maximal invariant (instead of the more natural $\|x\|_2$) to ensure that the reconstructions (codebooks) can be plotted in in a nice way in the original space $\mathcal{X}$.
The choice of maximal invariant does not impact the learned partition of the space.

Each plot (\cref{fig:bananas_sweeps} right) is generated by first taking a meshed grid of $500^2$ source points in $[-5,5]^2$.
Then we quantize every point in the mesh by passing it through our encoder.
The partition of the space (delimited with pink contours) corresponds to all points in the mesh that got mapped to the same quantized representation.
To obtain the codebook (pink dots), we pass the quantized representations through our learned decoders.
Finally, we plot the distribution of our learned entropy model by rescaling the codes so that their area is proportional to the rate assigned by the entropy model, \ie, $- \log q_{\theta}(z)$.

To obtain rate-invariance curves (\cref{fig:bananas_sweeps} left), we  sweep over $\lambda=0.00001,0.0001,0.001,0.01,0.1,1,10,100,1000$.
For each point in \cref{fig:bananas_RI} we plotted the average over 5 seeds and plotted in gray the standard errors (both in the rate and distortion direction).
To compute the area under the curve we used the trapezoidal rule on each of the RI curves obtained by a single seed, we then aggregated to area under the curve for the 5 seeds to obtain the mean and standard error.

\paragraph{Experiment: \cref{fig:bananas_xtrnslt}}
Here augmentations are translations on the $x$-axis.
The BINCE model was trained using \cref{alg:BINCE}, \ie , without assuming knowledge of the maximal invariant.
For VIC we used $M : x \mapsto [0,x_2]^T$ as the maximal invariant.

\paragraph{Experiment: \cref{fig:bananas_ytrnslt}}
Here augmentations are translations on the $y$-axis.
For VIC we used $M : x \mapsto [x_1,0]^T$ as the maximal invariant.
To plot of the induced distribution in $\mathcal{M}$ (here the $x$-axis), we sample $1024000$ new points, pass them through our encoder and decoder to obtains the codes, and then plot a histogram of the obtained codes (shown in salmon).
In blue we also plot the (approximate) distribution of the source when marginalized our invariances (\ie only consider the $x$ component).

\subsection{General Image framework}
\label{appx:reproducability_general}

Here we discuss the framework which we used for most of our image experiments. Unless stated otherwise, for all (ours or standard) neural image compressors we use the same general framework and architectures.

Specifically, an image $\rv X$ first passes through a ResNet18 \cite{he_deep_2016} to obtain a 128 dimensional representation $\rv Z$ that t.v.i $\mathbb{R}^{128}$.
We then pass $\rv Z$ through an entropy bottleneck with a scaled hyperprior entropy model from \citet{balle_variational_2018} which gives us the quantized $\hat Z$.
For our entropy bottleneck we used \citepos{begaint_compressai_2020} implementation which is a Pytorch re-implementation of \cite{balle_variational_2018}.
Note that the choice of entropy model and quantizer is orthogonal to our work, and any choice that works neural compression would work for us.

In the case where we have to decode an image (VIC and VC models), we pass the quantized $\hat Z$ through a linear layer to reshape it to a latent image in $\mathbb{R}^{2 \times 2 \times 256}$.
The latent image then passes through a 4-layer transposed CNN decoder where after each layer the number of channels gets divided by two and the width and height of the image doubles. 
The last layer outputs an image with the correct number of channels (1 for MNIST, 3 for other datasets), which is treated as the reconstruction $\hrv X$ of the augmented input (for standard compressors) or the non-augmented input (for our VIC).

To simulate how well you could perform on downstream tasks of interest (that are not known when learning the compressors), we evaluate how well a model can classify the labels from the dataset.
Specifically, once the models are trained we freeze them, apply them to the dataset and train neural network to classify the inputs using either the quantized representation $\hrv Z$ or the reconstruction $\hrv X$.
In the former case we a $|\mathcal{Z}|-2048-2048-|\mathcal{Y}|$ MLP with preactivation batch normalization \cite{ioffe_batch_2015}.
In the latter case we use a ResNet18 for predictions.

Finally, we obtain the desired bit-rate by considering the expected log loss of the trained entropy model on the test distribution (\ie theoretical bit-rate).
The desired distortion is obtained by evaluating the predictor on the compressed test dataset.

\subsection{MNIST}
\label{appx:reproducability_mnist}

For our MNIST \cite{lecun_gradient-based_1998} experiments we compare again our VIC (as described in \cref{alg:vic}) against a standard neural compressor.

\paragraph{Data}
In order to evaluate our framework in a relatively well understood setting we use the well known  MNIST \cite{lecun_gradient-based_1998} dataset, which we rescale to $32\times32$ pixels.
For this toy setting we want to understand how our model performs when trained with augmentations that induce the equivalent relation w.r.t. which we are invariant, \ie, we assume that we know the ``correct'' augmentation.
To do so we augment both the training and the test set in the same way.
Specifically, we apply random rotations sampled from $[-45,45]$ degrees, random translations between $[0,25]$ percentage of pixels, random shearing between $[0,25]$ degrees, and random scaling by a factor in $[0.6,1.4]$. 

\paragraph{Experiment: \cref{fig:mnist_intro} and \cref{fig:augmnist++_RD_rec}}
For a fair comparison we took trained a standard compressor and a VIC so that the downstream accuracy on augmented MNIST is the closest possible to $99\%$ accuracy (note that augmented MNIST is slightly harder than standard MNIST).
We then randomly sampled reconstructions for the source, standard reconstructions, and VIC reconstructions which we plot.
The quantitative results are average over 5 runs and standard errors are provided in \cref{fig:augmnist++_RD_rec}.

\paragraph{Experiment: \cref{fig:augmnist++_RD}}
For the rate-error curve we swept over $\lambda=0.001,0.01,0.03,0.1,0.3,1,3,10,100$ and plotted the curves and computed the area under the curve in the same way as previously discussed for the Banana rate-invariance curve.

\subsection{STL10}
\label{appx:reproducability_stl10}

\paragraph{Data}
We use the STL10 dataset \cite{coates_analysis_2011} which is similar to CIFAR but with fewer labeled training examples. There are 10 classes of 96x96 pixeled, colored images. There are 500 labeled and 100000 unlabeled examples for training, 800 labels for test. Note that the unlabeled images come from a broader distribution of images.
For augmentations we use horizontal flips, resizing and cropping, the color transform and we randomly transform the image to gray scale with a likelihood of $0.2$.
As for MNIST we augmented both the train and the test distribution.
The compressors were trained on the unlabeled data, while the predictors were trained on the train distribution.

\paragraph{Hyperparameters}
We used an entropy bottleneck with a scaled hyperprior entropy model from \cite{balle_variational_2018}.  
When training with BINCE, VIC or VC, the encoder is a ResNet18 architecture. For hyper-parameter tuning we randomly sampled 100 hyperparameters from the following search space:  latent dimension size $(32-512)$, rate-distortion trade-off $\lambda$ $(10^{-13},100)$, the optimizer's (ADAM) learning rate $(10^{-4},10^{-3})$, the learning rate schedule(exponential decay or cosine decay), and the batch size $(64-128)$.
For prediction on the learned features$\hrv Z$ we trained an MLP with $1024-4096$ hidden units, one or two layers, and dropout probability between $0.0-0.5$. We optimized again the learning rate of the ADAM optimizer as before.
For predictions from the reconstructions $\hrv X$ , we trained a ResNet18, with the same optimizer parameters as above.

\paragraph{Experiment: \cref{table:distortion_variation}}
In this experiment we compare the compression performance of PNG \cite{graphics_png_isoiec_2003}, WebP \cite{webp_google_2018}, JPEG \cite{group_jpeg_itu-t_1992}, VC, VIC and BINCE. 
Since VC and VIC allow to predict either on features or on reconstructions we test both. We sweep uniformly over the log-scale of $\lambda=10^{-5},100$ for the neural compressors and sweep the classical compressors over an equivalent quality range. 
The extensive results from which \cref{table:distortion_variation} is derived are in \cref{table:distortion_variation_long}. The rate-distortion curves belonging to this experiment are \cref{fig:STL10_dist}.
The rate-distrotion curves correspond to the pareto optimal curves of the encoders and predictors from the 1000 models.

\subsection{Galaxy Zoo}
\label{appx:reproducability_galaxy}
 \paragraph{Data}
Celestial objects and events emit radio frequencies. These frequencies are recorded through large antennas. 
Modern radio astronomy relies on the aggregation of radio signals in time and space. This means that one antenna records over long stretches of time. Due to the rotation of the earth, this translates to many spatial measurements. Further, the inclusion of many antennas in various locations can provide a dense net of observations. The entire system is refereed to as aperture synthesis telescope (AST). 
Images of the sky are generated by combining the sequences of observations stemming from different antennas. 
ASTs generate enormous amounts of data, much of which is redundant and further will never be observed by humans.
In fact commonly applied techniques to the observations, such as weighting (e.g. Briggs weighting) and blurring of signals removes information from the original observations. 
Our approach is thus a natural extension to the techniques already present in the radio astronomy community.
However, the process of image reconstruction from radio frequency observation series is too complex for the scope of this paper. 
We thus work on the Galaxy Zoo 2 (GZ2) dataset, that contains of already inferred images of celestial objects. 
We believe that good rates on this dataset should hint at even better possible rates when working directly with the raw data.
GZ2 contains 37 classification tasks, such as answering queries about shapes and counts information of galaxies.
Although the tasks are classification tasks, we use the standard GZ2 evaluation that consists in regressing (RMSE evaluation) the expected (over different labellers) label probability.
Our data is hosted on the kaggle platform.
This means we have no access to test labels but only for total test loss. 
This is why we compute summary statistics on the validation data set.
We choose to reduce the original dataset by center cropping to 256 pixels per dimension.
We applied random rotations, horizontal and vertical flips, scaling ($1-1.3 \times$) and color transforms to this data. 
We used CNN encoders \cite{balle_end--end_2017,minnen_joint_2018} and an entropy bottleneck with a scaled hyperprior entropy model from \cite{balle_variational_2018}.
 
\paragraph{Hyperparameters}
For all experiments we used ResNet50 when predicting from images (\ie encoders and predictors from reconstructions).
As with the STL10 experiments, we trained each model and baseline by selecting a set of 100 hyperparameters randomly selected from a large search space.
When training with BINCE, we sampled the latent dimension size $(32-2048)$, rate-distortion trade-off $\lambda$ $(1e-12,1e-4)$, the optimizer's (ADAM) learning rate $(1e-4,1e-3)$, the learning rate schedule(exponential decay or cosine decay) and the batch size $(64,128)$. For prediction on the learned features we would train an MLP with 2048 hidden unit, two layers and dropout probability $(0.0-0.5)$. We optimized the again the learning rate of the ADAM optimizer as before.
For the classical compressors we trained a ResNet50 on their reconstructions, with the same optimizer parameters as above.

\subsection{Pretrained CLIP}
\label{appx:reproducability_clip}

\ydnote{add lambda and beta}

\paragraph{Data}
In addition to the pretrained CLIP, we trained the entropy bottleneck.
As we do not have access to the dataset from CLIP, we could not train the entropy bottleneck on the initial data.
Instead we had to use a different dataset.
We used MSCOCO \cite{lin_microsoft_2015} for image captioning, as we initially thought that we would need access to pairs of images and sentences to finetune CLIP.\footnote{
At the end we did not use the sentences as freezing CLIP worked very well.}
Note that in our experiments the choice of dataset for training the entropy bottleneck (\eg CIFAR10 \cite{krizhevsky_learning_2009}) had very little impact on the quality of the final compressor.

To evaluate our compressor in the most realistic setting possible, we selected 10 different datasets.
The datasets were chosen so that
\begin{inlinelist}
\item the source images are of very different shapes and content;
\item they are easily be accessible online;
\item images are already compressed by JPEG;
\item neither the entropy bottleneck nor CLIP should have been trained on the selected datasets;
\item the task are very different classification tasks.
\end{inlinelist}
To ensure that CLIP was (nearly) not pretrained on the selected datasets we selected a subset of the datasets that CLIP was evaluated on and which did not show significant data overlap (see \citepos{radford_learning_2021} section 5 for a discussion about data overlap).
\Cref{table:clip_data} shows the details about the 10 datasets that we use for evaluating our model.
When there is no prespecified validation split, we randomly sampled $10\%$ of the training data for validation.

\begin{table}[h]
\caption{
Datasets used to evaluate our zero-shot compressor. -1 for the shape means variable. 
}
\begin{center}
\begin{tabular}{lrrrrrr}
\toprule
Dataset & Classes  & Train size & Valid size & Test size & Metric & Shape   \\ 
\midrule 
ImageNet \cite{deng_imagenet_2009} & 1000 &  1,281,167 &    & 50,000 & accuracy & (-1,-1,3) \\
CIFAR10 \cite{krizhevsky_learning_2009} & 10 & 50,000 & & 10,000 & accuracy & (32,32,3) \\
CIFAR100 \cite{krizhevsky_learning_2009} & 100 & 50,000 & & 10,000 & accuracy & (32,32,3) \\
Cars196 \cite{krause_3d_2013} & 196 & 8,144 &  & 8,041 & accuracy & (-1,-1,3) \\
Pets37 \cite{parkhi_cats_2012} & 37 & 3,680 &  &  3,669 & balanced acc. & (-1,-1,3) \\
Caltech101 \cite{li_learning_2007} & 102 & 3,060 & & 6,085 & balanced acc. & (-1,-1,3) \\
Food101 \cite{nilsback_automated_2008} & 101 & 75,750 &   & 25,250 & accuracy. & (-1,-1,3) \\
STL10 \cite{coates_analysis_2011} & 10 &  5000 & &  8000 & accuracy & (96,96,3) \\
PCam \cite{veeling_rotation_2018,ehteshami_bejnordi_diagnostic_2017} & 2 &  262,144 & 32,768 & 32,768 & accuracy & (96,96,3)  \\
\bottomrule
\end{tabular}
\end{center}
\label{table:clip_data}
\end{table}

\paragraph{Reproducing the results}
For clarity and reproducebility we also provide a self contained script to train a very similar version to our compressor in \cref{code:array_compressor} and \cref{code:clip_code}.
The main changes being that we change the training data (using CIFAR10), the entropy bottleneck (to the simpler factorized prior from \cite{balle_variational_2018}), and use a simpler evaluation pipeline (only use STL10 with a simplified MLP).
The entire script (including evaluation and actual compression of a dataset) takes less than ten minutes to run on a single GPU and provides a general zero-shot compressor.
The full code that we used is accessible at \codeurl{}.

\paragraph{Training the zero-shot compressor}
To train our compressor we first download the official pretrained CLIP model\footnote{https://github.com/openai/CLIP}.
Specifically the vision transformer \cite{dosovitskiy_image_2020} that they refer to as \texttt{"ViT-B/32"}.
We then freeze it, and add an entropy bottleneck with \citepos{balle_variational_2018} hyperprior.
We then train the entropy bottleneck on the MSCOCO dataset.

To train the entropy bottleneck we need a distortion measure.
In theory, to get our BINCE objective, we should use the distortion that CLIP was trained with, \ie, we should compress the representation in such a way that CLIP can still distinguish examples from the same equivalence class. Minimizing such distortion can lead to catastrophic forgetting as the representations only ensure that CLIP can distinguish equivalent example from our very small dataset. \footnote{We tried many ways of finetuning CLIP with very small learning rates and frozen components, but although the rate gains were large (around $2$ to $3\times$) this lead to significant decrease in performance, most probably due to catastrophic forgetting.}
We instead use a very simple MSE distortion in the representation space.
Specifically, we trained the entropy bottleneck to minimize $\lambda \| z - \hat{z} \| - \log(q_\theta(z))$, where $\hat{z}$ denotes the reconstructed (quantized) representation.
This can be seen in line 22 of \cref{code:array_compressor}.

One important point to notice is that in standard neural compressors the quantization is  a component wise rounding to the closest integer.
This typically does not constrain the compressor, as the compressor is trained in an end-to-end fashion so that the encoder can increase or decrease some components of $z$ to effectively increase or decrease the quantization.
As our encoder is frozen, it cannot learn to adaptively change the scale of $z$ so we needed to learn the size of the quantization interval instead, \ie, the rounding precision.
A simple (and equivalent) way of doing that consists in passing the representation through a (learned) component wise linear transformation (\ie 2 parameters per component) then through the entropy bottleneck (quantization) and finally we reverse the linear transformation.
This can be seen in line 12 and 14 of \cref{code:array_compressor}.

Generally we found that training the entropy model was very robust to hyperparameter changes. 
We used the following: $50$ epochs, a 512 dimensional $z$ (given by CLIP), a batch size of $64$, a learning rate of $1\sci{3}$ with 3, a  scheduler that decreases the learning rate by $10\times$ every 12 epochs (\ie uniformly 3 times during training), Adam with decoupled weight decay (AdamW; \cite{loshchilov_decoupled_2019}) as an optimizer, weight decay of $3\sci{8}$ and a $32$ dimensional side information for the hyperprior.
For the our main CLIP compressor (CLIP+EB) we use an RD hyperparameter $\lambda = 5\sci{2}$ which is linearly annealed from $1\sci{7}$ to $1\sci{2}$ in the first $5$ epochs of training (although annealing did not seem important).
For our CLIP+EB\textsuperscript{$-$} and CLIP+EB\textsuperscript{$+$} (see \cref{table:clip_all}) we respectively use $\lambda=1\sci{2}$ and $\lambda=1\sci{1}$.
For data augmentations we used similar ones as used by CLIP namely, normalizing the image by the mean and std form their training dataset (\texttt{mean=[0.48145466, 0.4578275, 0.40821073]} and \texttt{mean=[0.26862954, 0.26130258, 0.27577711]}), resizing the smallest side of the image to $224\times224$ pixels with bicubic interpolation, then taking a random $224\times224$ crop.
During the evaluation the random cropping is replaced by a center cropping.

\paragraph{Evaluating the zero-shot compressor}
For evaluating the rate obtained by our compressor, we provide the negative log likelihood of our entropy model for the each test dataset.
For evaluating the downstream predictive performance for each dataset we train a 2 hidden layer MLP of dimensions $512-2048-2048-|\mathcal{Y}|$ with batch normalization and ReLU activation.
For each dataset we provide the best model from 25 randomly sampled models, that arise by randomly sampling the following hyperparameters:
\begin{itemize}
\item batch size: $[32,64]$ with logarithmic sampling.
\item optimizer: Adam, SGD, AdamW.
\item weight decay: $[1\sci{7},1\sci{4}]$ with logarithmic sampling.
\item learning rate: $[1\sci{5},1\sci{3}]$ with logarithmic sampling.
\item scheduler: exponential decay (with total decay by 100 or 1000), decreasing learning rate on validation loss plateau, cosine scheduler, decaying learning rate at fixed intervals.
\item dropout \cite{srivastava_dropout_2014}: $[0,0.5]$.
\end{itemize}

We then provide the result of the best model. In \cref{table:clip} we compare those results to the same vision transformer that we use for CLIP, but trained directly on the raw images.
These results were obtained from \citepos{radford_learning_2021} table 10.
For a better comparison to standard SSL models, in \cref{table:clip_all} we also provide the test accuracy of a linear layer (a support vector machine) from the representations. The regularization parameter of the SVM were all selected using 10 values and three fold cross validation.
For the rates we compare to the average JPEG size of images in each datasets (all the selected datasets are compressed by default in JPEG).
For the rates of the raw CLIP model we losslessly compress the representation using numpy's \texttt{savez} function (zip) \cite{harris_array_2020}.

\subsection{Minimal code to train the CLIP compressor in < 5 min.}
\label{appx:code_clip}

In this section we provide minimal code to train our zero-shot compressor and to use our compressor to entropy code an entire dataset.
Note that the model is simplified (\eg using factorized prior instead of a hyperprior, and training on CIFAR10) so the bit-rates is slightly increased but it still achieves orders of magnitude gains compared to JPEG. 
We use CIFAR10 for training the entropy coder and STL10 for downstream evaluation (as both are downloadable through torchvision).
To evaluate the model, we use a linear support vector machine from our representation $\rv Z$.

For this minimal code,
training takes around $3$ minutes on a single GPU.
The theoretical bit-rate that we achieve is around $1400$, while the practical bit-rate achieved by entropy coding is around $1700$.
In comparison the bit-rate of JPEG (with 95 quality) is 4.71e4.
The entropy coder compresses around $200$ images per seconds, and decompresses around around $3$ images per seconds. Decompression is slow as we do not perform it in batch (for simplicity of the code), while encoding is batched processed.
Downstream classification accuracy on STL10 is $98.7 \%$ which is better than the uncompressed representations from CLIP, from which linear probe achieves $98.6 \%$ accuracy.

To run the code you need first need to install the following libraries:
\begin{minted}{bash}
pip install git+https://github.com/openai/CLIP.git
pip install scikit-learn==0.24.2 lightning-bolts==0.3.3 compressai==1.1.4
\end{minted}

The minimal boilerplate code (\cref{code:clip_code}) downloads the training data and the pretrained CLIP (from line 42), trains the compressor (from line 46), entropy code the evaluation data (from line 57), and finally evaluates downstream performance (from line 67).
The actual compressor is defined in \cref{code:array_compressor}.

\clearpage
\begin{listing}
\vspace{-2\baselineskip}
\inputminted[
fontsize=\scriptsize,
frame=lines,
linenos
]{python}{figures/clip/clip.py}
\caption{Minimal boilerplate code for training a zero-shot compressor in less than 5 minutes.
For the actual compressor (\texttt{ArrayCompressor}) see \cref{code:array_compressor}.
}
\label{code:clip_code}
\end{listing}

\clearpage
\begin{listing}
\inputminted[
fontsize=\scriptsize,
frame=lines,
linenos
]{python}{figures/clip/array_compressor.py}
\caption{Minimal code for training an entropy bottleneck to convert a pretrained SSL model into a powerful zero-shot compressor.
For the training and evaluation code see \cref{code:clip_code}.
}
\label{code:array_compressor}
\end{listing}

\clearpage
\newpage

\section{Additional experimental results}
\label{appx:results}
\subsection{Banana}
\label{appx:banana}

In \cref{sec:toy_experiments} we compared a classical compressor to our VIC in the case of rotation invariant tasks. 

Here show results for invariances to different equivalences and provide more intuition as to what BINCE and VIC achieve.

\begin{figure}[h]
     \centering
     \begin{subfigure}[h]{0.25\columnwidth}
         \centering
         \includegraphics[width=\textwidth]{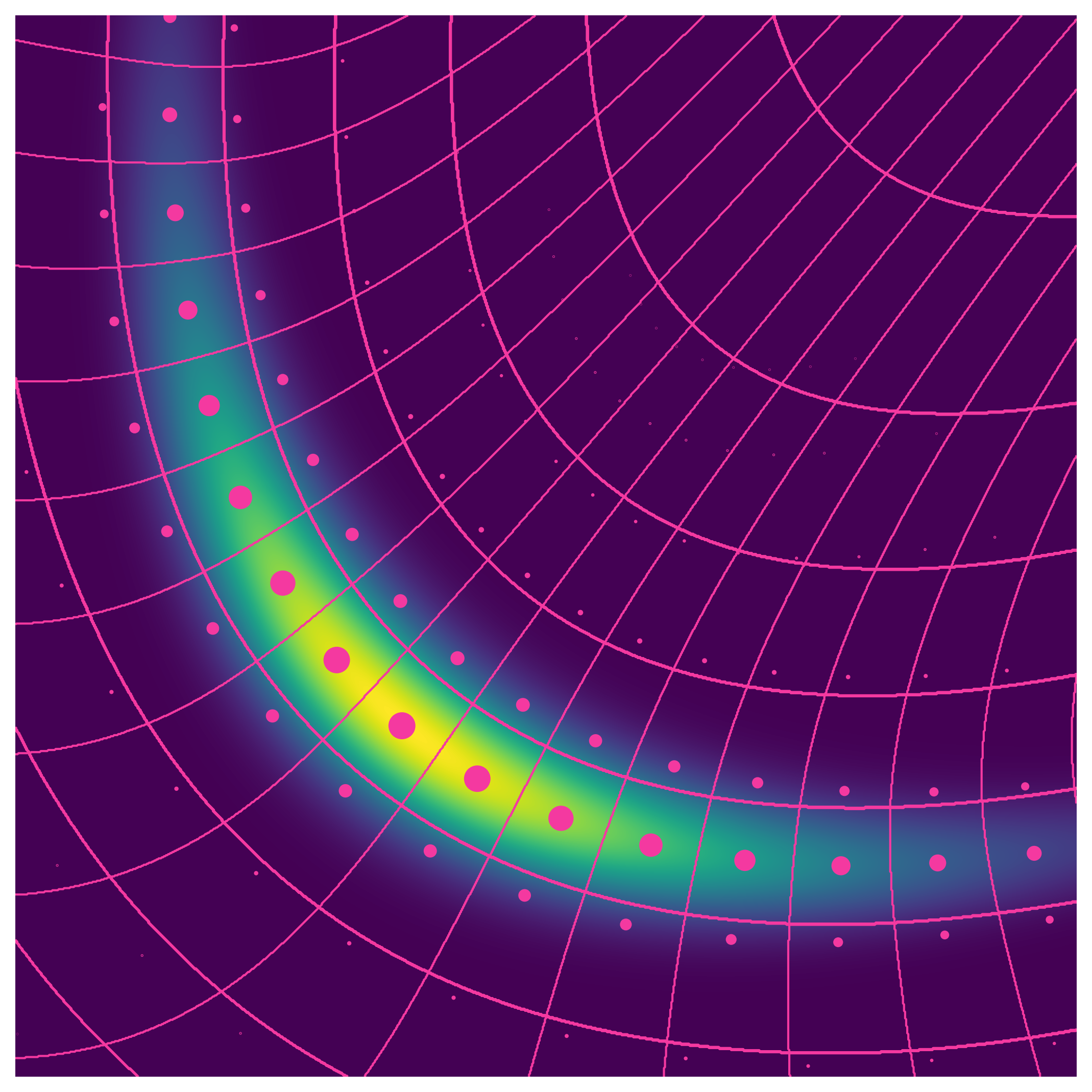}
         \caption{Standard Compression}
         \label{fig:bananas_xtrnslt_vae}
     \end{subfigure}
     \begin{subfigure}[h]{0.25\columnwidth}
         \centering
         \includegraphics[width=\textwidth]{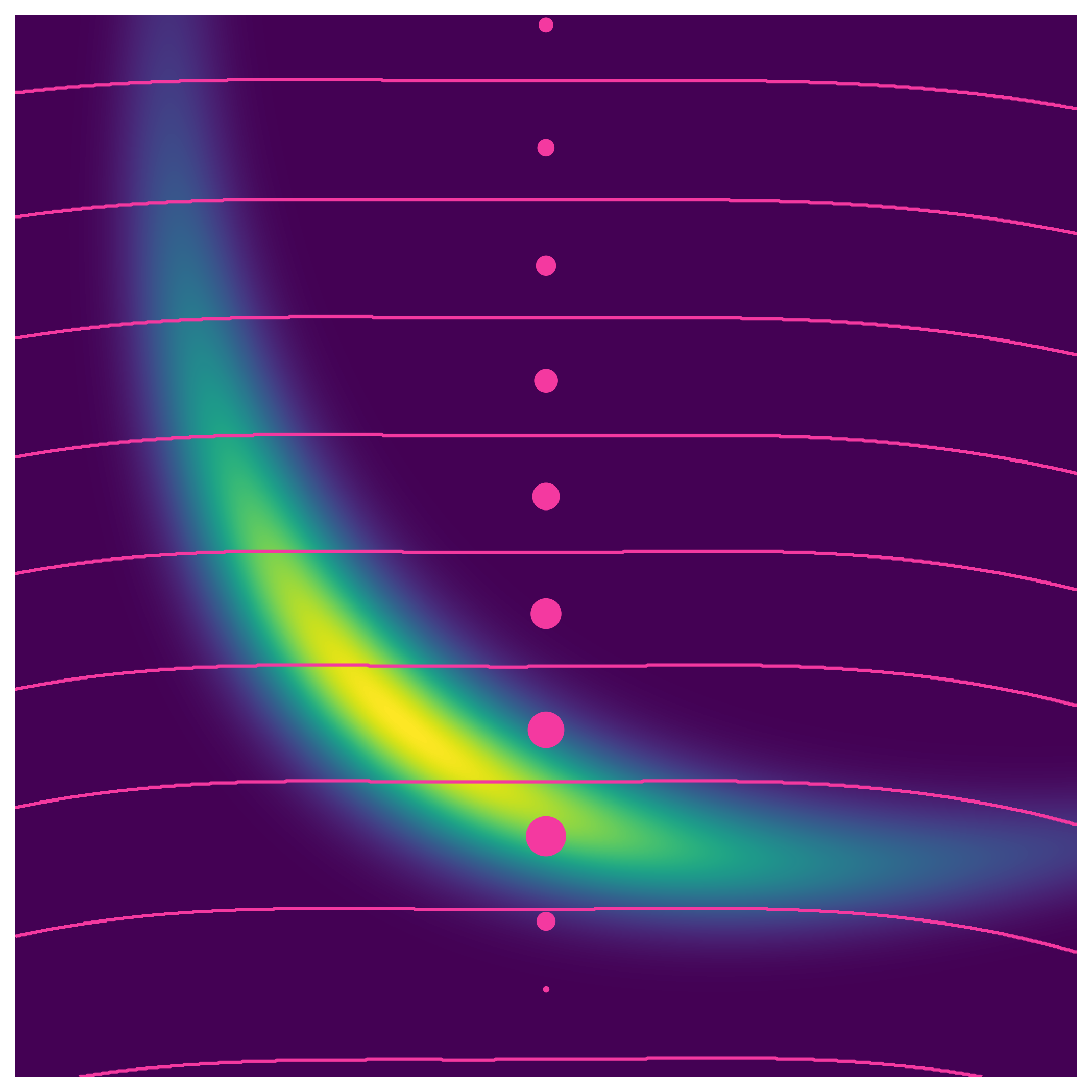}
         \caption{VIC}
         \label{fig:bananas_xtrnslt_ivae}
     \end{subfigure}
     \begin{subfigure}[h]{0.25\columnwidth}
         \centering
         \includegraphics[width=\textwidth]{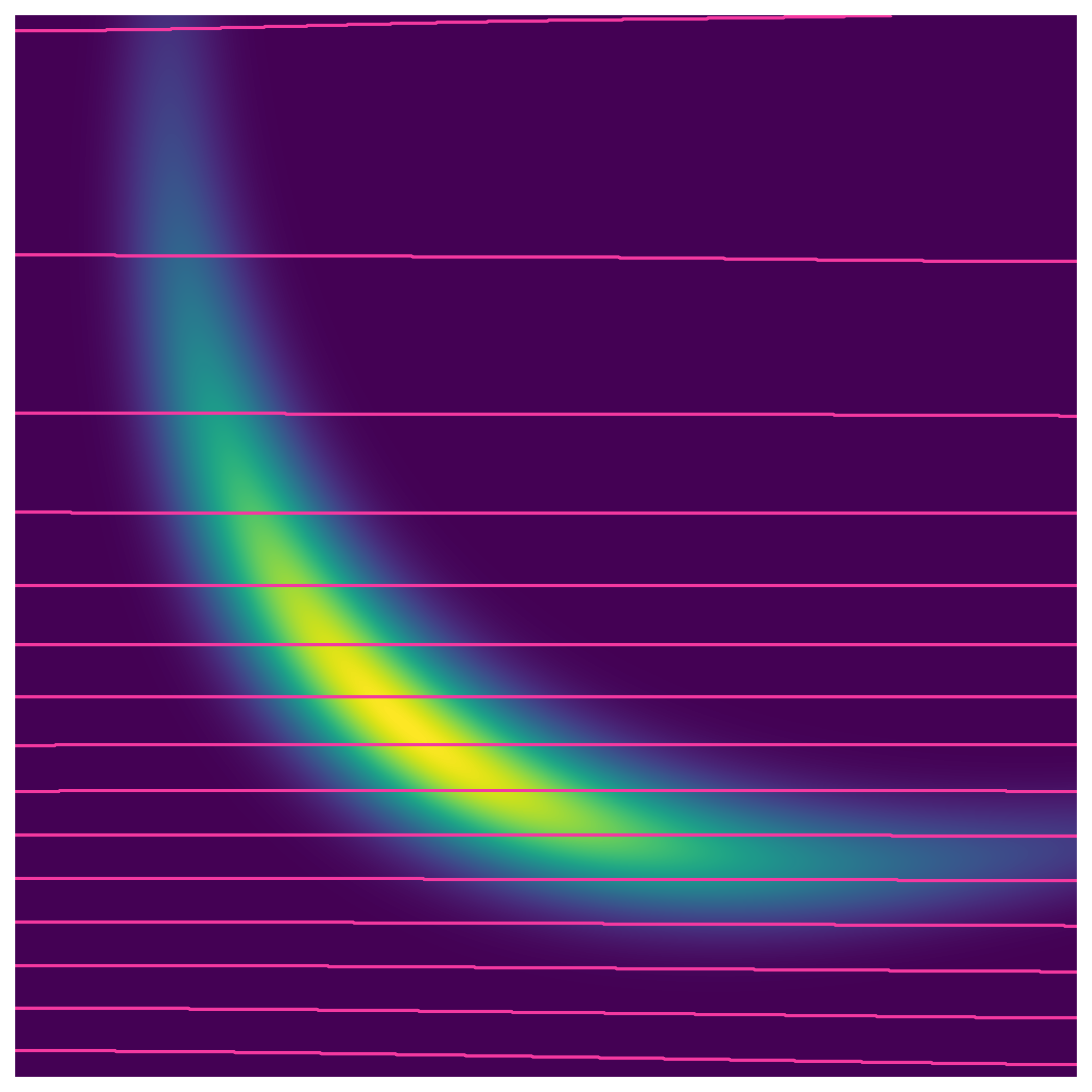}
         \caption{BINCE}
         \label{fig:bananas_xtrnslt_bince}
     \end{subfigure}
\caption{
VIC and BINCE improves compression of Banana distribution when downstream tasks are invariant to translation on the $x$-axis by quantizing the space into horizontal stripes.
(a) standard compression with a rate of $4.86$ bits and an invariant distortion of $7.51\sci{2}$ ; (b) our VIC with a rate of $2.93$ bits and an invariant distortion of $7.08\sci{2}$.
(c) our BINCE with a rate of $2.93$ bits and an invariant distortion of $7.08\sci{2}$.
}
\label{fig:bananas_xtrnslt}
\vspace{-0.5em}
\end{figure}

\paragraph{$x$-translation and BINCE}
\Cref{fig:bananas_xtrnslt} considers the case where downstream tasks are invariant to $x$-translations.
We used $M : x \mapsto [0,x_2]^T$ as the maximal during training.
We see that our model can essentially perform as well on all downstream tasks for only $60 \%$ of the bit-rate.
Unsurprisingly we see that the codebook is in shape of horizontal stripes as these can cover the entire distribution with a few codes (small bit rate) while incurring a small invariance distortion (which only depends on the $y$ value).

We visualized a BINCE model (\cref{fig:bananas_xtrnslt_bince}) in addition to VIC.
Although the exact partition for both models is quite different (BINCE does not seem to learn equal sized partitions), both models clearly learn to partition the space into horizontal stripes.
Once important difference, is that VIC also provides a codebook (shown with pink dots), as it can reconstruct a quantized version of the input, while BINCE only learns a latent representation and does not provide any reconstructions.

\begin{figure}[h]
     \centering
     \begin{subfigure}[h]{0.24\columnwidth}
         \centering
         \includegraphics[width=\textwidth]{figures/banana/quantization_vae.png}
         \caption{Standard}
         \label{fig:bananas_ytrnslt_vae}
     \end{subfigure}
     \hfill
     \begin{subfigure}[h]{0.24\columnwidth}
         \centering
         \includegraphics[width=\textwidth]{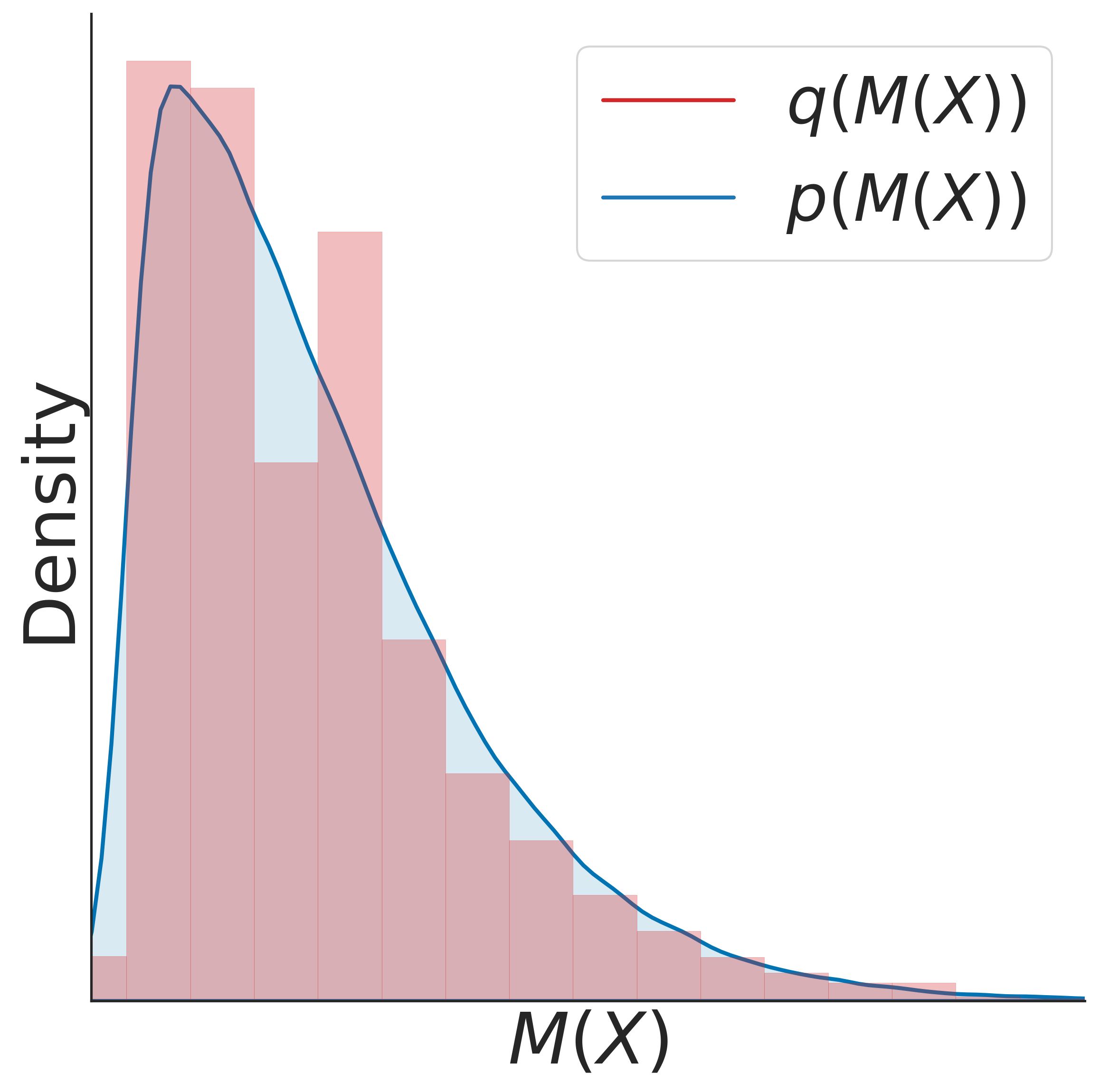}
         \caption{Standard $q(M(X))$}
         \label{fig:bananas_ytrnslt_vae_Mx}
     \end{subfigure}
     \hfill
     \begin{subfigure}[h]{0.24\columnwidth}
         \centering
         \includegraphics[width=\textwidth]{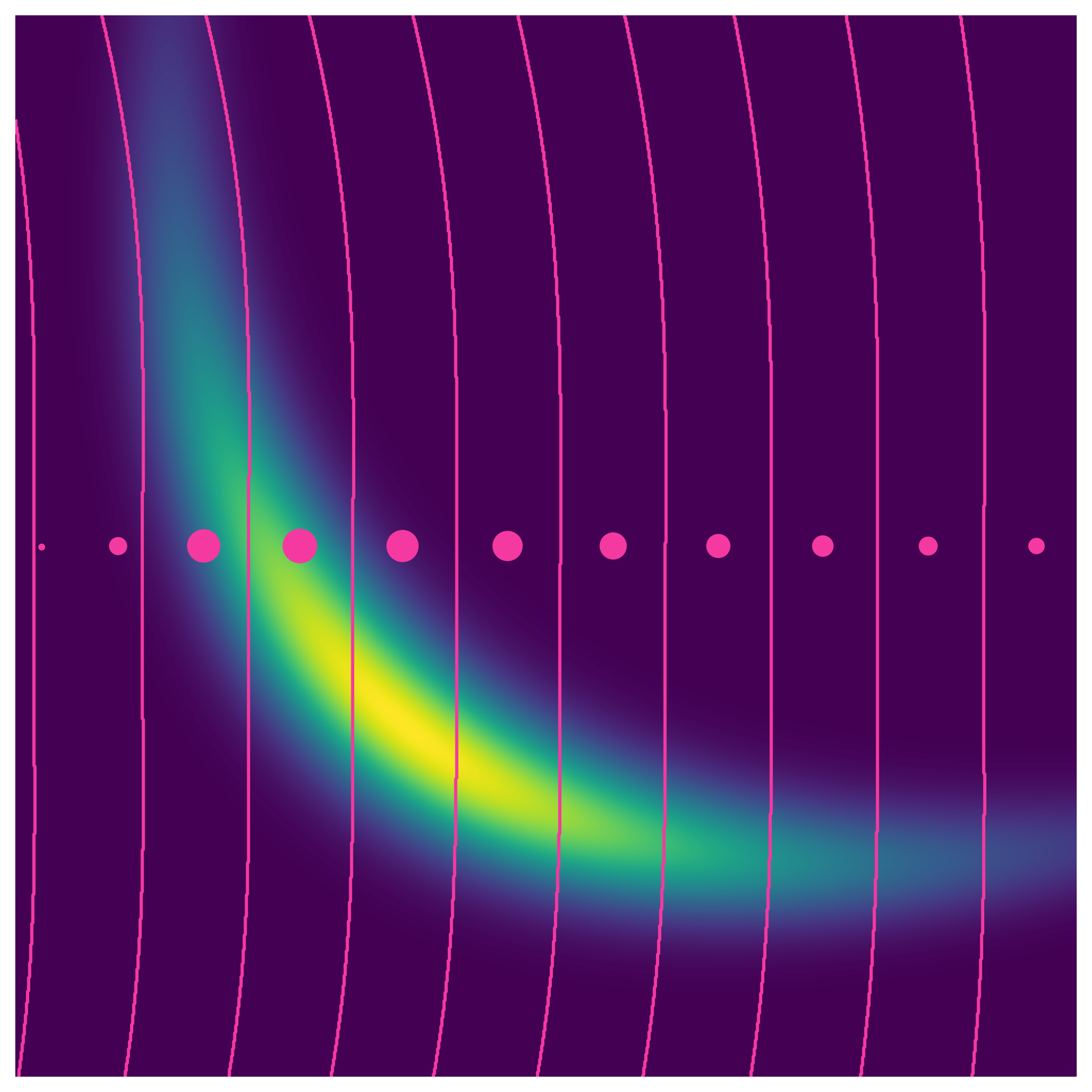}
         \caption{$y$-translation VIC}
         \label{fig:bananas_ytrnslt_ivae}
     \end{subfigure}
     \hfill
     \begin{subfigure}[h]{0.24\columnwidth}
         \centering
         \includegraphics[width=\textwidth]{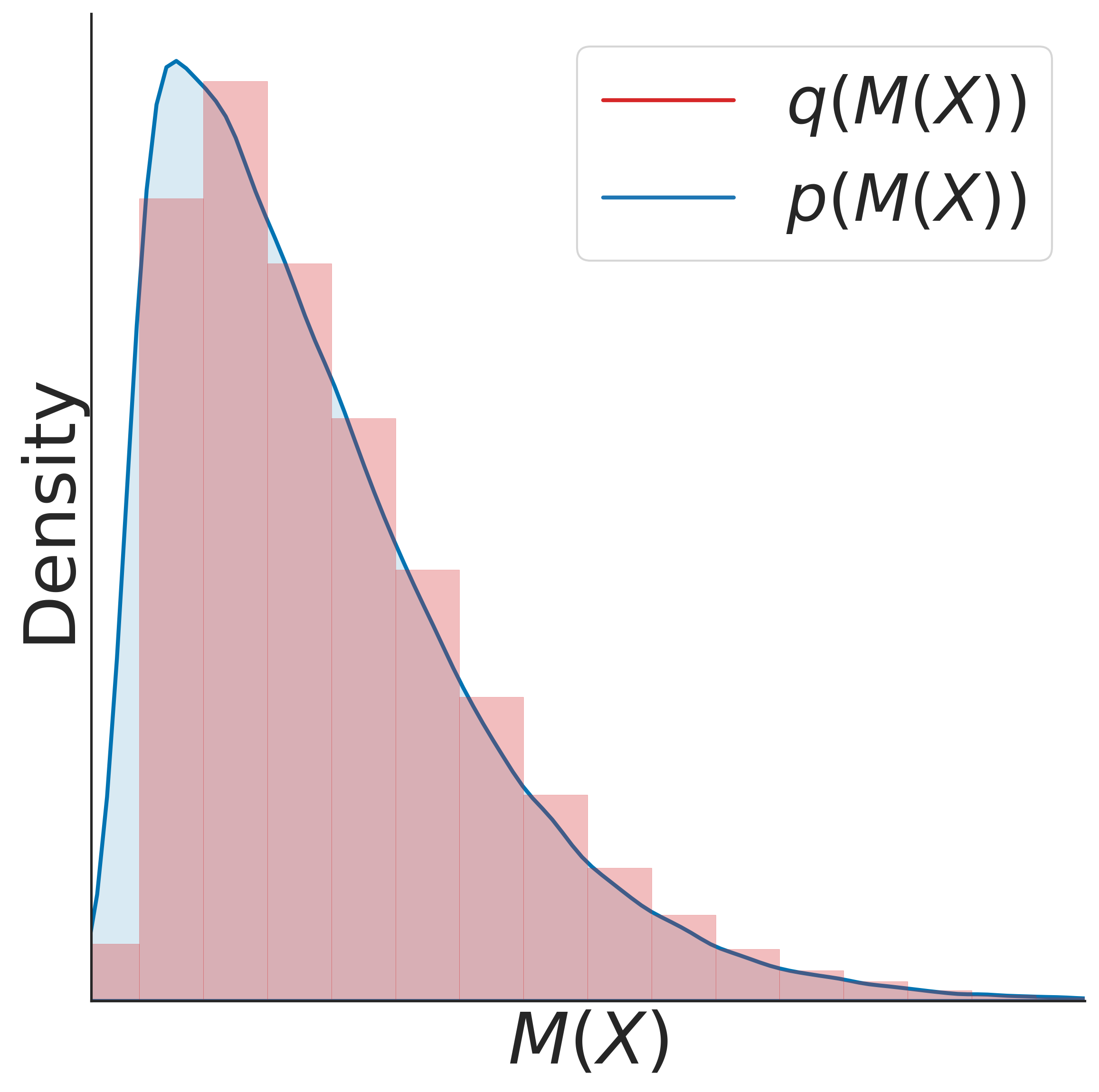}
         \caption{VIC $q(M(X))$}
         \label{fig:bananas_ytrnslt_ivae_Mx}
     \end{subfigure}
\caption{
VIC improves compression of Banana distribution when downstream tasks are invariant to translation on the $y$-axis by (implicitly) estimating the density $p(M(X))$.
(a) standard compression with a rate of $4.86$ bits and an invariant distortion of $7.67\sci{2}$ ; 
(b) the induced marginal distribution $q(M(X))$ of the $x$-value of the reconstructions from the standard neural compressor;
(c) our compression with a rate of $3.24$ bits and an invariant distortion of $7.84\sci{2}$.
(b) the induced marginal distribution $q(M(X))$ of the $x$-value of the reconstructions from the VIC;
}
\label{fig:bananas_ytrnslt}
\vspace{-0.5em}
\end{figure}

\paragraph{$y$-translation and induced distribution}
\Cref{fig:bananas_ytrnslt} considers the case where downstream tasks are invariant to $y$-translations.
In this case the maximal invariant used during training is chosen to be $M : x \mapsto [0,x_2]^T$.
Similarly to the case of $x$-translation and rotations, we see that our model can perform as well as a standard compressor for a fraction of the rate.

To provide a better intuition as to why this is the case we also plot the distribution of the reconstructions when marginalized over the $y$-axis.
In other words we plot the distribution of $M(X)$ when applied to the reconstructions, \ie, the $x$ component of the reconstructions.
We see that although the partition of the source space is very different for a standard compressor (\cref{fig:bananas_ytrnslt_vae}) and for our VIC (\cref{fig:bananas_ytrnslt_ivae}), the induced distribution (and partition) in the marginalized space are actually very similar (\cref{fig:bananas_ytrnslt_vae_Mx} and \cref{fig:bananas_ytrnslt_ivae_Mx}).
This shows where our bit-rate gains come from. 
Indeed from \cref{prop:nicer_dist} we know that in the case of invariant tasks one only needs to model the distribution of $M(X)$ (\eg the distribution of the $x$ component here), and we see that both the standard compressor and VIC does that similarly well.
The main difference being that VIC does so in an optimal way while the standard compressor needs to partition the input space in a finer way to achieve a similar induced partition in the $M(X)$ space.

\paragraphQ{What is the relation between rate and predictions}
\Cref{thm:rate_invariance_distortion} shows that, for log loss, the minimum rate is linearly related to the loss $\delta$ in downstream performance.
Our theory (\cref{appx:theorem_mse}) suggests a logarithmic relationship for MSE.
This is seen for VIC and VC in \cref{fig:bananas_sweeps} of the main text (log scale $x$-axis).

\paragraph{On lossy compression and equivalences}
Efficient lossy compression is about learning a partition (\eg Voronoi diagrams, or  \cref{fig:bananas_xtrnslt} ) of the input space to map many inputs to the same code.
We use the fact that any partition can be constructed from an equivalence relation \cite{schechter_handbook_1996} to learn compressors that are invariant to desired transformations.
The shape of the partitions are then induced by the transformations, which perturb points in their quantization bins (equivalence classes), \eg, rotations in \cref{fig:bananas_sweeps} of the main text.
The size of the partition, \eg, disks width in \cref{fig:bananas_sweeps} of the main text, depend on the desired performance $\delta$.
The pink dots are representatives of the partition, \ie, maximal invariants.
The key is that using our objectives we can learn arbitrary quantization using only desired transformations, which ML practictioners already use for data augmentations.

\subsection{MNIST}
\label{appx:mnist}



\begin{figure}[ht]
     \centering
     \begin{subfigure}[h]{0.4\columnwidth}
         \centering
         \includegraphics[width=\textwidth]{figures/augmnist_plus/RD_curve_workshop_error.png}
         \caption{Rate-Error}
         \label{fig:augmnist++_err}
     \end{subfigure}
     \hfill{}
     \begin{subfigure}[h]{0.4\columnwidth}
         \centering
         \includegraphics[width=\textwidth]{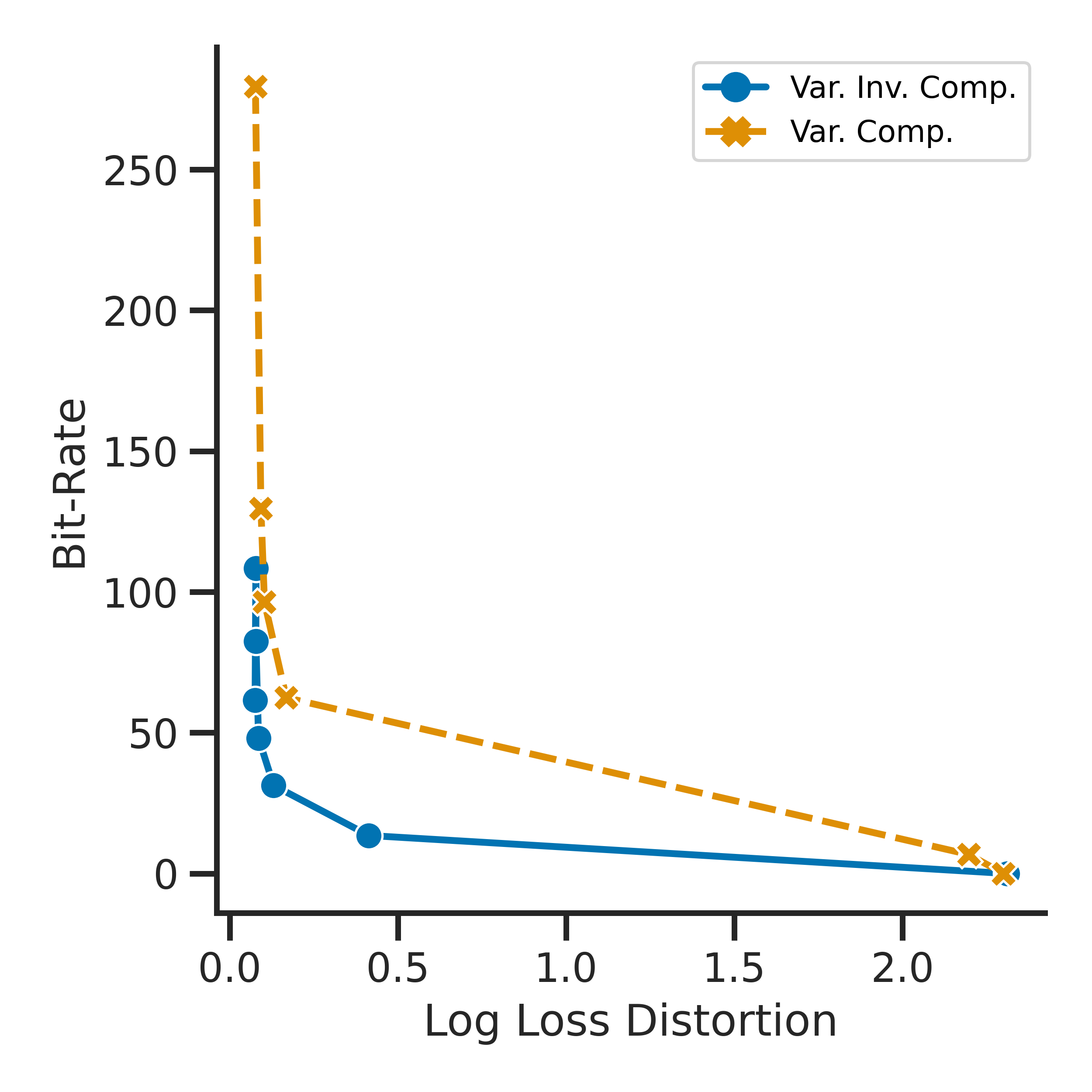}
         \caption{Rate-Log loss}
         \label{fig:augmnist++_log}
     \end{subfigure}
\caption{
Augmented MNIST RD curves for downstream predictions are very similar when using classification error (left) instead of the log loss (right) from our theory.
}
\label{fig:augmnist++_err_vs_log}
\end{figure}

\paragraphQ{How do RD curves change if we use classification error instead of log loss}
Our theory \cref{thm:rate_invariance_distortion} only ensures good downstream log loss risk.
Nevertheless, we used here the classification error ($1-\text{accuracy}$) throughout the main test as it is more commonly used for evaluating classification performance.
\Cref{fig:augmnist++_err_vs_log} shows that RD curves are very similar for when using classification error instead of accuracy.
This is not very surprising as log loss is the standard (differential) proxy of classification error in ML.

\begin{table}[h]
\caption{
Using label-preserving augmentations that remove more information about $\rv X$ decreases the rate without hindering classification performance.
Single run.
}
\small
\center
\begin{tabular}{llrr}
\toprule
&Augmentations& Rate $[\frac{\text{bits}}{\text{img}}]$ & Test Acc. $[\%]$  \\ 
\midrule 
\multirow{2}{*}{\centering  ~VIC } 
 & Small set  &  $185.3$  & $96.2$   \\ 
  & Large set  &  $79.0$  & $97.9$   \\ 
   & Supervised (largest set)  &  $5.7$  & $99.1$   \\ 
\midrule 
\multirow{2}{*}{\centering  ~BINCE  }
 & Small set  &  $256$  & $95.3$   \\ 
  & Large set  &  $131$  & $97.8$   \\ 
   & Supervised (largest set)  &  $5.9$  & $97.8$   \\ 
\bottomrule
\end{tabular}
\label{table:mnist_augmentations}
\end{table}

\paragraphQ{What is the impact of the choice of augmentations}
The rate decreases when $A$ removes more information from $\rv X$.
Indeed, we have $\MI{X}{A(X)} = \MI{X}{M(X)} + \MI{X}{M(X)} = \MI{X}{M(X)} + 0 = \H{M(X)}$, where the second equality comes from \cref{assumption:augmentations} and the last equality from the determinism of $M$.
As a result we can rewrite \cref{thm:rate_invariance_distortion} as $Rate(\delta) = \MI{X}{A(X)} - \delta$.
To illustrate this we trained our VIC and BINCE using three augmentation sets on MNIST, all of which keep the true label invariant but progressively discard more $\rv X$ information:
\begin{inlinelist}
\item standard image augmentations such as random translations, shears, and rescalings;
\item those same standard image augmentations, but drawn from larger ranges of possible translations, shears, scales, etc;
\item supervised ``augmentations'' line in \cite{khosla_supervised_2020} that remove everything except label information, \ie, for every image $x$  let $x^+ = A(x)$ be a random image with the same label.
\end{inlinelist}
\Cref{table:mnist_augmentations} shows that using label-preserving augmentations that remove more information about $X$ greatly decreases the rate without hindering classification performance.
The fact that the supervised augmentations achieve a much better rate, shows that typical SSL compression is still very far from single-task label compression.
SSL compression retains information for at least $2^{79} \approx 10^{23}$ disjoint labels.

\begin{table}[h]
\caption{
End-to-end compression of augmented MNIST works much better than staggered compression for both VIC and BINCE. Single run.
}
\small
\center
\begin{tabular}{llrr}
\toprule
&& Rate $[\frac{\text{bits}}{\text{img}}]$ & Test Acc. $[\%]$  \\ 
\midrule 
\multirow{2}{*}{\centering  ~VIC } 
 & Staggered  &  $477$  & $98.0$   \\ 
  & End-to-end &  $79$  & $97.9$  \\  
\midrule 
\multirow{2}{*}{\centering  ~BINCE  }
 & Staggered &  $358$  &  $97.4$ \\
   & End-to-end & $131$   &  $97.8$  \\ 
\bottomrule
\end{tabular}
\label{table:end2end}
\end{table}

\paragraphQ{How much does end-to-end improve compared to staggered training}
We evaluated end-to-end training for both our losses against a staggered version that consists in first optimizing the distortion and then adding an entropy bottleneck to performing lossy compression of the learned representations (as in \cref{sec:clip_experiments}).
\Cref{table:end2end} shows that end-to-end training can give large gains compared to the staggered method and that our compression gains with CLIP could be even further improved.

\subsection{STL10}
\label{appx:stl10}
In the main text we used the STL10 data set to answer some principled questions about our method in controlled experiments.
We provide additional results here.

\begin{figure}[h]
     \centering
     \begin{subfigure}[h]{0.45\columnwidth}
         \centering
         \includegraphics[width=\textwidth]{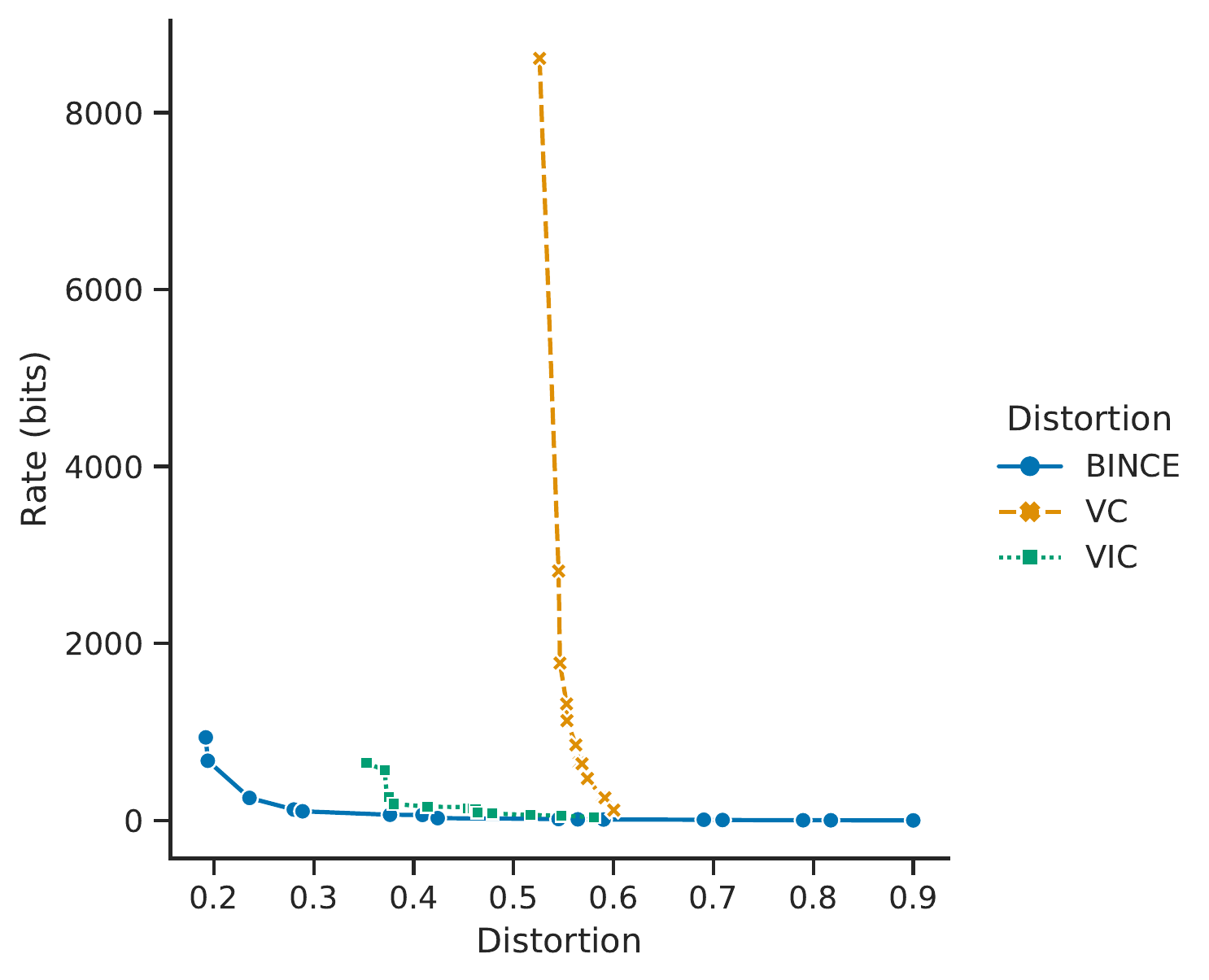}
         \caption{From features $\hrv Z$}
         \label{fig:STL10_dist_Z}
     \end{subfigure}
     \hfill{}
     \begin{subfigure}[h]{0.49\columnwidth}
         \centering
         \includegraphics[width=\textwidth]{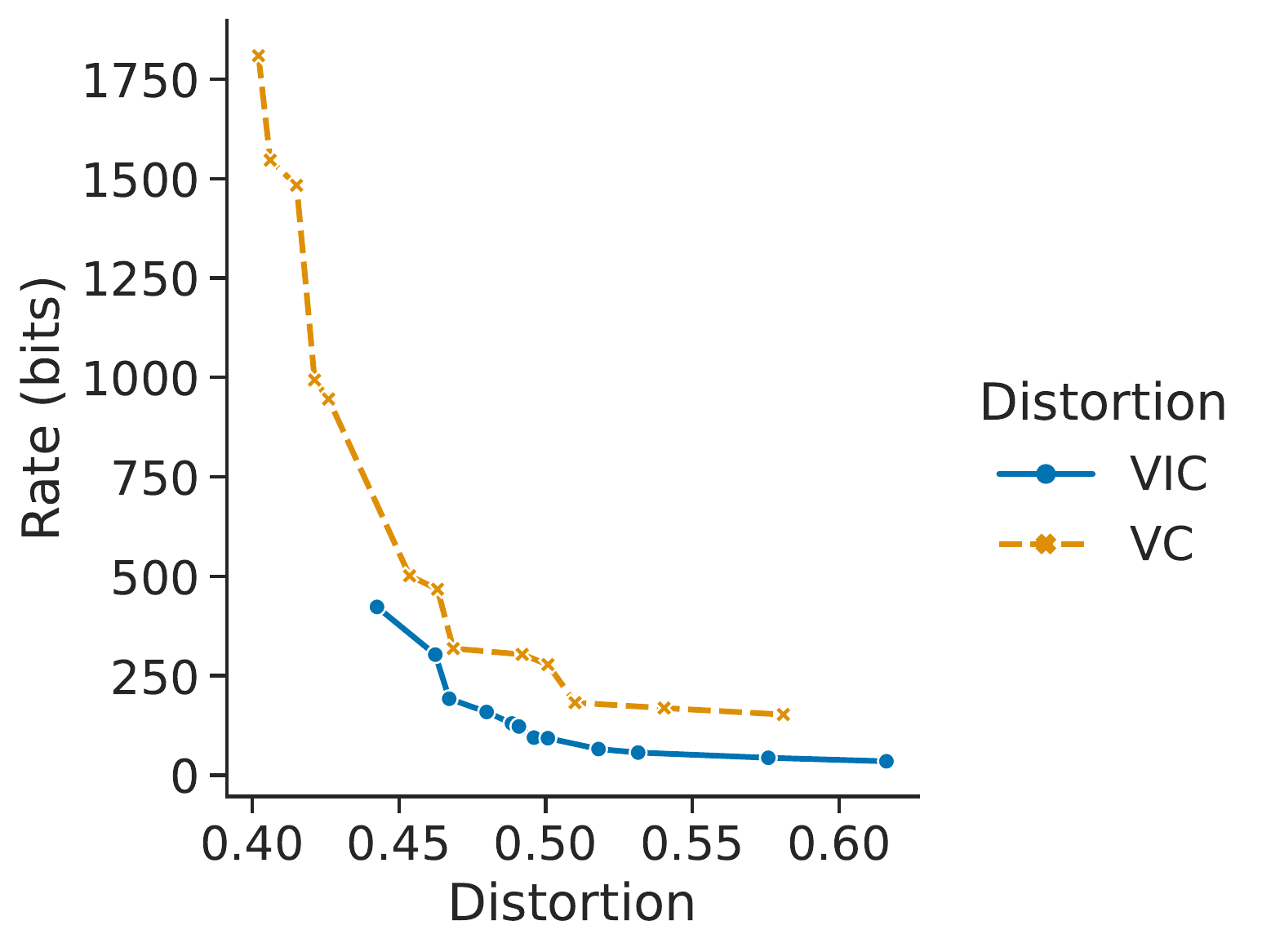}
         \caption{From reconstruction $\hrv X$}
         \label{fig:STL10_dist_X}
     \end{subfigure}
\caption{
BINCE achieves the best RI curves followed by VIC and then VC for STL10 data.
Rate-error curves when predicting downstream tasks from: (left) compressed representations $\hrv Z$, (right) reconstructions $\hrv X$.
}
\label{fig:STL10_dist}
\end{figure}

\ydnote{the fact that the font in \cref{fig:STL10_dist} is different is very ugly. Need to solve for camera ready.}
\paragraphQ{How does the choice of distortion measures or bounds thereof affect RI curves}
Supplementing the results in the main text, we show more extensive results comparing the effect of the distortion measures (invariant or not) or bounds thereof (BINCE and VIC are different bounds on $\Risk{M(X)}{Z}$) on RI curves.
When predicting from compressed representations $\hrv Z$  (\cref{fig:STL10_dist} left), BINCE achieves the best RI curves followed by VIC and VC.
When predicting from reconstructions $\hrv X$ (\cref{fig:STL10_dist} right), VIC still performs a little better than VC although the gap shrinks.
\Cref{table:distortion_variation} shows all quantitative results for best achieved downstream performance (as in \cref{table:distortion_variation} from the main text).

\begin{table}[ht]
\caption{
We compare classical compression formats (PNG, JPEG, webP) to neural (VC) and invariant (BINCE, VIC) ones. 
BINCE achieves the same error rate but compresses $ 121.1 \times$ better.
}
\center
\small
\begin{tabular}{llrrr}
\toprule
Distortion  & Predict from & Best error $[\%]$  & Rate [Mb/img] & Compression factor\\ 
\midrule 
PNG \cite{graphics_png_isoiec_2003}    &  reconstructions   &  19.2 & 14.20 & $ 1\times$ \\
JPEG  \cite{group_jpeg_itu-t_1992} &  reconstructions   &  19.9 &  4.60 & $ 3.0\times$ \\
WebP   \cite{webp_google_2018}&  reconstructions   &  20.3 &   1.12 & $ 12.7\times$ \\
VC     &  reconstructions   &  40.2 &   0.23 & $ 62.8\times$ \\
VC     &  features          &  52.6 &   1.08 & $ 13.2\times$ \\
\midrule 
VIC (ours)   &  reconstructions  &  44.3 &   0.05 & $ 268.6\times$ \\
VIC  (ours)  &  features 	     &  35.3 &   0.08 & $ 174.8\times$ \\
BINCE (ours) &  features         &  19.2 &   0.12 & $ 121.1\times$ \\	
\bottomrule
\label{table:distortion_variation_long}
\end{tabular}
\end{table}

Note that VIC and VC achieve much worst downstream performance than BINCE.
Based on preliminary results, we believe that this comes from the fact that, for consistency, in all experiments we used ResNet18 encoders.
Indeed, ResNet18 have an global averaging pooling layer that averages the ``latent image'' over spatial dimensions (width and height).
As a result, the representations $\hrv Z$ does not retain any spatial information, which is often useful for improving reconstructions.
Preliminary results showed that removing this pooling layer improves downstream predictions significantly.
Importantly, this impacts both VIC and VC so although the absolute performance improved by removing this layer the relative error did not seem to.

 For all our models the we estimated (using a naive sample estimate) the mutual information $\MI{Z}{Y}$ and found that $\Hat{\mathrm{I}}\br{Z,X} = \hat{\mathrm{H}}\br{Y}$.  This shows that all information about labels is retained, i.e., no images get compressed to the same Z but have different labels. The difference in test accuracy (which is also similar for training) must thus come from the fact that some information is easier to use/decode from \cite{dubois_learning_2020,xu_theory_2020}.
Indeed, our framework only concerns information retention rather than information usability. 
This is further supported by the fact that BINCE gets very good downstream accuracy, because it has been shown in theory \cite{saunshi_theoretical_2019,tosh_contrastive_2021,lee_predicting_2020} and in practice \cite{chen_simple_2020,oord_representation_2019} that contrastive representations are approximately linearly decodable.

\begin{figure}[!htb]
    \centering
    \begin{minipage}{.44\textwidth}
        \centering
        \includegraphics[width=\textwidth]{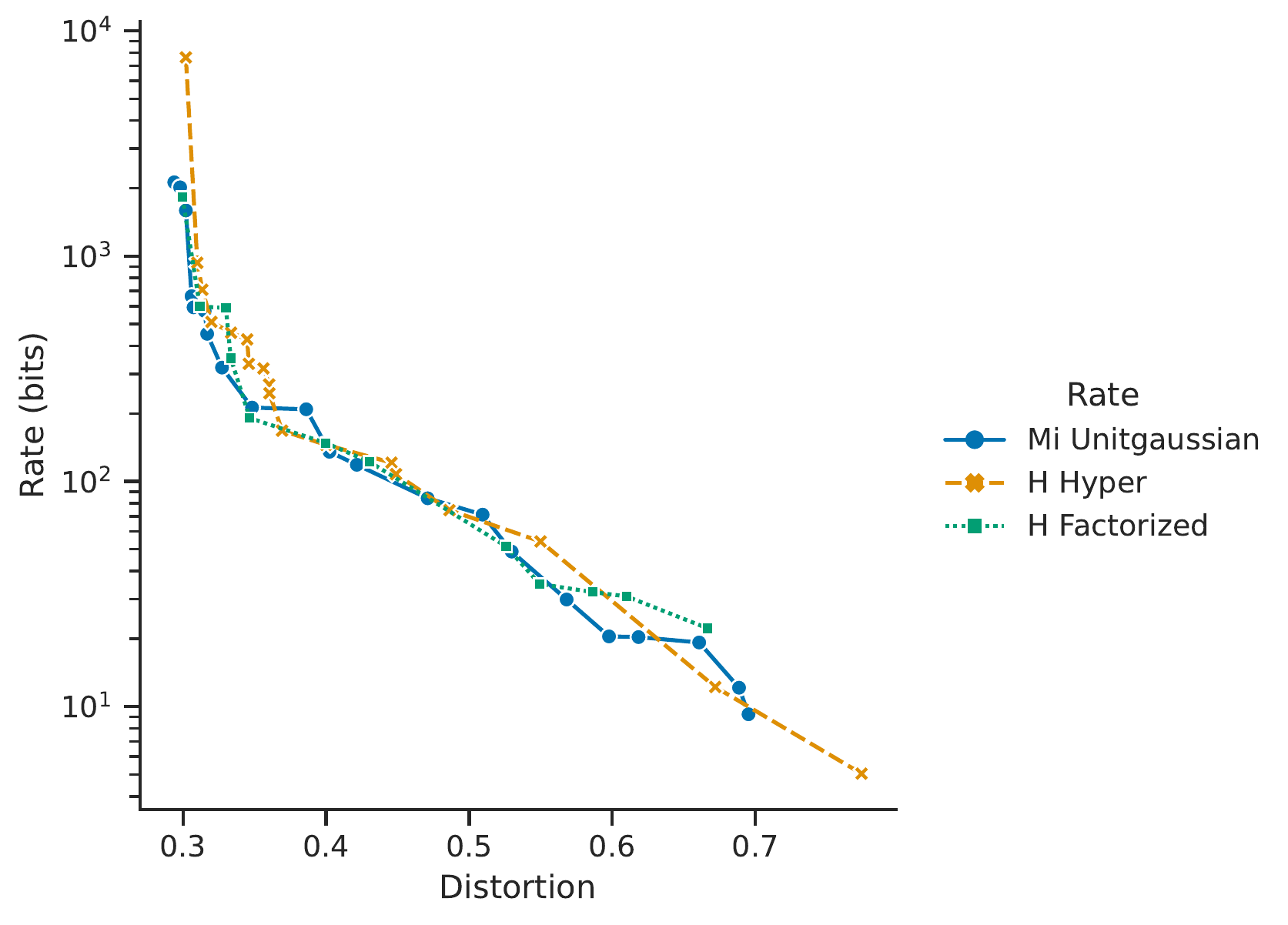}
        \caption{The choice of variational bounds on the rate term $\MI{Z}{X}$ has little effect on RI curves for STL10 data.
        ``MI unitgaussian'' is the upper bound on mutual information used in VIB and VAE;
        ``H factorized'' is \citepos{balle_variational_2018} upper bound on $\H{Z}$ with a factorized entropy model;
        ``H hyper'' is \citepos{balle_variational_2018} upper bound on $\H{Z}$ with a hyperprior entropy model.}
        \label{fig:rate_rd}
    \end{minipage}%
    \hspace{0.1\textwidth}
    \begin{minipage}{0.44\textwidth}
        \centering 
        \includegraphics[width=\textwidth]{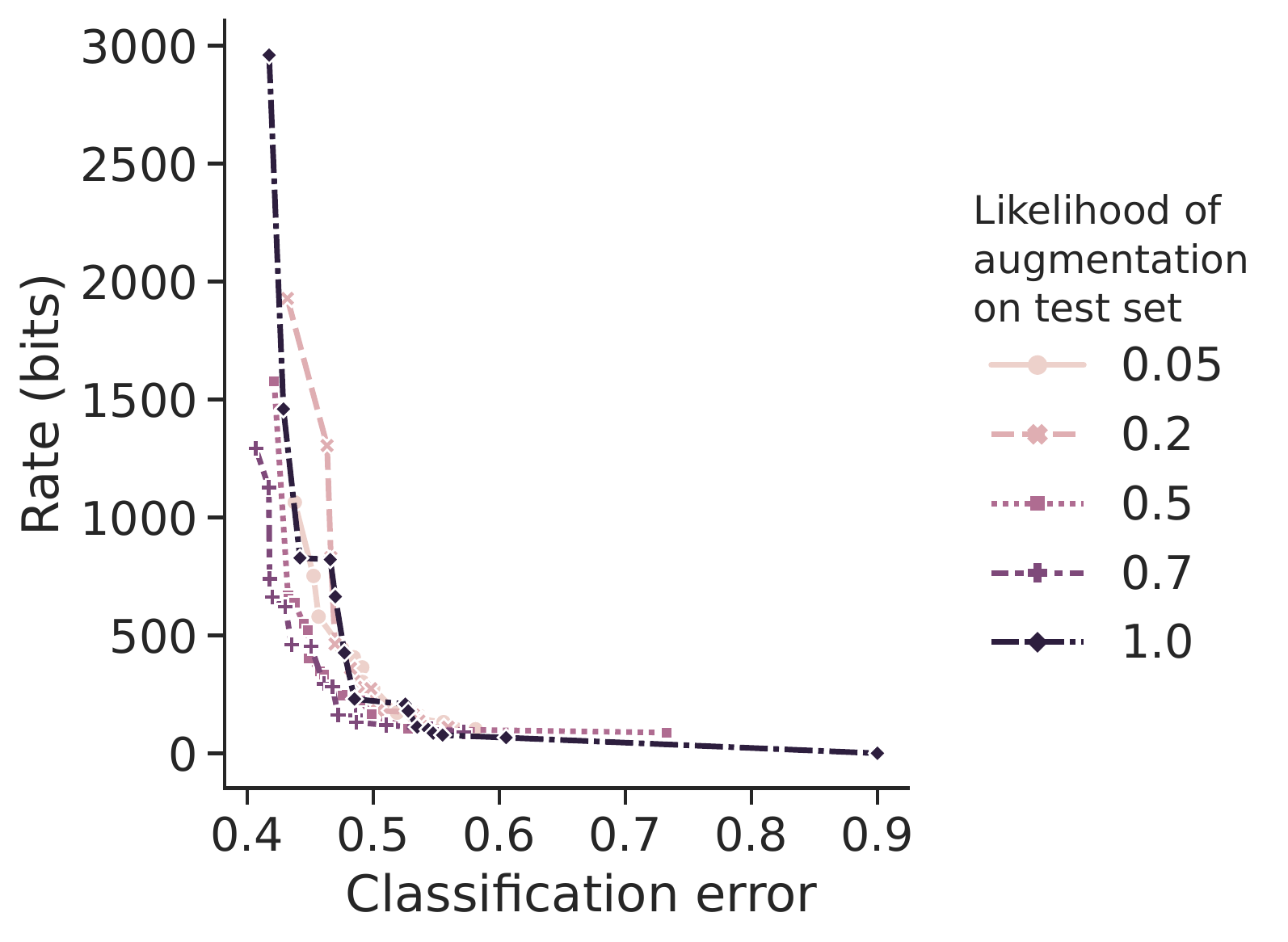}
        \caption{VIC is robust to distribution shifts in the augmentations as it is invariant to the augmentations.
        Specifically, test time shifts in augmentation probability seem to have little effect on the rate-distortion curve for the case of STL10 data.}
        \label{fig:action_dist_rd}
    \end{minipage}
\end{figure}

\begin{table}[h]
\caption{Hierarchical hyperprior works worst (higher rates) for low distortion, when compared to a factorized prior and a mutual information bottleneck on STL10 data.
}
\begin{center}
\small
\begin{tabular}{llrr}
\toprule
Bottleneck & Entropy model & Lossless Rate [bits/img] & Lossless loss\\ 
\midrule 
Entropy $\H{Z}$      &Factorized prior \cite{balle_variational_2018} 		               & 598.5 	& 0.3117 \\
Entropy $\H{Z}$       &Hierarchical prior \cite{balle_variational_2018}  & 934.7  & 0.3100 \\ 
Mutual Information $\MI{Z}{X}$      & Unit Gaussian 	                                                   & 592.5  & 0.3074 \\
\bottomrule
\end{tabular}
\end{center}
\label{table:rate_variation_aurd}
\end{table}

\paragraphQ{How does the choice of bounds on the rate term $\MI{Z}{X}$ impact RI curves}
For the main paper we always used the standard \cite{balle_end--end_2017} neural compressor's upper bound on $\MI{Z}{X}$, namely, the entropy bound $\MI{Z}{X} \leq \H{Z} \leq \E{p(Z,X)}{q_{\theta}(Z)}$.
To understand the effect of using other bounds on $\MI{Z}{X}$ and different entropy models $q_{\theta}(Z)$.
Specifically, in \cref{fig:rate_rd} we compare three different bounds on mutual information:
(MI unitgaussian) the mutual information bound from VAE and VIB $\MI{Z}{X} \leq \KL{p_{\varphi}(Z|X)}{q_{\theta}(Z)}$;
(H Factorized) the entropy bound $\H{Z} \leq \E{p(Z,X)}{q_{\theta}(Z)}$ where $q_{\theta}(Z)$ is \citepos{balle_variational_2018} factorized entropy model;
(H Hyper) the entropy bound where $q_{\theta}(Z)$ is \citepos{balle_variational_2018} hyperprior entropy model.
\ydnote{\karen{} we need to change the legend in \cref{fig:rate_rd} to be more understandable / presentable. I would use latex and the following ``$\MI{Z}{X}$'', ``$\H{Z}$ factorized'', ``$\H{Z}$ hyper'' }
In our experiments, however, we find that neither of these choices influence the RD curves at typical distortion levels as seen \cref{fig:rate_rd}. 
In our experiments we use ``H Hyper'', which does seem to enable very low rates (high distortions) but seems to perform worst at very high rates (see \cref{table:rate_variation_aurd})

\kunote{Todo: correctly implement and run this experiment again}

\paragraphQ{How important is the distribution over augmentations}
As discussed in the main text, RD curves of our VIC show negligible difference when the distribution of augmentation shifts from training to test time.
We provide this evidence in \cref{fig:action_dist_rd}. Here we trained a VIC compressor on data with various augmentations, if these were (jointly) applied or not would be decided by a fair coin flip. At test time, we changed the coin to be biased with $p=0.05,0.2,0.5,0.7,1.0$. 
 
\ydimp{markpage}

\subsection{Pretrained CLIP}
\label{appx:clip}

\begin{table}[h]
\caption{
Converting a pretrained SSL model into a zero-shot compressor achieves substantial bit-rate gains while allowing test accuracies similar to  predicting from raw images.
CLIP refers to the original CLIP with lossless compression of the representations.
CLIP+EB refers to our CLIP compressor.
CLIP+EB\textsuperscript{$-$} and CLIP+EB\textsuperscript{$+$} are our CLIP compressors trained respectively for a larger and smaller bit-rate.
We provide downstream evaluation using an MLP and a linear (SVM) predictor.
Baselines: JPEG and compression of features from a ImageNet pretrained classifier (Transfer + EB).
}
\scriptsize
\center
\begin{tabular}{lllrrrrrrrrr}
\toprule
 & & & ImageNet  & STL & PCam & Cars & CIFAR10 & CIFAR100 & Food      & Pets & Caltech  \\ 
\midrule 
\multirow{5}{*}{\rotatebox[origin=c]{90}{\centering ~Rate [Bits/img]  }} 
&& JPEG & 1.49e6  & 4.71e4 & 9.60e4 & 1.92e5 & 1.05e4 & 1.05e4 & 1.54e5     & 1.81e5 & 1.69e5  \\ 
 & & Transfer + EB & 3.95e3  & 3.33e3 &3.99e3  & 3.18e3 &  3.92e3  &  & 3.26e3     & 3.70e3 & 3.40e3  \\ 
 && CLIP & 1.52e4  & 1.52e4 & 1.52e4 & 1.52e4 & 1.52e4  & 1.52e4  & 1.52e4     & 1.52e4 & 1.52e4  \\ 
& & \textbf{CLIP+EB}\textsuperscript{$-$} & 2.47e3  & 2.46e3 & 2.61e3 & 2.59e3 & 2.53e3 & 2.54e3 & 2.39e3      & 2.33e3 & 2.46e3  \\ 
 && \textbf{CLIP+EB} & 1.35e3  & 1.34e3 & 1.49e3 & 1.47e3 & 1.41e3 & 1.42e3 & 1.27e3      & 1.21e3 & 1.34e3  \\ 
& & \textbf{CLIP+EB}\textsuperscript{$+$} & 9.63e2  & 9.52e2 & 1.49e3 & 1.52e2 & 1.02e2 & 1.09e3 & 8.89e2      & 8.35e2 & 9.53e2  \\ 
 \midrule 
\multirow{10}{*}{\rotatebox[origin=c]{90}{\centering ~Test Accuracies $[\%]$ }}
 &\multirow{2}{*}{\rotatebox[origin=c]{90}{\centering ~\cite{radford_learning_2021}  }}
 & JPEG &  76.6   & 99.0 & 82.6 & 49.1 & 96.7 & 86.3 & 81.8       & 90.4 & 94.5   \\ 
 && CLIP \cite{radford_learning_2021} & 76.1  & 98.3 & 83.9 & 81.8 & 95.1 & 80.5 & 88.8      & 90.0 & 93.0  \\ 
  \cmidrule{2-12}
 &\multirow{4}{*}{\rotatebox[origin=c]{90}{\centering ~MLP }}
 & Transfer + EB  & 72.7 & 96.1 & 79.4 & 42.0 &  87.0 & & 66.8    & 91.3 &  89.9  \\ 
 && CLIP  & 76.5  & 98.6 & 84.5 & 80.8 & 95.3 & 80.9 & 88.5      & 89.7 &  93.2  \\  
 && \textbf{CLIP+EB}\textsuperscript{$-$} & 76.6  & 98.7 & 82.7 & 80.4 & 95.3 & 80.9 & 88.5      & 89.6 & 93.5  \\  
 && \textbf{CLIP+EB }& 76.3  & 98.7 & 80.9 & 79.6 & 95.2 & 80.1 & 88.3    & 89.5 & 93.4  \\
 && \textbf{CLIP+EB}\textsuperscript{$+$} & 76.0  & 98.7 & 80.1 & 78.9 & 94.8 & 78.6 & 87.6      & 88.6 & 92.9  \\ 
 \cmidrule{2-12}
  &\multirow{4}{*}{\rotatebox[origin=c]{90}{\centering Linear  }}
 & CLIP  &   & 98.6 & 83.8 & 80.8 & 95.0 & 79.8 & 85.0      & 89.3 & 93.8  \\ 
  && \textbf{CLIP+EB}\textsuperscript{$-$} &     & 98.7 & 83.2 & 80.8 & 95.0 & 79.7 & 85.0      & 89.2 & 93.6  \\  
 && \textbf{CLIP+EB} &   & 98.7 & 81.1 & 79.9 & 94.8 & 79.0 & 83.6      & 88.3 & 93.7  \\
 && \textbf{CLIP+EB}\textsuperscript{$+$} &   & 98.6 & 80.5 & 78.9 & 94.4 & 80.5 & 82.5      & 87.8 & 93.5  \\
\bottomrule
\end{tabular}
\label{table:clip_all}
\end{table}

In \cref{table:clip_all} we provide all the quantitative results for our zero-shot CLIP experiments from which we derived the tables in the main text.

We note that zero-shot compression can still be analysed using our framework. 
Indeed CLIP was trained on $400M$ sampled from a r.v. $\rv X$ over images on the internet.
As these datasets are on internet, they are samples from the joint $(\rv X, \rv Y)$ for a specific task $\rv Y$.
One can see this as a multi-task setting (each dataset is a distinct task).

\paragraphQ{What is the effect of using a more powerful predictor from the representation}
In our framework we only discuss about information but never whether this information can easily be decoded by the predictors of interest. 
We investigated the effect of using more powerful predictors from our representation to understand how easy it is to decode the information in our representation.
In particular, we evaluated all our CLIP compressors (\ie at different $\beta$), by considering predictions from our compressed representation using a two layer MLP and using a linear classifier (SVM).
\cref{table:clip_all} shows that the advantage of using an MLP compared to a linear model is small, which suggests that our CLIP compressed representation store information in a way that is easily decodable.
This is typical from contrastive self-supervised models \cite{oord_representation_2019,chen_simple_2020}.

\paragraphQ{How does compressing SSL compare to compressing features form transfer learning}
In previous work, \citet{singh_end--end_2020} had considered the case of compressing features from single-task transfer learning instead of self-supervised method.
In \cref{table:clip_all} we compare compression of both type of features. Specifically ``Transfer + EB'' shows compression of a pretrained ResNet50 on ImageNet.
We see that It generally performs worst than our CLIP compressor both in terms of test accuracies and bit-rate.
One issue with this comparison is that the architectures of both models are not the same is likely that ``transfer+EB'' does not even perform better than CLIP on ImageNet.

\subsection{Galaxy Zoo}
\label{appx:galaxy}

\begin{table}[ht]
\small
\caption{Comparisons between pretrained CLIP BINCE, a BINCE trained end-to-end, and SOTA perceptual compressors on Galaxyzoo data. 
CLIP BINCE achieves the smallest bit-rate.
}
\begin{tabular}{llllllll}
\toprule
\textbf{Compressor}           & \textbf{rate} & \textbf{test loss} & \textbf{val. loss mean} & \textbf{median}  &  \textbf{max} & \textbf{min}                   &  \textbf{std }                                                 \\
   &  $[\text{Mb/img}]$    & $[\text{ }]$ & $[\text{ }]$ &  $[10^{-3}]$ &  $[ \text{ }]$&  $[10^{-7}]$                  &  $[10^{-2}]$                                                            \\
\midrule
PNG              & 53.73 & 0.007   & 0.008       & 0.86        & 0.07                     & 1.04                     & 1.62   \\      
 JPEG              & 1.68 & 0.012   & 0.013        & 1.25         & 0.11                     & 1.23                     & 2.61    \\ 
 WebP              & 0.48  & 0.010  & 0.011        & 1.20         & 0.10                   & 1.16                     & 2.29       \\
\midrule
BINCE (CLIP) & 0.33  & 0.011    & 0.011        & 1.13         & 0.10                     & 7.59                     & 2.29   \\
BINCE (end to end)        & 1.77 & 0.012   & 0.012        & 1.43         & 0.11                     & 1.31                     & 2.50 \\
\bottomrule
\end{tabular}
\label{table:galaxy}
\end{table}

Humanity observes earth and sky at high temporal and spatial resolution, this can easily fill entire data centers. What is more, multiple copies of these series often exist over the world. 
At recording time it is usually not clear what kind of queries need to be answered about the recordings in the future; What was the weather like 10 years ago? Did a glacier resolve here? ect.  
To investigate our method in such real world scenario we compressed the GalaxyZoo telescope dataset (GZ2) and its 37 classification tasks.
In \cref{table:galaxy}, we compare a classical lossless and lossy method, to our BINCE at same distortions.

\paragraphQ{How well does CLIP pretraining work compared to in domain training}
In the main paper we have seen that CLIP pretraining gives rise to a compressor that generalizes very well across different datasets.
We evaluated how well the CLIP compressor generalizes compare to training BINCE directly end-to-end on GZ2's training set.
\Cref{table:galaxy} shows that the CLIP compressor works much better ($4\times$ rate gains) than the end-to-end BINCE.
This suggests that pretraining can really be beneficial for training invariant compressors, and that our CLIP compressor can generalize very well across datasets.
\ydnote{We should compare BINCE end to end resnet50 with CLIP resnet 50 instead of the visual transformer CLIP. i.e. use the same encoder. }

\paragraphQ{How does our CLIP compressor generalize to very different images compared to SOTA compressors}
In the main text we have seen that our CLIP compressor generalizes very well across different datasets compared to high quality JPEG.
To better understand the limits of the generalization capacity of our CLIP compressor, we compared it to a SOTA classical compressor (WebP) on images that are completely different than the ones CLIP was trained on, namely Galaxy images (not typical images on internet).
We see in \cref{table:galaxy} that in this challenging setting our CLIP compressor only achieves relatively small gains ($30\%$).

\kunote{Make this great! in da rebuttal :P}

\end{document}